\documentclass{article} %

\PassOptionsToPackage{table,RGB,dvipsnames}{xcolor}

\usepackage{iclr2026_conference,times}

\usepackage{amsmath,amsfonts,bm}

\def\1{\bm{1}}

\def\vu{{\bm{u}}}

\def\vx{{\bm{x}}}

\DeclareMathAlphabet{\mathsfit}{\encodingdefault}{\sfdefault}{m}{sl}
\SetMathAlphabet{\mathsfit}{bold}{\encodingdefault}{\sfdefault}{bx}{n}

\newcommand{\E}{\mathbb{E}}

\newcommand{\KL}{D_{\mathrm{KL}}}
\newcommand{\Var}{\mathrm{Var}}

\DeclareMathOperator*{\argmax}{arg\,max}
\DeclareMathOperator*{\argmin}{arg\,min}

\usepackage[utf8]{inputenc}             %
\usepackage[T1]{fontenc}                %
\usepackage[bookmarks=true]{hyperref}   %
\usepackage{url}                        %
\usepackage{booktabs}                   %
\usepackage{colortbl}                   %
\usepackage{amsfonts}                   %
\usepackage{nicefrac}                   %
\usepackage{microtype}                  %
\usepackage{multicol}

\usepackage{xcolor-material}

\usepackage{graphicx}

\usepackage{dblfloatfix}

\usepackage{mathtools}
\usepackage{framed}
\usepackage{bbm}
\usepackage{xspace}

\usepackage[short]{optidef}       %

\usepackage{amssymb}
\usepackage{amsthm}
\usepackage{amsfonts}
\usepackage{amsmath}
\usepackage{cleveref}

\crefname{appendix}{App}{Apps}
\Crefname{appendix}{App}{Apps}

\usepackage{crossreftools}

\usepackage{bm}
\usepackage{placeins}

\usepackage{inlinegraphicx}

\usepackage{multirow}
\selectcolormodel{natural}
\usepackage{ninecolors}
\selectcolormodel{rgb}
\usepackage{tabularray}
\UseTblrLibrary{booktabs}

\usepackage{etoolbox}

\usepackage{enumitem}

\usepackage{caption}
\usepackage{subcaption}

\usepackage{braket}                 %
\usepackage{xparse}                 %

\usepackage[scr=boondox, bb=dsserif]{mathalpha}  %
\usepackage{bm}

\usepackage{siunitx}
\DeclareSIUnit\ft{ft}               %

\usepackage{algorithm}
\usepackage{algpseudocode}
\usepackage{algorithmicx}

\usepackage{nicematrix}             %

\usepackage{thmtools}
\declaretheoremstyle[
bodyfont=\normalfont
]{normalstyle}

\usepackage{thm-restate}
\declaretheorem[name=Theorem]{thm}
\declaretheorem[name=Lemma]{lemma}

\declaretheorem[name=Proposition]{prop}
\declaretheorem[name=Corollary]{coroll}
\declaretheorem[name=Remark,style=normalstyle]{rmk}

\declaretheorem[name=Assumption]{assmp}

\usepackage[framemethod=TikZ]{mdframed}
\mdfdefinestyle{ThmFrame}{%
    linecolor=white,
    outerlinewidth=0pt,
    roundcorner=2pt,
    leftmargin=-3pt,
    rightmargin=-2pt,
    innertopmargin=5pt, %
    innerbottommargin=5pt, %
    innerrightmargin=5pt,
    innerleftmargin=5pt,
    backgroundcolor=MaterialBlueGrey100!40}

\mdfdefinestyle{ProofFrame}{%
    linecolor=white,
    outerlinewidth=0pt,
    roundcorner=2pt,
    leftmargin=-3pt,
    rightmargin=-2pt,
    innertopmargin=5pt, %
    innerbottommargin=5pt, %
    innerrightmargin=5pt,
    innerleftmargin=5pt,
    backgroundcolor=MaterialBlue50!40}

\usepackage[most]{tcolorbox}

\tcbset{
  ThmFrame/.style={
    breakable,
    colframe=white,              %
    colback=MaterialBlueGrey100!40, %
    arc=2pt,                     %
    boxrule=0pt,                 %
    left=-3pt,                   %
    right=-2pt,                  %
    top=5pt,                     %
    bottom=5pt,                  %
    boxsep=5pt,                  %
    enhanced,                    %
    sharp corners=all
  }
}

\tcbset{
  ProofFrame/.style={
    breakable,
    colframe=white,
    colback=MaterialBlue50!40,
    arc=2pt,
    boxrule=0pt,
    left=-3pt,
    right=-2pt,
    top=5pt,
    bottom=5pt,
    boxsep=5pt,
    enhanced,
    sharp corners=all
  }
}

\usepackage{wrapfig}

\definecolor{greentc}{HTML}{52B055}
\definecolor{bluetc}{HTML}{168EF1}
\definecolor{tabblue}{HTML}{1f77b4}
\definecolor{tabred}{HTML}{d62728}

\definecolor{ggpurple}{HTML}{988ED5}
\definecolor{ggpurpleDark}{HTML}{8177bf}
\definecolor{ggblue}{HTML}{348ABD}
\definecolor{gggrey}{HTML}{777777}

\definecolor{ColorBase}{HTML}{595959}
\definecolor{ColorExplore}{HTML}{565c8f}
\definecolor{ColorRehearse}{HTML}{8f5656}

\definecolor{colorpmix}{HTML}{57a490}

\makeatletter
\newcommand*{\itemequation}[3][]{%
  \item
  \begingroup
    \refstepcounter{equation}%
    \ifx\\#1\\%
    \else  
      \label{#1}%
    \fi
    \sbox0{#2}%
    \sbox2{$\displaystyle#3\m@th$}%
    \sbox4{\@eqnnum}%
    \dimen@=.5\dimexpr\linewidth-\wd2\relax
    \ifcase
        \ifdim\wd0>\dimen@
          \z@
        \else
          \ifdim\wd4>\dimen@
            \z@
          \else 
            \@ne
          \fi 
        \fi
      \@latex@warning{Equation is too large}%
    \fi
    \noindent   
    \rlap{\copy0}%
    \rlap{\hbox to \linewidth{\hfill\copy2\hfill}}%
    \hbox to \linewidth{\hfill\copy4}%
    \hspace{0pt}%
  \endgroup
  \ignorespaces 
}
\makeatother

\newif\ifarxiv

\DeclareFontFamily{U}{BOONDOX-calo}{\skewchar\font=45 }
\DeclareFontShape{U}{BOONDOX-calo}{m}{n}{
  <-> s*[1.05] BOONDOX-r-calo}{}
\DeclareFontShape{U}{BOONDOX-calo}{b}{n}{
  <-> s*[1.05] BOONDOX-b-calo}{}
\DeclareMathAlphabet{\mathcalboondox}{U}{BOONDOX-calo}{m}{n}
\SetMathAlphabet{\mathcalboondox}{bold}{U}{BOONDOX-calo}{b}{n}
\DeclareMathAlphabet{\mathbcalboondox}{U}{BOONDOX-calo}{b}{n}

\DeclareMathOperator{\Regret}{Regret}

\DeclarePairedDelimiterX{\norm}[1]{\lVert}{\rVert}{#1}
\DeclarePairedDelimiterX{\abs}[1]{\lvert}{\rvert}{#1}

\newcommand{\AlgOursFull}{Feasibility-Guided Exploration}
\newcommand{\AlgOursAbrv}{FGE}

\newcommand{\AlgOurs}{\textcolor{MaterialBlue900}{\small\textsf{\AlgOursAbrv{}}}}
\newcommand{\AlgDR}{\textcolor{MaterialPurple900}{\small\textsf{DR}}}
\newcommand{\AlgRARL}{\textcolor{MaterialPurple900}{\small\textsf{RARL}}}
\newcommand{\AlgPLR}{\textcolor{MaterialPurple900}{\small\textsf{PLR}}}
\newcommand{\AlgVDS}{\textcolor{MaterialPurple900}{\small\textsf{VDS}}}

\newcommand{\AlgACCEL}{\textcolor{MaterialPurple900}{\small\textsf{ACCEL}}}
\newcommand{\AlgPAIRED}{\textcolor{MaterialPurple900}{\small\textsf{PAIRED}}}
\newcommand{\AlgFARR}{\textcolor{MaterialPurple900}{\small\textsf{FARR}}}
\newcommand{\AlgSFL}{\textcolor{MaterialPurple900}{\small\textsf{SFL}}}

\newcommand{\AlgPPOL}{\textcolor{black}{\small\textsf{PPO-L}}}
\newcommand{\AlgSaute}{\textcolor{black}{\small\textsf{SauteRL}}}

\newcommand{\RND}{\textcolor{black}{\small\textsf{RND}}}
\newcommand{\NSF}{\textcolor{black}{\small\textsf{NSF}}}
\newcommand{\PF}{\textcolor{black}{\small\textsf{PF}}}

\newcommand{\toylevels}{\textsf{\small{ToyLevels}}}
\newcommand{\hopper}{\textsf{\small{Hopper}}}
\newcommand{\dubins}{\textsf{\small{Dubins}}}
\newcommand{\cheetah}{\textsf{\small{Cheetah}}}
\newcommand{\fixedwing}{\textsf{\small{FixedWing}}}

\newcommand{\lander}{\textsf{\small{Lander2D}}}
\newcommand{\landerhard}{\textsf{\small{Lander9D}}}

\newcommand{\feas}{\mathfrak{f}}
\newcommand{\infeas}{\mathfrak{n}}

\newcommand{\CISet}{\Theta^*}

\newcommand{\MeasuredCISet}{\mathcal{D}_{\feas{}}}

\newcommand{\pmix}{p_{\text{mix}}}

\newcommand*{\AvoidSet}{{\mathcal{F}_\theta}}

\newcommand{\ind}{\mathbb{1}}

\newcommand{\Eb}{\mathbb{E}}

\newcommand*{\XSet}{\mathcal{X}}
\newcommand*{\USet}{\mathcal{U}}

\DeclareDocumentCommand\vectorbold{ s m }{\IfBooleanTF{#1}{\boldsymbol{#2}}{\mathbf{#2}}} %

\def\vu{{\vb{u}}}
\def\vx{{\vb{x}}}

\def\diffd{\mathrm{d}}

\DeclareDocumentCommand\differential{ o g d() }{ %
	\IfNoValueTF{#2}{
		\IfNoValueTF{#3}
			{\diffd\IfNoValueTF{#1}{}{^{#1}}}
			{\mathinner{\diffd\IfNoValueTF{#1}{}{^{#1}}\argopen(#3\argclose)}}
		}
		{\mathinner{\diffd\IfNoValueTF{#1}{}{^{#1}}#2} \IfNoValueTF{#3}{}{(#3)}}
	}

\DeclareDocumentCommand\derivative{ s o m g d() }
{ %
	\IfBooleanTF{#1}
	{\let\fractype\flatfrac}
	{\let\fractype\frac}
	\IfNoValueTF{#4}
	{
		\IfNoValueTF{#5}
		{\fractype{\diffd \IfNoValueTF{#2}{}{^{#2}}}{\diffd #3\IfNoValueTF{#2}{}{^{#2}}}}
		{\fractype{\diffd \IfNoValueTF{#2}{}{^{#2}}}{\diffd #3\IfNoValueTF{#2}{}{^{#2}}} \argopen(#5\argclose)}
	}
	{\fractype{\diffd \IfNoValueTF{#2}{}{^{#2}} #3}{\diffd #4\IfNoValueTF{#2}{}{^{#2}}}}
}

\usepackage{hyperref}
\usepackage{url}

\def\vx{{\bm{s}}}
\def\vu{{\bm{a}}}
\def\XSet{{\mathcal{S}}}
\def\USet{{\mathcal{A}}}

\crefname{figure}{Fig.}{Figs.}
\Crefname{figure}{Fig.}{Figs.}

\title{Solving Parameter-Robust Avoid Problems with Unknown Feasibility using Reinforcement Learning}

\author{Oswin So\thanks{These authors contributed equally to this work.}~~$^{1}$ \hskip1em Eric Yang Yu$^{\ast1}$ \hskip1em Songyuan Zhang$^{1}$ \\ \textbf{Matthew Cleaveland}$^2$ \hskip1em \textbf{Mitchell Black}$^2$ \hskip1em \textbf{Chuchu Fan}$^1$ \\
$^1$Department of Aeronautics and Astronautics, MIT \quad $^2$MIT Lincoln Laboratory \\
\texttt{\{oswinso,eyyu\}@mit.edu}
\vspace{-1.5em}
}

\iclrfinalcopy %
\begin{document}

\maketitle

\begin{abstract}
    \vspace{-0.5em}
    Recent advances in deep reinforcement learning (RL) have achieved strong results on high-dimensional control tasks,
    but applying RL to optimal safe controller synthesis raises a fundamental mismatch: optimal safe controller synthesis seeks to maximize the set of states from which a system remains safe indefinitely, while RL optimizes expected returns over a user-specified distribution.
    This mismatch can yield policies that perform poorly on low-probability states still within the safe set.
    A natural alternative is to frame the problem as a robust optimization over a set of initial conditions that specify the initial state, dynamics and safe set, but whether this problem has a solution depends on the feasibility of the specified set, which is unknown a priori.
    We propose \textbf{Feasibility-Guided Exploration (FGE)}, a method that simultaneously identifies a subset of feasible initial conditions under which a safe policy exists, and learns a policy to solve the
    optimal control
    problem over this set of initial conditions.
    Empirical results demonstrate that FGE learns policies with over $50\%$ more coverage than the best existing method for challenging initial conditions across tasks in the MuJoCo simulator.
    The project page can be found at \href{https://oswinso.xyz/fge?utm_source=fge_pdf}{https://oswinso.xyz/fge}.
    \footnote{DISTRIBUTION STATEMENT A. Approved for public release. Distribution is unlimited.}
\end{abstract}
\vspace{-1.5em}

\section{Introduction}
Hamilton-Jacobi (HJ)-based optimal control analysis
is a powerful tool for obtaining control policies and reasoning about the largest set of initial conditions from which a dynamical system can satisfy safety constraints for all time \citep{lygeros1998controller}.
However, classical methods for solving HJ-based optimal control analysis use grid-based solvers that require state discretization and hence suffer from the curse of dimensionality, rendering them intractable for systems with high-dimensional nonlinear dynamics \citep{mitchell2005time}.
Recent advances in deep reinforcement learning (RL) have demonstrated remarkable success in approximating 
optimal control solutions
in high dimensions \citep{fisac2019bridging,hsu2021safety,bansal2021deepreach,ganai2023iterative,hsu2023isaacs,nguyen2024gameplay,so2024solving,ganai2024hamilton}.
In these approaches, one typically poses an optimization problem whose exact solution is the
optimal control
solution for a single initial condition, then uses RL to learn a policy that solves this problem over some user-defined distribution of initial conditions.
However, this sampling-based formulation introduces a fundamental misalignment: the RL objective minimizes expected cost under a potentially narrow initial distribution rather than worst-case safety over the entire set of interest.
As a result, learned policies may perform well on frequently sampled states according to the user-specified distribution, but fail catastrophically in low-density regions that are still within the operating region and could be made safe.

\begin{figure}[t]
    \centering
    \vspace{-0.5em}
    \includegraphics[width=0.8\linewidth]{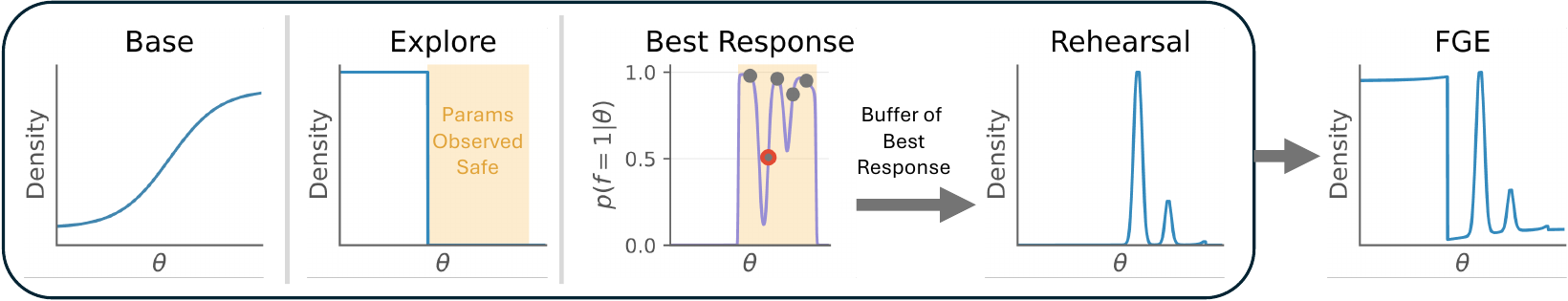}
    \caption{
    \textbf{FGE modifies the parameter sampling distribution. }
    FGE modifies the \textbf{base} distribution over the parameters
    by introducing an \textbf{explore} distribution to improve policy performance on parameters that have not been observed to be safe so far,
    and a \textbf{rehearsal} distribution obtained via sampling-based approximate \textbf{best response} to train on poorly performing parameters that were previously solved.
    We combine all three distributions to obtain the final distribution for \textbf{FGE} that balances the objectives of maximizing safety rate gain and minimizing safety rate loss.
    }
    \label{fig:fge_teaser}
    \vspace{-1em}
\end{figure}

A natural remedy is to recast
the HJ-based optimal control problem
as a robust optimization problem:
Given a set of initial conditions of interest in the form of a set of initial states (and more generally, a set of \textit{parameters} that specify the initial state, dynamics and safety specification), instead of arbitrarily introducing a distribution over this set and taking an expectation, one minimizes the cost under the worst-case initial condition within this set \citep{pinto2017robust}.
This perspective aligns directly with the original goal of worst-case safety.
However, it hinges on being able to specify a candidate set of initial conditions that are feasible (i.e., can be made safe), which in itself is a nontrivial task, as finding the set of feasible initial conditions is precisely the goal of
HJ-based optimal control analysis
and is computationally infeasible to solve exactly \citep{mitchell2007toolbox}.
An infeasible robust problem renders all policies equally optimal (as all yield the same worst-case objective), which does not solve the original safety problem.

In this paper, we address this gap by introducing the \textit{parameter-robust avoid problem with unknown feasibility}.
Our key idea is to simultaneously (i) identify a large subset of parameters where safety is achievable, and (ii) learn a policy satisfying the safety constraints on this identified subset (\Cref{fig:fge_teaser}).

Our main contributions are as follows:
\begin{itemize}[leftmargin=1em,itemsep=0pt]
    \item We formalize the novel robust safe optimal control objective that jointly selects a maximal feasible parameter region and a safe policy.
    \item We propose a practical framework that interleaves (a) feasibility-guided set expansion via constraint-driven exploration and (b) robust policy optimization over the current feasible set using techniques from saddle-point finding and online learning.
    \item On a suite of high-dimensional avoid problems in the MuJoCo simulator, our approach learns policies that yield significantly larger safe sets than prior methods.
\end{itemize}

\section{Related Work} \label{sec:related_work}

\textbf{Saddle-point Problems } 
Solving a robust safe optimal control problem requires the solution of a minimax optimization problem, which results in a saddle-point problem if the equivalent game has a Nash equilibrium \citep{orabona2019modern}.
Recent works have shown connections between saddle-point problems and online learning when players play with sublinear regret \citep{orabona2019modern}.
These methods have been used in RL to solve for game theoretic equilibria in large-scale games such as poker \citep{brown2019deep} and Diplomacy \citep{gray2020human}.
The parameter-robust avoid problem with unknown feasibility differs from existing works in saddle-point finding in that the domain for one of the players is an optimization variable and is constrained to be a subset of the feasible space that is not known a priori and must be solved for as well.

\textbf{Robust Reinforcement Learning } 
A significant line of research has sought to improve the robustness of policies learned via reinforcement learning.
Domain randomization \citep{tobin2017domain}, where policies are trained on randomized environments, is a popular approach due to its simplicity and effectiveness.
Risk-aware RL methods that try to optimize over tail distributions of uncertainties \citep{yang2021wcsac,stachowicz2024racer} are another approach to improving robustness.
Most directly related is RARL \citep{pinto2017robust}, which solves a robust RL problem by training an adversarial disturbance policy. This implicitly assumes feasibility under the worst-case disturbance. Our approach does not assume feasibility under the worst-case. Instead, we learn a policy that is robust to the largest possible subset of the parameter space.

\textbf{Curriculum Learning and Unsupervised Environment Design } 
Our method of adjusting the initial state distribution to maximize the safe set is reminiscent of curriculum learning in RL,
which also adjusts the initial state distribution, albeit to improve learning efficiency \citep{narvekar2020curriculum}.
Methods such as VDS \citep{zhang2020automatic} and PLR \citep{jiang2021prioritized} apply different heuristics to prioritize initial states that can provide more informative learning signals.
Closely related is unsupervised environment design (UED), which tackles producing a distribution over environments that "supports the continued learning of a particular policy" \citep{dennis2020emergent}.
Many UED methods choose environments that maximize regret, differing in how regret is estimated and how the selection occurs \citep{dennis2020emergent,jiang2021replay,lanier_feasible_2022,parker2022evolving,rutherford_no_nodate}.
While our method also focuses on the initial state distribution, we specifically tackle parameter-robust avoid problems rather than general RL tasks, enabling us to better exploit structure, such as with a feasibility classifier.

\textbf{Safe Reinforcement Learning } 
There is a large body of work on safe RL that adds safety constraints to the original unconstrained MDP objective, whether in the form of constrained MDPs \citep{altman1999constrained} or almost-sure \citep{sootla2022saute} or state-wise \citep{wachi2018safe} safety.
This results in a constrained optimization problem where the goal is to optimize the original task objective while satisfying a safety constraint that often conflicts with the task objective.
The main challenge in safe RL often lies in how to generalize RL algorithms that perform unconstrained optimization to the constrained setting, and is usually solved with techniques from constrained optimization such as primal-dual optimization with Lagrange multipliers \citep{ray2019benchmarking,stooke2020responsive,li2021augmented}, interior-point methods \citep{liu2020ipo}, projection-based methods \citep{yang2020projection,xu2021crpo}, epigraph reformulations \citep{so2023solving,zhang2025solving}, or penalty method \citep{sootla2022saute,tasse2023rosarl}.
Our work differs in that we only have a safety objective and no task objective and focus on maximizing the set of parameters that are safe.
Consequently, the usual concerns in safe RL of balancing task performance and safety do not arise in our setting.

\section{Problem Formulation}
We consider deterministic discrete-time dynamics with parameters $\theta \in \Theta$
\begin{equation} \label{eq:dynamics}
    \vx_{k+1} = f_\theta(\vx_k, \vu), \quad \vx_0 = s_0(\theta).
\end{equation}
where
$\vx_k \in \XSet \subseteq \mathbb{R}^{n_s}$ is the state at time step $k \in \mathbb{N}_{\geq0}$, $\vu \in \USet \subseteq \mathbb{R}^{n_a}$ is the control,
$f_\theta : \XSet \times \USet \to \XSet$ is the dynamics function under $\theta$, $s_0 : \Theta \to \XSet$ maps from the parameter space to the initial state,
and $\XSet, \USet$ denote state and action spaces respectively.

We first introduce the \textit{parameter-robust} avoid problem, which we illustrate with an example.
Consider the case of autonomous driving, where we wish to find a safe policy.
Here, safety means avoiding entering unsafe states such as collisions or losing grip when the road is wet or icy.
However, instead of only considering a \textit{single} weather condition, we wish for the policy to be safe for
\textit{all} weather conditions, including the most adverse ones such as heavy rain or snowfall.
Mathematically, we want to find a policy $\pi_\theta \in \mathcal{M} \coloneqq \Set{ \pi_\theta | \XSet \times \Theta \to \USet }$ such that for all weather conditions (i.e., parameters) $\theta \in \Theta$, including the most unsafe ones, the system starting from state $s_0(\theta)$ remains outside a set of unsafe (failure) states $\AvoidSet \subseteq \XSet$ for all time \footnote{Equivalently, this is a contextual MDP \citep{hallak2015contextual} with deterministic dynamics and an almost-sure safety RL objective \citep{sootla2022saute} where the unsafe states should be avoided with probability $1$.}.
Compared to the standard HJ avoid problem \citep{mitchell2005time}, this allows the dynamics $f_\theta$ and the unsafe set $\AvoidSet$ to vary with $\theta$.

Let $h_\theta : \XSet \to \mathbb{R}$ be a safety function and the unsafe set $\AvoidSet \coloneqq \set{\vx | h_\theta (\vx) > 0}$ be its strict zero-superlevel set, i.e., the set of all unsafe states $\vx$ under $h_\theta$.
Our problem can be transformed into the standard avoid problem using an \textit{augmented} state space $\tilde{\XSet} \coloneqq \XSet \times \Theta$ where $\theta$ of the augmented dynamics is fixed.
Then, our problem is equivalent to the following
HJ optimal safe control
problem:
\begin{equation} \label{eq:avoid_problem_max}
    V_{\text{reach}}(\vx; \theta) \coloneqq \min_{\pi_\theta \in \mathcal{M}} \max_{k \geq 0} h_\theta(\vx_k), \quad \text{s.t.   }\; \vx_{k+1} = f_\theta(\vx_k, \pi_\theta(\vx_k)),
\end{equation}
where the zero-sublevel set of the safety value function \footnote{Note that this is not the expected cumulative sum of rewards and hence is \textbf{not} the traditional RL value function. However, it does satisfy similar properties such as the \textit{safety Bellman equation} \citep{fisac2019bridging}.} $V_{\text{reach}} : \tilde{S} \to \mathbb{R}$ contains all state-parameter pairs
that are feasible, i.e., there exists a policy that renders the system safe for all time \citep{mitchell2005time}.
Let $\tilde{\mathcal{S}}_0 \coloneqq \Set{ (s_0(\theta), \theta)\, | \theta \in \Theta }$ denote the set of all initial state-parameter pairs.
If
all such initial state-parameter pairs are feasible according to the value function $V_{\text{reach}}$, i.e.,
$\tilde{\XSet}_0 \subseteq \tilde{\XSet}_{\text{safe}}$ for the zero-sublevel set
$\tilde{\XSet}_{\text{safe}} \coloneqq \Set{ (\vx_0, \theta) | V_{\text{reach}}(\vx_0; \theta) \leq 0 }$, then the original \emph{parameter-robust} avoid problem is feasible and the minimizer $\pi_\theta^\ast$ of \eqref{eq:avoid_problem_max} also solves the original problem (\Cref{app:reachability_rl_equivalence}).
Note that using state augmentation allows us to treat both initial states and parameters as initial \textit{augmented} states without loss of generality.

Although some existing works~\citep{fisac2019bridging} directly solve \eqref{eq:avoid_problem_max}, we use a different approach that enables the use of existing RL algorithms and avoids the need to come up with new optimizers for the \textit{max-over-time} objective \citep{veviurko2024max}.
Let $T_{\theta,\vx_0} \coloneqq \min \Set{ k \geq 0 | h_\theta(\vx_k) > 0}$ be the first hitting time of $\AvoidSet$, and define the value function $V : \tilde{\XSet} \to \Set{0, 1}$ as
\begin{equation} \label{eq:avoid_problem_sum}
    V(\vx_0; \theta) \coloneqq \max_{\pi_\theta \in \mathcal{M}} J(\pi_\theta, \theta),
    \;\; J(\pi_\theta, \theta) \coloneqq
    \sum_{k=0}^{T_{\theta,\vx_0}} -\ind\{h_\theta(\vx_k) > 0 \},
    \;\; \vx_{k+1} = f_\theta(\vx_k, \pi_\theta(\vx_k)).
\end{equation}
Then, the zero-sublevel sets of $V_{\text{reach}}$ is equivalent to the zero level set of $V$ (see \Cref{app:proof:reach_avoid_equivalence}), and solving one solves for the other.
Now, the \emph{sum-over-time} objective in \eqref{eq:avoid_problem_sum} matches the objective used in standard Markov Decision Processes (MDPs) and can be solved using off-the-shelf methods.

Standard deep RL assumes the initial state is drawn from a distribution $\rho$, but we need to solve $V$ for \textit{all} initial states $(\vx_0, \theta) \in \tilde{\XSet}_0$.
A naive choice such as a uniform $\rho$ over $S_0$ can be problematic.
For example, in autonomous driving, staying safe on icy roads at high speeds is harder than under normal conditions; if weather is sampled uniformly, an RL algorithm will learn more slowly and thus perform worse in snowstorms than on sunny days. We illustrate this on a toy MDP in \Cref{app:chain_example}.

One idea is to instead solve a \textit{robust} problem over the state space $\mathcal{X}$ in a similar spirit to a \textit{static} version of robust reinforcement learning \citep{morimoto2005robust,pinto2017robust}, i.e.

\vspace{-1em}
\noindent
\begin{minipage}{0.45\linewidth}
\begin{equation} \label{eq:robust_rl}
  \hspace{-1em}
  \max_{\pi_{(\cdot)} \in \mathcal{M} \min_{\theta \in \Theta}} \sum_{k=0}^{T_{\theta,\vx_0}} -\ind\{h_\theta(\vx_k) > 0\},
\end{equation}
\vspace{-1em}
\end{minipage}
\begin{wrapfigure}[14]{r}{0.5\linewidth}
    \centering
    \raisebox{0pt}[\dimexpr\height-4.0\baselineskip\relax]{
    \includegraphics[width=\linewidth]{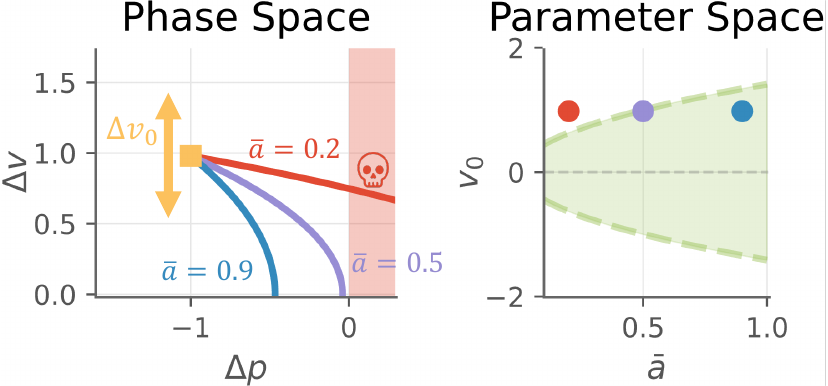}
    }
    \vspace{-1em}
    \caption{\textbf{Adaptive Cruise Control Example.}
    We illustrate an example using an autonomous driving scenario.
    The parameters $\theta = [\bar{a}, v_0]$ model varying max acceleration $\bar{a}$ (e.g., from weather) and initial relative velocity $\Delta v_0$. We want to avoid crashing into the car in front ($\Delta p \leq 0$). (\textbf{Left}) We plot trajectories for different $\bar{a}$ and (\textbf{Right}) visualize the feasible parameter set $\CISet$ in green.
    }
    \label{fig:acc}
\end{wrapfigure}

where an \textit{adversary} learns the worst-case parameter for the current policy during training, and the policy $\pi$ always plays against its most challenging parameter.
However, if some $\theta$ makes the objective nonzero for every $\pi_\theta$,
then every $\pi_\theta$ attains the same suboptimal value of $1$.
In a driving task, for example, the adversary could choose an extreme snowstorm and a high initial velocity that guarantees loss of traction.
Since no policy can avoid failure in this setting, the optimal response is effectively to do nothing.
Yet, training only on such impossible scenarios prevents the policy from learning to handle easier, feasible parameters (e.g., good weather).

This issue can be solved by restricting the maximization to a set of \emph{feasible} parameters $\Theta^*\subseteq \Theta$,
but this set is not known a priori.
We thus propose the parameter-robust avoid problem with \textit{unknown feasibility} as the following constrained optimization problem
\begin{equation} \label{eq:max_ci_set}
    \hspace{-1em}
    \max_{\Theta' \subseteq \Theta} \; \abs{ \Theta' }, \;\;
    \text{s.t.} \; \max_{\pi_{(\cdot)} \in \mathcal{M}} \min_{\theta \in \Theta'} \sum_{k=0}^{T_{\theta,\vx_0}} \ind\{h_\theta(\vx_k) > 0\} = 0,
    \;\; \vx_0=s_0(\theta),\; \vx_{k+1} = f_\theta(\vx_k, \pi_\theta(\vx_k)).
\end{equation}
Unlike the robust setting,
we relax the (potentially infeasible) requirement that the \textit{entire} parameter space $\Theta$ is feasible and instead look for the largest feasible subset $\Theta^* \subseteq \Theta$.
We illustrate this using an autonomous driving example (\Cref{fig:acc}), where the parameters $\theta = [\bar{a}, v_0]$ control the max acceleration $\bar{a}$ (e.g., from weather) and initial relative velocity $\Delta v_0$,
and safety is defined as not crashing into the car in front ($\Delta p \leq 0$).
Only a subset of the full parameter space is feasible, and solving \eqref{eq:max_ci_set} equates to identifying this subset and finding a policy that is safe under such feasible parameters.
In the next section, we propose a practical algorithm to solve \eqref{eq:max_ci_set} that simultaneously identifies the feasible set $\Theta^*$ and learns a policy $\pi$ that solves the safe optimal control problem over this set.

\section{Feasibility-Guided Exploration}

In this section, we discuss three key challenges in solving the parameter-robust avoid problem with unknown feasibility \eqref{eq:max_ci_set} and how we propose to address them.

\subsection{Estimating the Feasible Parameter Set \texorpdfstring{$\CISet$}{}} \label{sec:classifier}

We first tackle the challenge of estimating a feasible set of parameters $\Theta' \subseteq \CISet$.
This is important: if $\Theta'$ includes infeasible parameters, the robust constraint in \eqref{eq:max_ci_set} becomes unsatisfiable and the policy degenerates.
However, feasibility labels are asymmetric: under deterministic dynamics, any parameter observed to be safe is feasible, while an unsafe observation may simply reflect a suboptimal policy.
Because only positive labels are reliable, standard supervised learning cannot be applied.

Our goal is to learn a conservative classifier: it should never mark an infeasible parameter as feasible (zero false positives), while keeping false negatives low.
To do this, we exploit the label asymmetry by training on two data sources: (i) parameters observed to be safe, which provide trustworthy positive labels, and (ii) on-policy samples, which may include false negatives but reveal the boundary of $\CISet$.
Mixing these sources yields a classifier that stays conservative despite noisy negative labels.
We do this using a data distribution $p_{\text{mix}}$ that upweights reliable positives and downweights potentially noisy negatives.
As shown next, this yields a classifier with zero false-positives and a controllable false-negative rate.

\begin{figure}
    \centering
    \vspace{-2.8ex}
    \includegraphics[width=.92\linewidth]{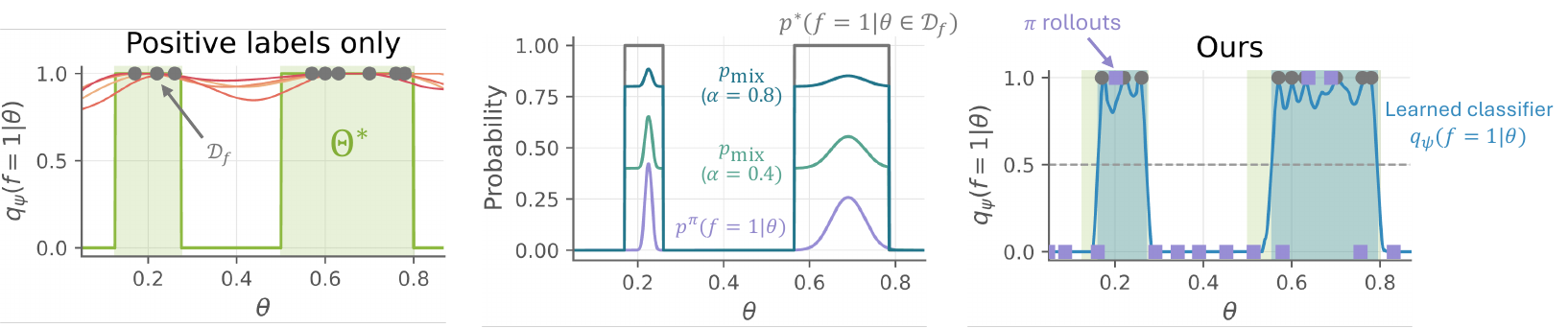}
    \vspace{-2.5ex}
    \caption{
    \textbf{(Left)} We only have positive labels for feasibility from the set of parameters that were measured to be safe $\MeasuredCISet{}$. Since we don't have negative labels, standard supervised learning cannot be used.
    \textbf{(Center)} We address this by defining a target distribution $\pmix$ via mixing positive labels from $\color{gggrey} p_{\MeasuredCISet}(\theta)$ with noisy labels from $\color{ggpurpleDark}p^\pi$, controlled by $\alpha$.
    \textbf{(Right)} Approximating $\color{colorpmix}\pmix(\feas{}=1|\theta)$ with $\color{ggblue} q_\psi(\feas{}=1 | \theta)$ using \textcolor{ggpurpleDark}{rollout samples} and $\color{gggrey}\MeasuredCISet$ samples yields a feasibility classifier with no false positives and a controllable false-negative rate with $\alpha$ and $\rho$ (\Cref{thm:ci_clsfy_probs}).
    }
    \label{fig:pmix}
    \vspace{-1em}
\end{figure}

Formally, let $\feas{}=1$ denote the event that the system remains safe for all time, and let $p^*(\feas{}=1 | \theta) = \ind\{\theta \in \Theta^*\}$ be true probability that $\theta$ is feasible.
We approximate $\Theta^\ast$ by the set of empirically safe parameters $\MeasuredCISet \coloneqq \Set{ \theta | \exists \pi_\theta, J(\pi_\theta, \theta) = 0 } \subseteq \Theta^*$,
and define $p_{\MeasuredCISet}(\theta)$ as the uniform distribution over $\MeasuredCISet$.
Let $p^\pi(\feas{}=1 \mid \theta)$ be the safety probability of $\theta$ under policy $\pi$, and let $\rho(\theta)$ be an arbitrary sampling distribution over $\Theta$.
For a mixing coefficient $\alpha \in (0, 1)$, we form the mixture (see \Cref{fig:pmix})
\begin{equation}
    \pmix{}(\feas{}, \theta) = \alpha\, p^*(\feas{} \mid \theta) p_{\MeasuredCISet}(\theta) + (1-\alpha) p^\pi(\feas{} \mid \theta) \rho(\theta).
\end{equation}
which combines positive samples from $\MeasuredCISet$ and samples with noisy labels from $\rho$.
We approximate $\pmix{}$ with $q_\psi : \Theta \to [0, 1]$ by solving
the following variational inference problem
\begin{equation} \label{eq:ci_clsfy:vi}
    \min_\psi \KL\Big( \pmix{}(\feas{}, \theta) \;\big\|\; q_\psi(\feas{} | \theta) \pmix{}(\theta) \Big).
\end{equation}
whose optimal solution is the conditional
\begin{equation} \label{eq:ci_clsfy:mixture_cond}
    \pmix{}(\feas{} | \theta) = 
    \frac{\alpha p^*(\feas{} \mid \theta) p_{\MeasuredCISet}(\theta) + (1-\alpha) p^\pi(\feas{} \mid \theta) \rho(\theta) }
    { \alpha p_{\MeasuredCISet}(\theta) + (1-\alpha) \rho(\theta) }.
\end{equation}
Thresholding yields the classifier
$\phi(\theta) = \ind{ \{ q_\psi(\feas{} = 1 \mid \theta) \geq \beta \} }$ for $\beta \in (0, 1)$.
If the parameter $\theta \in \CISet{}^\complement$ is infeasible, then 
$\pmix{}(\feas{} = 1 | \theta) = 0$, ensuring perfect classification of infeasible parameters.
If the parameter $\theta \in \MeasuredCISet \subseteq \CISet$ is measured to be feasible, then
\begin{equation} \label{eq:ci_clsfy:mixture_error_pos}
    \theta \in \MeasuredCISet \implies \pmix{}(\feas{} = 1 | \theta)
    = 1 - \Big( 1 - p^\pi(\feas{} = 1 | \theta) \Big) \zeta, \quad \zeta \coloneqq \frac{1-\alpha}{\alpha} \frac{\rho(\theta)}{ p_{\MeasuredCISet}(\theta)}
\end{equation}
so $\zeta$ controls the attenuation of false negatives (derivation in \Cref{app:proof:mixture_error_pos}).
When $\pi$ is optimal, $p^\pi(\feas=1\mid\theta)=1$ and the error vanishes.
Even for suboptimal $\pi$, we can reduce $\zeta$ either by taking $\alpha\to1$ (making $\pmix$ approach $p^*$) or by 
choosing $\rho$ to have have low density on $\CISet$.
\Cref{thm:ci_clsfy_probs} formalizes conditions under which $q_{\psi^\ast}$ is accurate on $\CISet$.
Moreover, by choosing $\rho = q_\psi(\theta | \feas{}=0)$, we can provide provide guarantees on classifier accuracy that are independent of the policy (\Cref{prop:clsfy_worst_policy_bound}).

In practice, we take $\rho$ to be the rollout distribution of $\theta$ under $\pi$, so that $p^\pi(\feas\mid\theta)\rho(\theta)$ corresponds to the joint distribution induced by on-policy rollouts. Since we minimize the forward KL, samples from $\pmix$ suffice to optimize the objective; these are obtained simply by mixing on-policy samples with positive examples from $\MeasuredCISet$.

\begin{rmk}[Comparison to Density Modeling Approaches] \label{rmk:density}
    Prior work models feasibility by estimating a density over $\MeasuredCISet$ \citep{kang2022lyapunov,castaneda2023distribution}, and
    require choosing a density threshold which is less principled; we compare with density methods in \Cref{subsec:ablation}.
\end{rmk}

\subsection{Feasible Robust Optimization with Saddle-Point Finding} \label{subsec:saddlepoint}

With an estimate of the feasible parameter set $\CISet$ from \Cref{sec:classifier}, we now address the robust optimization problem in \eqref{eq:max_ci_set}.
While RARL \citep{pinto2017robust} can be used to solve this maximin problem,
its Gaussian policy adversary makes it difficult to constraint its output to feasible parameters, and such adversarial dynamics are often unstable in practice \citep{zhang2020stability}.

\begin{wrapfigure}[15]{r}{0.5\linewidth}
    \centering
    \raisebox{0pt}[\dimexpr\height-1.4\baselineskip\relax]{%
    \includegraphics[width=0.98\linewidth]{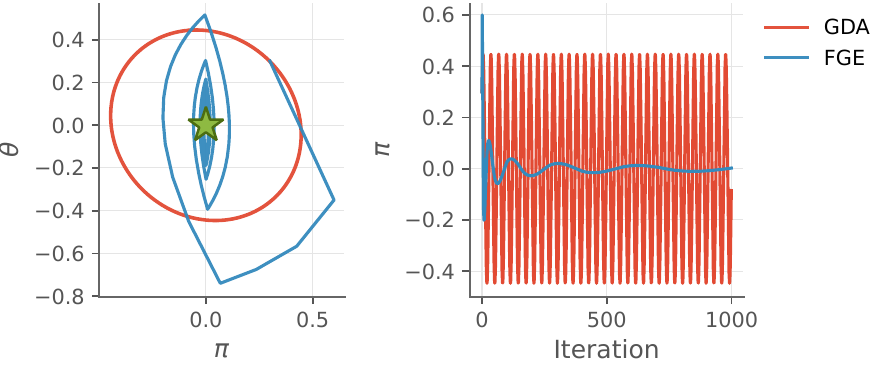}
    }
    \caption{\textbf{Gradient Descent Ascent (GDA) vs FGE on a Bilinear Game.}
    On $\max_{\pi \in [-1,1]} \min_{\theta \in [-1, 1]} \pi \theta$ for $\pi$, GDA fails to converge in last-iterate, while FGE using \eqref{eq:saddlepoint:approx_ftrl} converges to the saddle point at the origin. Both converge in average iterate (see \Cref{app:bilinear_game}). We plot the average $\theta$ for FGE.}
    \label{fig:gda_vs_fge}
\end{wrapfigure}

Instead, we adopt saddle-point methods from online learning \citep{orabona2019modern}, which allow the adversary to be explicitly restricted to $\CISet$ and provide theoretical convergence guarantees under suitable conditions.
We retain the alternating structure between policy updates and selecting worst-case parameters, but compute the adversary's action via \textit{best-response} over the approximated $\CISet$.
Sublinear regret for the policy is needed for convergence, so we use
follow-the-regularized-leader (FTRL) with a quadratic regularizer $\psi$, which keeps successive policies close \citep{orabona2019modern}, stabilizing the policy updates and avoiding oscillations as the worst-case $\theta$ changes. A quick comparison with gradient descent ascent \citep[GDA,][]{benaim1999mixed}, whose iterates RARL approximates with Deep RL, shows the stability from FTRL improves convergence (\Cref{fig:gda_vs_fge}). The iterates then take the form

\begin{minipage}[t]{0.4\linewidth}
    \vspace{-2.1em}
    \begin{equation}
        \theta_{t+1} \coloneqq \argmin_{\theta \in \CISet}\; J(\pi_{t+1}, \theta), 
        \vphantom{
    \pi_{t+1} = \argmin_\pi \psi_t(\pi) + \sum_{i=1}^t J(\pi, \theta_i).
        }
        \label{eq:saddlepoint:best_response}
    \end{equation}
\end{minipage}
\hfill
\begin{minipage}[t]{0.55\linewidth}
    \vspace{-2.1em}
    \begin{equation}
    \pi_{t+1} \coloneqq \argmax_\pi\; -\psi_t(\pi) + \sum_{i=1}^t J(\pi, \theta_i).
    \vphantom{
\theta_{t+1} = \argmax_{\theta \in \CISet} J(\pi_t, \theta), 
    }
    \label{eq:saddlepoint:ftrl}
    \end{equation}
\end{minipage}

Then, arguments from \cite[Ch.\ 12]{orabona2019modern} show that \eqref{eq:saddlepoint:ftrl} has average-iterate convergence to the saddle-point under suitable assumptions (see \Cref{app:saddle_point}).
We use deep RL to approximate \eqref{eq:saddlepoint:ftrl} by treating the sum as an expectation and linearizing $J$ around $\pi$
to obtain (see \Cref{app:proof:saddlepoint:approx_ftrl} for details):
\vspace{-0.2em}
\begin{equation} \label{eq:saddlepoint:approx_ftrl}
    \hspace{-1ex}
    \pi_{t+1} = \pi_{t} \,+\, \eta_t \nabla_\pi \frac{1}{t} \sum_{i=1}^t J(\pi, \theta_i) = \pi_{t} \,+\, \eta_t \nabla_\pi \E_{\theta \sim \mathcal{D}_{\theta, t}}[ J(\pi, \theta) ],
    \;\;
    \mathcal{D}_{\theta, t} = \text{Uniform}(\{\theta_i\}_{i=1}^t).
\end{equation}
\vspace{-0.2em}
where $\mathcal{D}_{\theta, t}$ is the discrete uniform distribution over the set of maximizers
(the \emph{rehearsal buffer}).
Using techniques from \citep{farina2020stochastic}, a Monte-Carlo approximation of the expectation in \eqref{eq:saddlepoint:approx_ftrl} still results in sublinear regret \textit{with high probability} for $\pi$ and hence still converges to the saddle-point \textit{with high probability}.
Since \eqref{eq:saddlepoint:approx_ftrl} performs gradient ascent on $J$, we implement it as a policy update step and apply any on-policy RL algorithm, such as PPO, to the equivalent RL problem.
\vspace{-0.2em}
\begin{gather} \label{eq:saddlepoint:rl_problem}
    \pi_{t+1} = \argmax_{\pi}\; \E_{\theta \sim \mathcal{D}_{\theta, t}}\Big[ \sum_{k=0}^{T_{\theta, \vx_0}} -\ind\{ h_\theta(\vx_k) > 0 \} \Big], \; \theta_{t+1} = \argmin_{\theta^{(i)}} J(\pi_t, \theta^{(i)}), \; \theta^{(i)} \sim p(\cdot | \theta \in \CISet).
    \raisetag{38pt}
\end{gather}
\vspace{-0.2em}
We use the negative indicator as the reward function.
Since the exact minimizer is intractable, we sample a batch of $\theta \sim p(\cdot \mid \theta \in \CISet)$ use a neural network estimator of $J$ to select an approximate worst-case $\theta_{t+1}$.
While the assumptions required for theoretical convergence such as convex-concavity and access to the exact best-response oracles may not be satisfied for the problems we consider, we still observe empirical benefits like improved stability, as shown in our results.
These theoretical results provide insight on why our method works well empirically, but theoretically proving convergence is not feasible under current mathematical frameworks and is still an open challenge.

\subsection{Expanding the Feasible Set Estimate} \label{subsec:expanding_CI}

A remaining challenge is how to expand the measured feasible set $\MeasuredCISet$ so that it better approximates the true feasible region $\CISet$.
We can only control false negatives for parameters within $\MeasuredCISet$, and $\MeasuredCISet$ only expands when the policy is safe for new parameters during rollouts.
This creates a negative feedback loop: a weak policy fails on many parameters, keeping $\MeasuredCISet$ small, which in turn restricts the training distribution and prevents improvement on harder parameters.

To break this cycle, we introduce an explicit exploration mechanism that proactively seeks parameters whose feasibility is still uncertain.
Using the feasibility classifier $\phi$, we draw candidate parameters that are currently estimated to be infeasible by rejection sampling $\theta \sim p(\cdot \mid \phi(\theta)=0)$.
This encourages the policy to improve on parameters outside the known feasible region, allowing the algorithm to discover new safe parameters and expand $\MeasuredCISet$ and thus improve the classifier over time.

\subsection{\AlgOursFull{}} \label{subsec:algo}

We now present \AlgOursFull{} (\AlgOursAbrv{}; \Cref{fig:fge_diagram}, \Cref{alg:fge}).
Each iteration, we update the policy using \eqref{eq:saddlepoint:rl_problem}, instantiated with PPO,
to solve the parameter-robust avoid subproblem under the current feasible set estimate $\tilde{\Theta}^*$.
During on-policy rollouts, we augment the rehearsal buffer $\mathcal{D}_{\theta, t}$ by mixing in the exploration distribution (\Cref{subsec:expanding_CI}) and the base distribution.
The resulting mixture preserves the base distribution's support while concentrating density on difficult yet feasible parameters (from the rehearsal buffer) and on currently infeasible ones (from the explore distribution).
Next, we perform the best-response step in \eqref{eq:saddlepoint:rl_problem},
where we use a learned policy classifier instead of $J$ (\Cref{app:Vl}).
Finally, we train the feasible classifier $\phi$ by optimizing $q_\psi$ to match data drawn from both the measured feasible set $\MeasuredCISet$ and on-policy rollouts generated by the current policy $\pi$.

\begin{figure}[t]
    \centering
    \vspace{-2.8ex}
    \includegraphics[width=.98\linewidth]{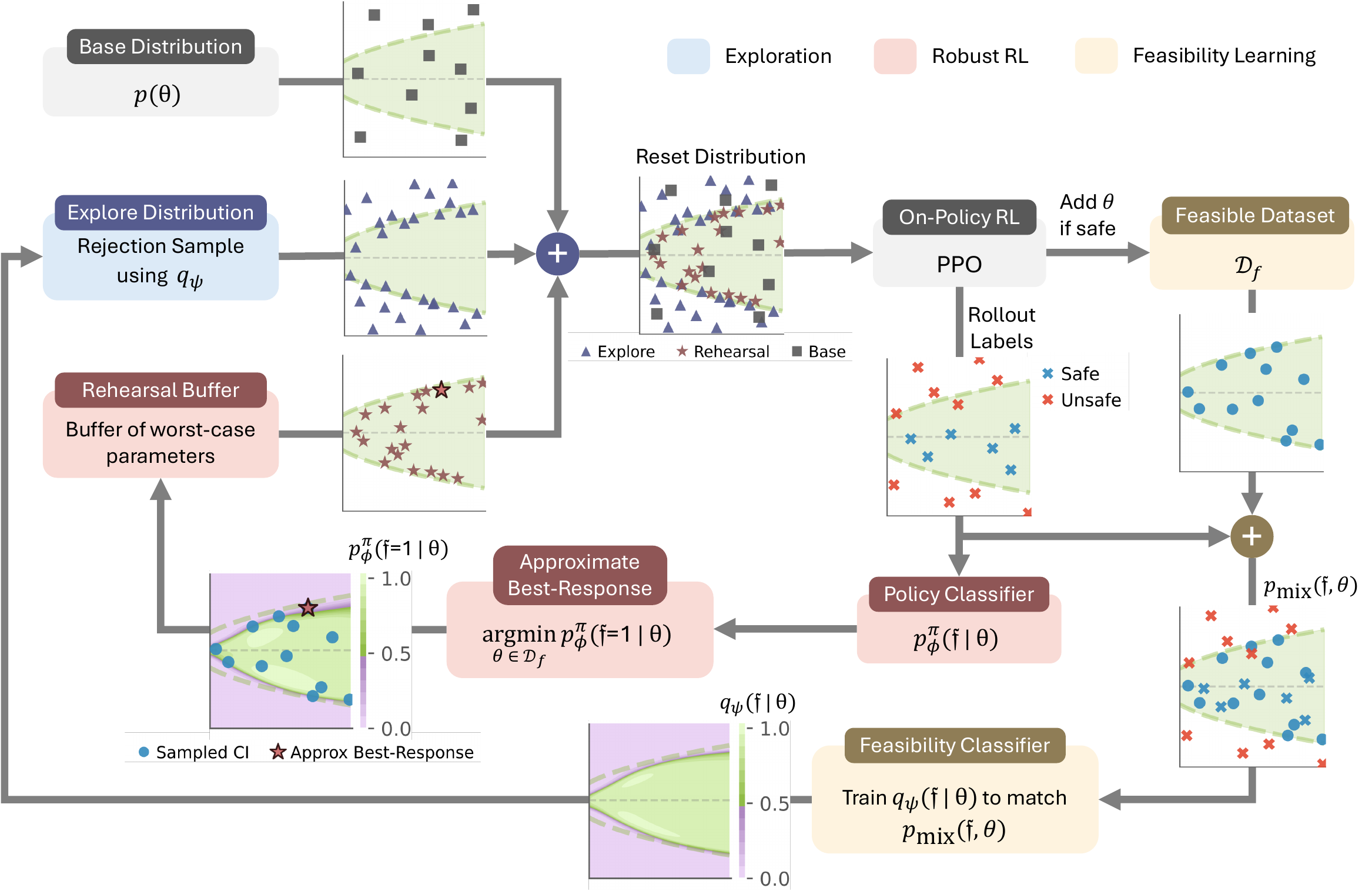}
    \vspace{-1.2ex}
    \caption{
        \textbf{\AlgOursFull{} (\AlgOursAbrv{}).}
        Starting from an on-policy RL algorithm, we adapt the initial state distribution as a mixture of the \textcolor{ColorBase}{\textbf{base (\inlinegraphics[strut={()}]{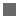})}}, \textcolor{ColorExplore}{\textbf{exploration (\inlinegraphics[strut={()}]{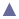})}} and \textcolor{ColorRehearse}{\textbf{rehearsal (\inlinegraphics[strut={()}]{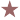})}} distributions.
        The \textcolor{ColorExplore}{\textbf{exploration}} component expands the feasible set and the \textcolor{ColorRehearse}{\textbf{rehearsal}} buffer targets feasible parameters that the current policy underperforms.
        After each episode, newly discovered feasible parameters $\theta$ are added to the dataset $\MeasuredCISet$.
        A mixture of samples from $\MeasuredCISet$ (\inlinegraphics[strut={()}]{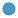}) and the episode (\inlinegraphics[strut={()}]{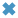},\inlinegraphics[strut={()}]{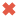}) forms $\pmix(\feas{}, \theta)$ used to train the feasibility classifier $q_\psi$. The feasibility classifier
        $q_\psi$ guides rejection sampling for the \textcolor{ColorExplore}{\textbf{explore}} distribution.
        We also train a policy-conditioned classifier $p^\pi$ to predict $\pi$'s performance given $\theta$, enabling approximate best-response selection of worst-case feasible $\theta$ (\inlinegraphics[strut={()}]{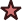}) for \textcolor{ColorRehearse}{\textbf{rehearsal}}.
        We use the ACC example and plot its parameter space in all plots (see \Cref{fig:acc}).
    }
    \label{fig:fge_diagram}
    \vspace{-1em}
\end{figure}

\vspace{-1em}
\section{Experiments}\label{sec:experiments}

We compare \AlgOurs{} against baselines from the following categories. 
\textbf{Robust RL:} Domain randomization (\AlgDR{}) \citep{tobin2017domain}, the most popular method for robustifying RL policies. We also compare against \AlgRARL{} \citep{pinto2017robust} as a representative work on robust RL.
\textbf{Curriculum Learning:} \AlgVDS{} \citep{zhang2020automatic} and \AlgPLR{} \citep{jiang2021prioritized} are two curriculum learning methods that do not require expert knowledge in designing a curriculum. \textbf{UED:} \AlgPAIRED{} \citep{dennis2020emergent}, \AlgFARR{} \citep{lanier_feasible_2022}, \AlgACCEL{} \citep{parker2022evolving} and \AlgSFL{} \citep{rutherford_no_nodate} are all recent methods from UED that have been shown to outperform \AlgDR{}.
All RL algorithms are instantiated with the PPO algorithm~\citep{schulman2017proximal}
and optimize for \eqref{eq:saddlepoint:rl_problem} but with a different distribution over the parameters $\theta$ that each method defines (see \Cref{app:experiment_details:baselines} for full details).

\begin{figure}
    \centering
    \vspace{-2.5ex}
    \includegraphics[width=\linewidth]{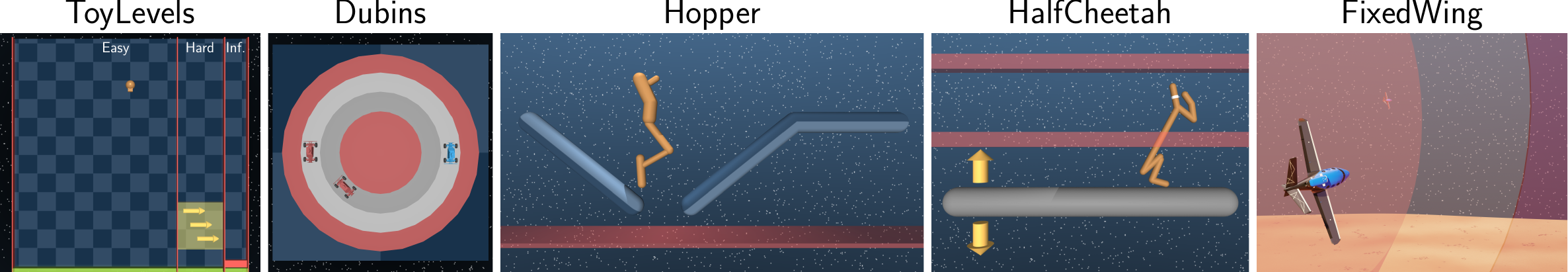}
    \caption{\textbf{Benchmark Tasks.} Unsafe regions to be avoided are shown in \textcolor{MaterialRed700}{\textbf{red}} for each environment.}
    \label{fig:task_overview}
\end{figure}

\noindent\textbf{Benchmarks.} We evaluate our method on a suite of tasks consisting of an illustrative example, two MuJoCo environments \citep{todorov2012mujoco}, a driving scenario, and a high-dimensional fixed-wing aircraft simulator \citep{so2023solving,heidlauf2018verification} (\Cref{fig:task_overview}).
We also test on a lunar lander from \citep{matthews2025kinetix} with RGB image-based observations.
Details can be found in \Cref{sec:app:env_details}.

\noindent\textbf{Evaluation Metrics. }
As not all parameters $\theta$ are feasible, the highest achievable safety rate for each environment depends on the size of the feasible set for that environment.
To enable fair comparison \textit{across} environments, we measure the \textbf{safety rate} \textit{conditioned} on the parameter being feasible, where we conservatively estimate that a parameter is feasible if any method can make it safe.
Additionally, to understand where the differences arise, we also define \textbf{coverage gain} to measure how many ``hard'' parameters a method can discover as feasible, and \textbf{coverage loss} to measure how many ``easy'' parameters a method cannot discover as feasible, where we use the performance of \AlgDR{} as a proxy for parameter difficulty. 
Details on these metrics are in \Cref{app:coverage_metrics}.

\begin{figure}
    \centering
    \begin{subfigure}{.31\textwidth}
        \centering
        \includegraphics[width=\columnwidth]{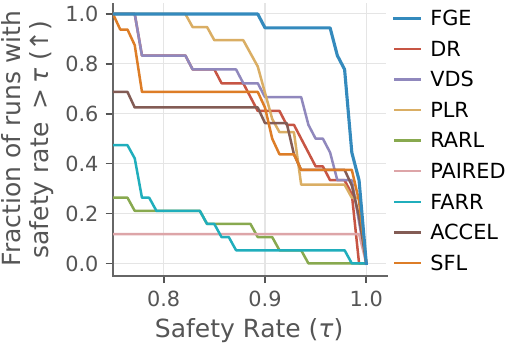}
        \caption{Performance profile}
        \label{fig:performance_profile}
    \end{subfigure}%
    \hfill
    \begin{subfigure}{.67\textwidth}
        \centering
        \includegraphics[width=\columnwidth]{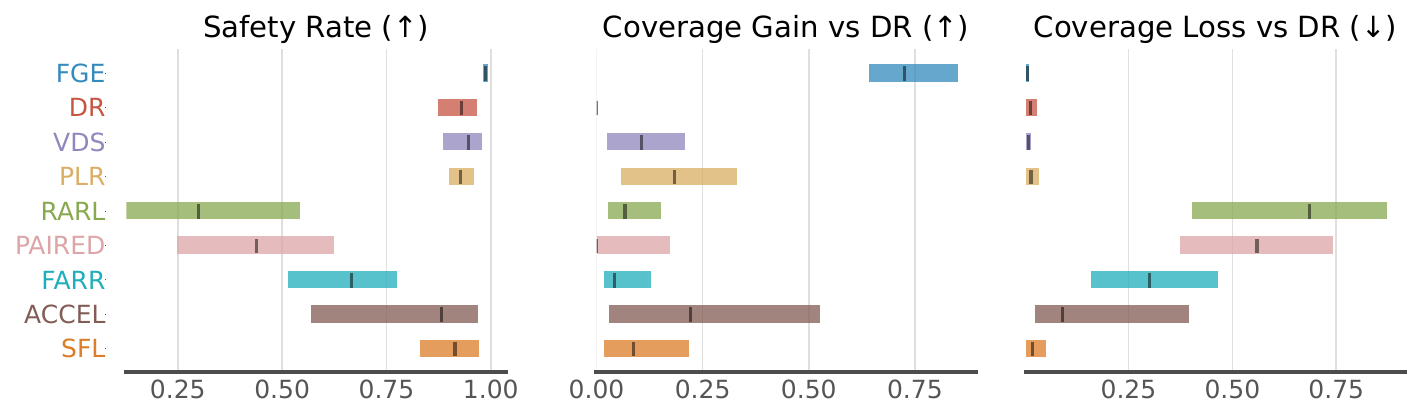}
        \caption{IQM over all tasks}
        \label{fig:task_summary}
    \end{subfigure}
    \vspace{-1ex}
    \caption{
    \textbf{\AlgOurs{} has the highest safety rates. }
    (\textbf{Left}) Performance profile \citep{agarwal2021deep}. At every threshold, \AlgOurs{} has the most runs with higher safety rate than the threshold, dominating all other baseline methods.
    (\textbf{Right}) IQM over all tasks. \AlgOurs{} achieves the highest safety rate by rendering the most hard parameters safe without losing coverage on easy parameters.
    }
    \label{fig:task_summary}
    \vspace{-1em}
\end{figure}

\vspace{-0.5em}
\subsection{Results}

Following evaluation recommendations from \citet{agarwal2021deep}, we plot the performance profile in \Cref{fig:performance_profile}. \AlgOurs{} is the highest curve, indicating that it consistently achieves high safety rate across tasks.
Next, to examine differences between methods, we plot the safety rate, coverage gain, and coverage loss (\Cref{fig:task_summary}), where \AlgOurs{} achieves the greatest coverage gain relative to \AlgDR{} across all environments.
Training curves in \Cref{app:training_curves} show that \AlgOurs{} remains stable during training.

Next, we perform case studies to understand why \AlgOurs{} achieves the highest coverage gains.

\begin{figure}[t]
    \centering
    \vspace{-2ex}
    \includegraphics[width=.6\linewidth]{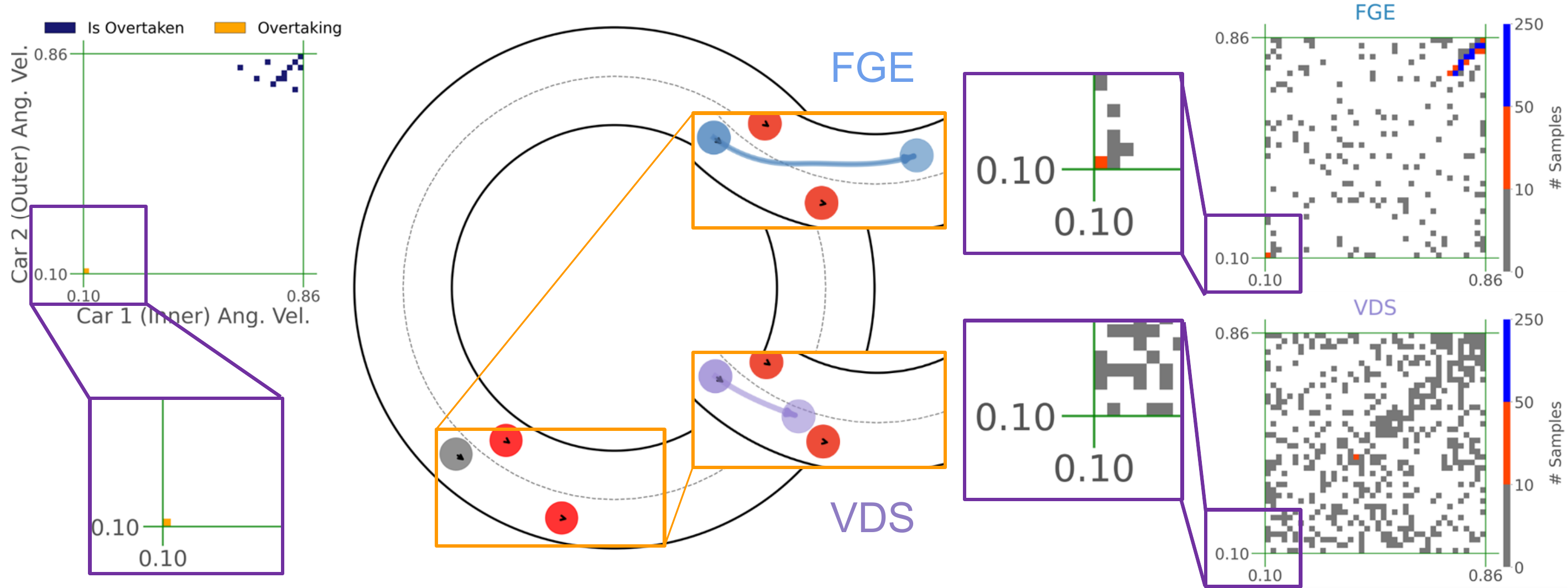}
    \caption{
        \textbf{\AlgOurs{} renders challenging initial conditions safe by allocating more samples to them.}
        (\textbf{left}) We analyze the \dubins{} environment and find two types of challenging initial conditions where the ego car must squeeze through a tight gap to remain safe.
        In particular, the overtake condition is extremely rare.
        \textbf{(right)}
        \AlgOurs{} allocates vastly more samples to this condition than \AlgVDS{}.
        \textbf{(center)} Consequently, \AlgOurs{} is able to solve this challenging overtake condition while \AlgVDS{} crashes.
    }
    \label{fig:ablations:rare_abl}
\end{figure}

\begin{figure}
    \centering
    \includegraphics[width=0.65\linewidth]{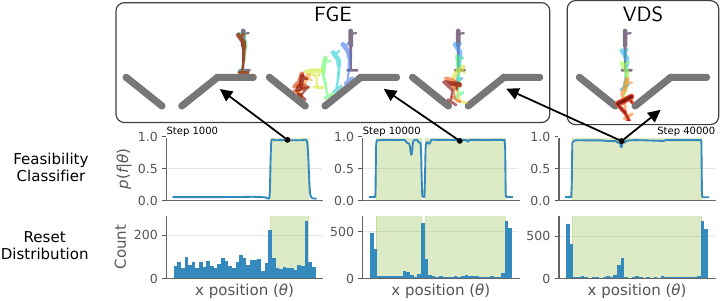}
    \vspace{-1ex}
    \caption{
        \textbf{Rejection Sampling Enables Efficient Exploration in \AlgOurs{}. }
        The explore distribution in \AlgOurs{} effectively allocates samples to regions that have not been solved yet.
        By the middle plot, \AlgOurs{} has concentrated a third of its samples on the gap between the two platforms,
        eventually solving this region by the end of training.
        In contrast, \AlgVDS{} fails to remain safe in this region.
    }
    \label{fig:fge_viz_hopper}
\end{figure}

\begin{figure}[t]
    \centering
    \includegraphics[width=\linewidth]{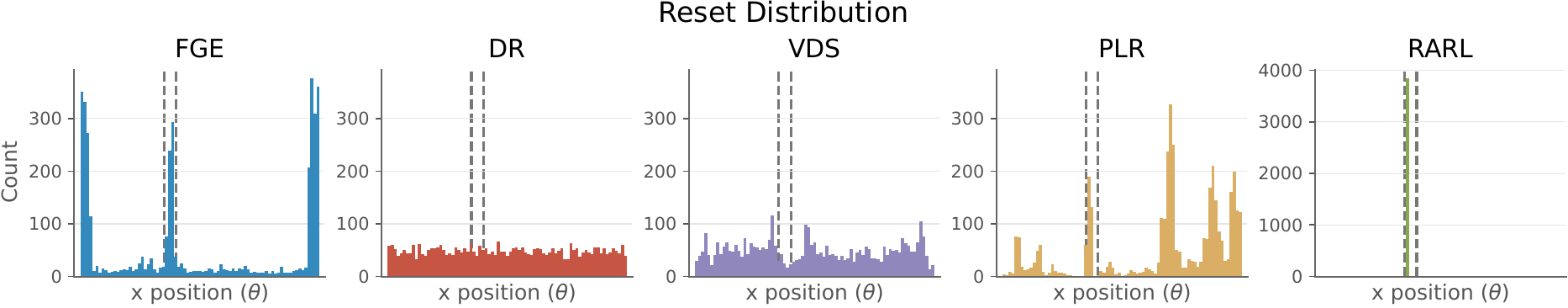}
    \caption{\textbf{Heuristics used by existing methods do not concentrate samples as efficiently as \AlgOurs{}.} Only \AlgOurs{} is able to effectively concentrate its samples in the difficult to solve gap region on \hopper{}.}
    \label{fig:hopper_compare_dist}
    \vspace{-1.0em}
\end{figure}

\noindent\textbf{\AlgOurs{} achieves gains in coverage by allocating more samples to rare and difficult parameters. } We first analyze \dubins{} (\Cref{fig:ablations:rare_abl}) and examine challenging parameters where the ego car must squeeze through a tight gap to remain safe.
In particular, we find that parameters where the ego car must \emph{overtake} are extremely rare (bottom left pixel in \Cref{fig:ablations:rare_abl}).
Comparing \AlgOurs{} and \AlgVDS{}, we observe that \AlgOurs{} allocates vastly more samples to this condition than \AlgVDS{}. 
Consequently, \AlgOurs{} remains safe on this rare challenging overtake condition while \AlgVDS{} crashes.

\noindent\textbf{The explore distribution in \AlgOurs{} efficiently samples rare and difficult parameters via rejection sampling.}
A similar story holds on \hopper{}.
Here, the small gap between platforms requires a recovery maneuver to avoid falling through.
We visualize the learned feasibility classifier $\phi$ across training iterations (\Cref{fig:fge_viz_hopper}).
Early in training, the robot quickly learns to stay safe on flat ground, and the classifier $\phi$ marks this region as feasible.
The explore distribution stops sampling from this solved region and eventually concentrates a third of samples on the gap once all other regions are solved.
As in \dubins{}, this enables \AlgOurs{} to stay safe in the gap while \AlgVDS{} does not.

\noindent\textbf{Heuristics used by existing methods fail to concentrate samples as efficiently as \AlgOurs{}.}
To test our hypothesis that \AlgOurs{}'s coverage gains arise from its sampling distribution,
we compare reset distributions on \hopper{} (\Cref{fig:hopper_compare_dist}),
marking the gap region with dashed vertical lines.
Although methods such as \AlgVDS{} and \AlgPLR{} bias sampling towards certain regions,
they do not concentrate samples within the gap as well as \AlgOurs{}.
UED methods (\AlgPAIRED{}, \AlgFARR{}, \AlgACCEL{}, \AlgSFL{}) also fail to produce efficient distributions despite their regret-based motivations, largely due to large regret approximation errors \citep{rutherford_no_nodate}.
We compare with UED in more detail in \Cref{app:comparison_to_ued}.

\begin{figure}
    \centering
    \vspace{-2ex}
    \includegraphics[width=0.99\linewidth]{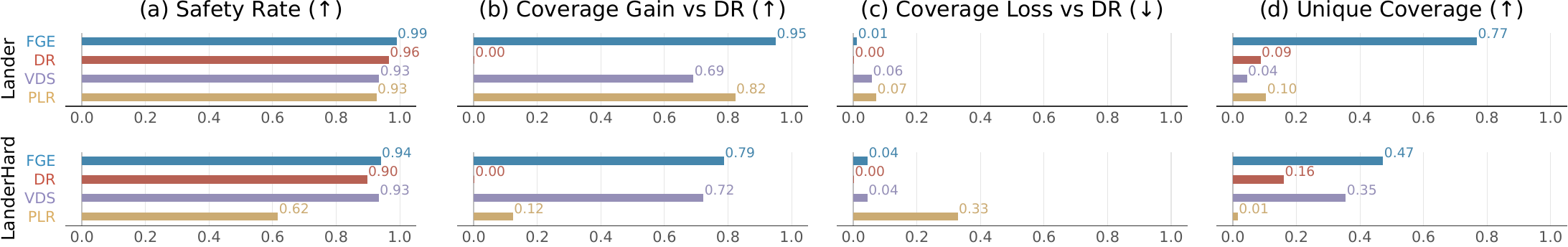}
    \caption{\textbf{\AlgOurs{} scales to high-dimensional observations (\lander) and high-dimensional parameter spaces (\landerhard).}
    \AlgOurs{} maintains the highest safety rates even on tasks with RGB image observations (\lander, \landerhard) and with 9D parameter spaces (\landerhard).
    }
    \label{fig:landers}
\end{figure}

\noindent\textbf{\AlgOurs{} scales to high-dimensional observations and parameters.}
We now examine whether \AlgOurs{} scales to high-dimensional observations and parameters by testing on \lander{} and \landerhard{}, both of which use RGB image observations (\Cref{fig:landers}).
We compare against \AlgDR{} and the two best-performing baseline methods (\AlgVDS{}, \AlgPLR{}).
\AlgOurs{} continues to achieving the highest safety rates.

\vspace{-0.5em}
\subsection{Ablation Studies} \label{subsec:ablation}

Having observed that \AlgOurs{}'s gains in coverage come from the effectiveness of the explore distribution,
we next answer the following research questions to better understand the components of \AlgOurs{}:
\textbf{(Q1)} Is the explore distribution responsible for the strong coverage gain of \AlgOurs{}?
\textbf{(Q2)} How does the rehearsal buffer affect the performance?
\textbf{(Q3)} How important is the feasibility classifier?

\begin{figure}[t]
    \vspace{-1em}
    \begin{minipage}[t]{0.56\textwidth}
        \centering
        \includegraphics[width=\linewidth]{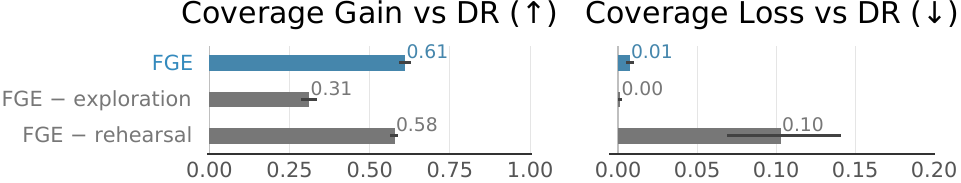}
        \caption{Removing either exploration or rehearsal distributions significantly hurts performance on \dubins{}.}
        \label{fig:explore_rehearse}
    \end{minipage}\hfill%
    \begin{minipage}[t]{0.40\textwidth}
        \centering
        \includegraphics[width=.68\linewidth]{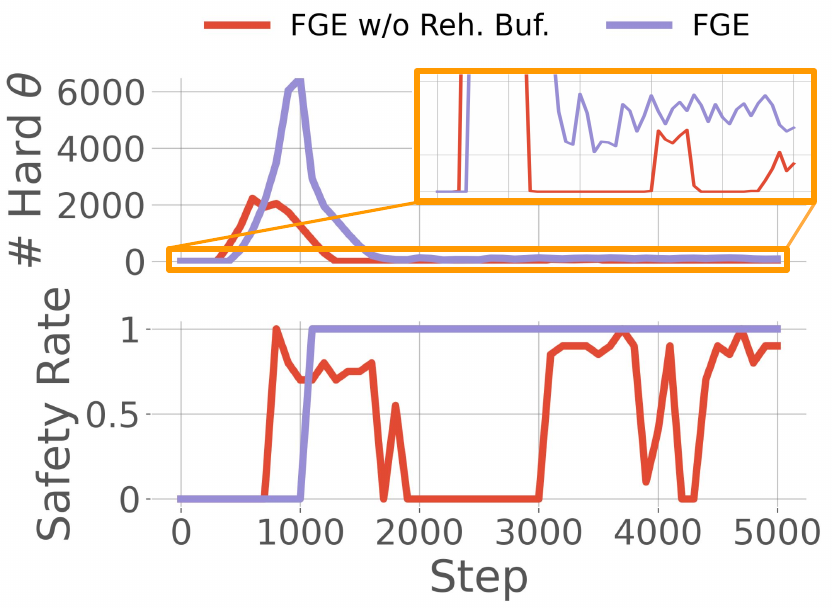}
        \caption{
        Removing the rehearsal buffer destabilizes training on \toylevels{}.
        }
        \label{fig:rhsl_abl}
    \end{minipage}
    \vspace{-1em}
\end{figure}

\noindent\textbf{(Q1) Exploration is essential for high safety rates.}
Removing the explore distribution from \AlgOurs{} 
significantly reduces coverage (\Cref{fig:explore_rehearse}).
Thus, the explore buffer is essential to \AlgOurs{}'s performance.

\noindent\textbf{(Q2) The rehearsal buffer is critical for stability.}
Removing the rehearsal buffer leads to a smaller drop in coverage gain, but higher coverage loss (\Cref{fig:explore_rehearse})
as the training becomes unstable (\Cref{fig:rhsl_abl}).

\begin{wrapfigure}[10]{r}{0.4\linewidth}
    \centering
    \raisebox{0pt}[\dimexpr\height-1\baselineskip\relax]{%
        \includegraphics[width=\linewidth]{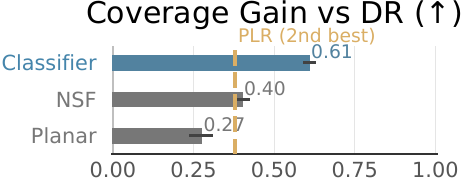}
    }
    \caption{Replacing the feasibility classifier with a density-based classifier hurts performance on \dubins{}.}
    \label{fig:nsf_ci}
\end{wrapfigure}

\textbf{(Q3) The proposed classifier using mixture distributions performs better than density-based methods.}
We compare our proposed approach to learning classifier $\phi$ from~\Cref{sec:classifier} to a learned density model
using planar flows~\citep[\PF{},][]{rezende2015variational} and Neural Spline Flows~\citep[\NSF{},][]{durkan2019neural} (\Cref{fig:nsf_ci}).
Using a classifier significantly outperforms both \PF{} and \NSF{}, with \NSF{} still achieving higher coverage than the next best-performing baseline (\AlgPLR{}).

\vspace{-1em}

\section{Conclusion}
\vspace{-1em}
We have proposed a novel approach to solving parameter-robust avoid problems with unknown feasibility
via simultaneous feasible-set estimation and saddle-point finding
and demonstrated the strong performance of our method compared to prior approaches in solving a multitude of challenging benchmark problems, where our method learns policies that achieve significantly larger feasible sets.

\noindent\textbf{Limitations }
Our proposed method approximates the best-response oracles of the saddle-point finding algorithm,
thus the theoretical convergence guarantees may not hold.
This is common in UED works (e.g., by assuming access to a best-response oracle \citep{dennis2020emergent}).
These theoretical results provide insight on why our method works well empirically, but theoretically proving convergence is not feasible under current mathematical frameworks and is still an open challenge.
Moreover, our framework assumes deterministic dynamics, which enables determining feasibility from a single safe rollout. 
Handling safety and feasibility for stochastic dynamics via approaches such as chance constraints is more complex for our proposed method compared to techniques such as domain randomization and is a promising future direction.

\section{Reproducibility statement}

All proofs of theoretical results are included in \Cref{app:proof}.
Implementation details and hyperparameters of all algorithms are included in \Cref{app:experiment_details}. We also include the source code of our algorithm in the supplementary materials.

\section*{acknowledgement}
This material is based upon work supported by the Under Secretary of Defense for Research and Engineering under Air Force Contract No. FA8702-15-D-0001 or FA8702-25-D-B002. Any opinions, findings, conclusions or recommendations expressed in this material are those of the author(s) and do not necessarily reflect the views of the Under Secretary of Defense for Research and Engineering.
This work was also supported in part by a grant from Amazon.

© 2025 Massachusetts Institute of Technology.

Delivered to the U.S. Government with Unlimited Rights, as defined in DFARS Part 252.227-7013 or 7014 (Feb 2014). Notwithstanding any copyright notice, U.S. Government rights in this work are defined by DFARS 252.227-7013 or DFARS 252.227-7014 as detailed above. Use of this work other than as specifically authorized by the U.S. Government may violate any copyrights that exist in this work.

\newpage
\bibliography{references}
\bibliographystyle{iclr2026_conference}

\newpage
\appendix

\section{Proofs} \label{app:proof}

\subsection{Relationship between the parameter-robust avoid problem and the standard reachability problem} \label{app:reachability_rl_equivalence}

We define the parameter-robust avoid problem as the following constraint satisfaction problem
\begin{align}
    \min_{\pi_\theta \in \mathcal{M}} &\quad 0 \\
    \text{s.t.} &\quad h_\theta(\vx_k) \leq 0, \quad \forall k \geq 0 \\
    &\quad \vx_{k+1} = f_\theta(\vx_k, \pi_\theta(\vx_k)), \quad \forall k \geq 0 \\
    &\quad \vx_0 = x_0(\theta), \quad \forall \theta \in \Theta
\end{align}
Define the augmented state $\tilde{\vx} = [\vx, \theta]$ and the augmented dynamics $\tilde{f}(\tilde{\vx}, \vu) = [f(\vx, \vu, \theta), \theta]$.
Consider the reachability problem \eqref{eq:avoid_problem_max}, i.e.,
\begin{equation}
    V_{\text{reach}}(\vx_0; \theta) = \min_{\pi_\theta \in \mathcal{M}} \max_{k \geq 0} h_\theta(\vx_k), \quad \text{s.t.  } \vx_{k+1} = f_\theta(\vx_k, \pi_\theta(\vx_k)) \tag{\ref{eq:avoid_problem_max}}.
\end{equation}
Let $\mathcal{S}_0 \coloneqq \Set{ (x_0(\theta), \theta)\, | \theta \in \Theta }$ denote the set of all initial state-parameter pairs, and
let $\mathcal{S} \coloneqq \Set{ (\vx_0, \theta) | V_{\text{reach}}(\vx_0; \theta) \leq 0 }$ denote the zero-sublevel set of the value function $V_{\text{reach}}$.
Then, the two problems are equivalent in the following sense.
\begin{prop}
    If $\mathcal{S}_0 \subseteq \mathcal{S}$, then the parameter-robust avoid problem is feasible. 
\end{prop}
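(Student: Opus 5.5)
The plan is to build a single parameter-conditioned policy by choosing, for each $\theta$ separately, a minimizer of the reachability problem \eqref{eq:avoid_problem_max} at the initial condition $(x_0(\theta),\theta)$, and then to glue these choices together. The glued policy is valid because the policy class $\mathcal{M}$ consists of maps $\XSet\times\Theta\to\USet$. Since $\theta$ is frozen along the augmented dynamics $\tilde f$, the trajectory under parameter $\theta$ only queries the policy at pairs $(\vx,\theta)$ with that same $\theta$. So the per-parameter choices do not interfere with one another.

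\textbf{Step 1 (per-parameter safety).} Fix $\theta\in\Theta$. By hypothesis, $(x_0(\theta),\theta)\in\mathcal{S}_0\subseteq\mathcal{S}$, so $V_{\text{reach}}(x_0(\theta);\theta)\le 0$. I will assume the minimum in \eqref{eq:avoid_problem_max} is attained, as the notation $\min$ and the paper's reference to a minimizer $\pi^\ast_\theta$ suggest. Then there is a policy $\pi^{(\theta)}\in\mathcal{M}$ with $\max_{k\ge0} h_\theta(\vx_k)\le 0$ along the closed-loop trajectory from $\vx_0=x_0(\theta)$. Consequently $h_\theta(\vx_k)\le 0$ for every $k\ge0$.

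\textbf{Step 2 (gluing).} Define $\pi^\star(\vx,\theta):=\pi^{(\theta)}(\vx,\theta)$ for all $(\vx,\theta)$; this lies in $\mathcal{M}$. By induction on $k$, the trajectory generated by $\pi^\star$ from $x_0(\theta)$ under $f_\theta$ coincides with the one generated by $\pi^{(\theta)}$. The base case holds because both trajectories start at $x_0(\theta)$. For the inductive step, both controllers evaluate at the same pair $(\vx_k,\theta)$, so they apply the same action and produce the same next state. Hence $h_\theta(\vx_k)\le0$ for all $k$ and all $\theta$, so $\pi^\star$ satisfies every constraint of the constraint satisfaction problem simultaneously, and the problem is feasible.

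\textbf{Main obstacle.} The delicate point is attainment of the infimum in Step 1. If the minimum were only an infimum, $V_{\text{reach}}\le0$ would give, for each $\varepsilon>0$, policies with $\sup_k h_\theta(\vx_k)\le\varepsilon$, but not necessarily a policy with the bound at $0$. I would handle this by invoking the paper's stated $\min$, i.e., the existence of the minimizer $\pi^\ast_\theta$ asserted in the main text. If pressed for a self-contained argument, I would instead use the strict-superlevel definition of $\AvoidSet$ together with the sum-over-time value $V$ of \eqref{eq:avoid_problem_sum}, whose equivalence with the zero-sublevel set of $V_{\text{reach}}$ the paper cites. Because $V$ takes values in $\{0,-1\}$, its maximum is trivially attained over the finitely many outcomes. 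The gluing step itself needs only the axiom of choice to select $\pi^{(\theta)}$ for every $\theta$ simultaneously.
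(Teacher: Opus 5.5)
Your proof is correct, and it argues by direct construction where the paper argues by contradiction. The paper negates feasibility (``for every policy there is some $\theta$ and $k$ with $h_\theta(\vx_k)>0$''). It then invokes, for each $\theta$, a minimizer $\pi^*_\theta$ of \eqref{eq:avoid_problem_max} that keeps that particular $\theta$ safe, and declares a contradiction. Strictly, this contradicts the negation only once a \emph{single} policy is known to be safe for all $\theta$ simultaneously, and the paper leaves this quantifier swap implicit. The gap can be closed in two ways:
\begin{itemize}
\item by your gluing argument;
\item by taking $\pi^*$ to be one feedback policy on the augmented state that is optimal from every initial condition. This needs a dynamic-programming argument via the safety Bellman equation, plus attainment of the per-step minimum over actions.
\end{itemize}
Your Step 2 supplies the missing step in the most elementary way. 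It uses only per-initial-condition minimizers, the fact that $\mathcal{M}$ permits arbitrary dependence on $\theta$, and the invariance of $\theta$ along $\tilde f$. Both arguments rest on the minimum in \eqref{eq:avoid_problem_max} being attained.

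On that point, your proposed fallback does not actually remove the attainment assumption. The paper proves the direction $V_{\text{reach}}\le 0 \Rightarrow V=0$ of \Cref{lem:reach_avoid_equivalence} by picking a minimizer, so routing through $V$ only relocates the issue. The finite range of $V$ gives attainment of $\max_\pi J$, but it does not supply the bridge from $V_{\text{reach}}\le 0$ to $V=0$.

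In fact the proposition is false without attainment. Take $\mathcal{A}=(0,1]$, $f_\theta(\vx,\vu)=\vu$, $x_0(\theta)=0$ and $h_\theta(\vx)=\vx$. Every policy gives $h_\theta(\vx_1)>0$. Yet constant policies $\vu\equiv\varepsilon$ give $\sup_k h_\theta(\vx_k)=\varepsilon$, so $V_{\text{reach}}(0;\theta)=0$ and $\mathcal{S}_0\subseteq\mathcal{S}$, while the robust avoid problem is infeasible. Taking the paper's $\min$ at face value, as your primary argument does, is therefore the right and necessary move.
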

\begin{proof}
    We prove via contradiction.
    Suppose that the parameter-robust avoid problem is infeasible.
    Then, for all policies $\pi_\theta \in \mathcal{M}$, there exists some $k \geq 0$ and some $\theta \in \Theta$ such that $h_\theta(\vx_k) > 0$.
    However, since $(x_0(\theta), \theta) \in \mathcal{S}$, \eqref{eq:avoid_problem_max} implies that there exists some $\pi^*_\theta$ such that $h_\theta(\vx_k) \leq 0$ for all $k \geq 0$. This is a contradiction. Therefore, the parameter-robust avoid problem is feasible with policy $\pi_\theta^*$.
\end{proof}

\subsection{The zero-sublevel sets of $V_{\text{reach}}$ is equal to the zero level set of $V$} \label{app:proof:reach_avoid_equivalence}

\begin{lemma}
\label{lem:reach_avoid_equivalence}
    The zero-sublevel set of the value functions $V_{\text{reach}}$ is equal to the zero level set of $V$, i.e.,
    \begin{equation}
        \Set{ (\vx, \theta) | V_{\text{reach}}(\vx; \theta) \leq 0 } = \Set{ (\vx, \theta) | V(\vx; \theta) = 0 }.
    \end{equation}
\end{lemma}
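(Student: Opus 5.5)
The plan is to show two inclusions by working policy-by-policy, so that both value functions are compared on a common per-policy quantity. Fix $(\vx,\theta)$ and a policy $\pi_\theta \in \mathcal{M}$, and let $\{\vx_k\}$ be the resulting deterministic trajectory from $\vx_0=\vx$. Since the dynamics are deterministic, each policy generates exactly one trajectory. The key per-policy observation is
\begin{equation*}
\max_{k\ge 0} h_\theta(\vx_k) \le 0 \iff T_{\theta,\vx_0}=\infty \iff J(\pi_\theta,\theta)=0 .
\end{equation*}
For the first equivalence, note that $T_{\theta,\vx_0}$ is the first index with $h_\theta(\vx_k)>0$. This index exists if and only if some $h_\theta(\vx_k)>0$, which is the negation of $\sup_k h_\theta(\vx_k)\le 0$. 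For the second, the sum in \eqref{eq:avoid_problem_sum} runs up to the hitting time, and the only nonzero term is the one at $k=T_{\theta,\vx_0}$, which equals $-1$. Hence $J\in\{-1,0\}$, and $J=0$ exactly when the hitting time never occurs.

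$(\subseteq)$: Suppose $V_{\text{reach}}(\vx;\theta)\le 0$ and assume the minimum in \eqref{eq:avoid_problem_max} is attained by some $\pi^*$. Then $\max_k h_\theta(\vx_k)\le 0$ along the trajectory of $\pi^*$. By the observation, $J(\pi^*,\theta)=0$. Since $J\le 0$ for every policy, this gives $V=\max_\pi J = 0$.

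$(\supseteq)$: Suppose $V(\vx;\theta)=0$. Because $J$ takes values in the finite set $\{-1,0\}$, the maximum is attained, so there is a $\pi$ with $J(\pi,\theta)=0$. By the observation, $\max_k h_\theta(\vx_k)\le 0$ for this $\pi$. Therefore $V_{\text{reach}}(\vx;\theta)\le \max_k h_\theta(\vx_k)\le 0$.

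The main obstacle is attainment in the $(\subseteq)$ direction. If the min in \eqref{eq:avoid_problem_max} is really an infimum, then $V_{\text{reach}}\le 0$ only yields policies with $\sup_k h_\theta(\vx_k)\le\epsilon$, not a policy that is exactly safe. I would handle this by taking the paper's use of $\min$ at face value, i.e. assuming a minimizing policy exists, as the paper implicitly does in its proposition above. Alternatively, one could restrict to settings where attainment holds, for example finite action spaces, or compact action spaces with continuous $f_\theta$ and $h_\theta$, using a standard compactness argument over policies. I would state this assumption explicitly and otherwise keep the argument purely pointwise as above.
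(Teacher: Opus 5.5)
Your proposal is correct and follows the paper's proof: a double inclusion based on the fact that, under deterministic dynamics, a policy keeps $h_\theta(\vx_k)\le 0$ for all $k$ exactly when the indicator sum along its trajectory vanishes. Your extra steps tighten that argument without changing the route: you state the attainment of the minimum in $V_{\text{reach}}$ as an explicit assumption (the paper uses it silently when it asserts that some $\pi_\theta^*$ exists), and you handle the sign convention $J\in\{-1,0\}$ together with $V=\max_\pi J$.
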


\begin{proof}
    We show via double inclusion.

    Consider $\vx_0$ and $\theta$ such that $V_{\text{reach}}(\vx_0; \theta) \leq 0$.
    Then, there exists a control policy $\pi_\theta^\ast$ such that $h_\theta(\vx_k) \leq 0$ for all $k \geq 0$, and consequently, $\ind_{h_\theta(\vx_k) > 0} = 0$ for all $k \geq 0$.
    Therefore, $V(\vx_0; \theta) = \sum_{k=0}^{T_\theta} \ind_{h_\theta(\vx_k) > 0} = 0$.

    Now, consider $\vx_0$ and $\theta$ such that $V(\vx_0; \theta) = 0$.
    Then, $\sum_{k=0}^T \ind_{h_\theta(\vx_k) > 0} = 0$, implying that $h_\theta(\vx_k) \leq 0$ for all $k \geq 0$.
    Therefore, $V_{\text{reach}}(\vx_0; \theta) = \min_\pi \max_{k \geq 0} h_\theta(\vx_k) \leq 0$.
\end{proof}

{
\subsection{Upper bounding the false negative rate} \label{app:proof:ci_clsfy_probs}

We show that the learned classifier $q_{\psi^*}$ for a given threshold $\beta$ can be accurate given conditions on $\alpha$ and the density ratio $\tilde{p}^* / \rho$ over feasible parameters.
\begin{thm} \label{thm:ci_clsfy_probs}
    Let $q_{\psi^*}$ denote the solution to \eqref{eq:ci_clsfy:vi}, and define $\zeta : \Theta \to [0, \infty)$ as
    \begin{equation}
        \zeta(\theta) \coloneqq \frac{1 - \alpha}{\alpha} \frac{ \rho(\theta) }{ p_{\MeasuredCISet}(\theta) }.
    \end{equation}
    Then, for a feasible parameter $\theta_\feas{} \in \MeasuredCISet$, we have that $q_{\psi^*}( \feas{} = 1 \mid \theta_\feas{} ) \geq \beta$ for all $\theta_\feas{}$ where
    \begin{equation}
        p^\pi(\feas{} = 1 | \theta_\feas{}) \geq \beta - (1-\beta) \zeta(\theta_\feas{})^{-1}.
    \end{equation}
    For infeasible parameter $\theta_{\infeas{}} \in \CISet{}^\complement$, $q_{\psi^*}$ will always correctly classify it with $q_{\psi^*}(\feas{} = 1 \mid \theta_{\infeas{}}) = 0$.
\end{thm}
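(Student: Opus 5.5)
First I will identify $q_{\psi^*}$ explicitly. The objective \eqref{eq:ci_clsfy:vi} is a forward KL between $\pmix(\feas{},\theta)$ and $q_\psi(\feas{}\mid\theta)\pmix(\theta)$. Both joints share the marginal $\pmix(\theta)$, so the chain rule for KL reduces it to $\E_{\theta\sim\pmix(\theta)}\big[\KL(\pmix(\feas{}\mid\theta)\,\|\,q_\psi(\feas{}\mid\theta))\big]$. Assuming the variational family is expressive enough, this is minimized (to zero) by $q_{\psi^*}(\feas{}\mid\theta)=\pmix(\feas{}\mid\theta)$ for $\pmix(\theta)$-almost every $\theta$. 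That is exactly the conditional in \eqref{eq:ci_clsfy:mixture_cond}, obtained by marginalizing $\feas{}$ in the mixture, using $\sum_\feas{} p^*(\feas{}\mid\theta)=\sum_\feas{} p^\pi(\feas{}\mid\theta)=1$. The rest of the argument is algebra on this closed form.

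\emph{Infeasible parameters.} Take $\theta_\infeas{}\in\CISet{}^\complement$. Then $p^*(\feas{}=1\mid\theta_\infeas{})=0$ by definition. Because $\MeasuredCISet\subseteq\CISet$, we also have $p_{\MeasuredCISet}(\theta_\infeas{})=0$, and under deterministic dynamics no policy keeps an infeasible parameter safe, so $p^\pi(\feas{}=1\mid\theta_\infeas{})=0$. The numerator of \eqref{eq:ci_clsfy:mixture_cond} at $\feas{}=1$ therefore vanishes and $q_{\psi^*}(\feas{}=1\mid\theta_\infeas{})=0$. One case needs a remark: if $\rho(\theta_\infeas{})=0$ as well, the conditional is undefined. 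I would handle this by a convention, or by noting that the statement only matters on the support of $\pmix$.

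\emph{Measured-feasible parameters.} Take $\theta_\feas{}\in\MeasuredCISet$, so $p^*=1$ and $p_{\MeasuredCISet}(\theta_\feas{})>0$. Divide the numerator and denominator of \eqref{eq:ci_clsfy:mixture_cond} by $\alpha p_{\MeasuredCISet}(\theta_\feas{})$ to get
\[
q_{\psi^*}(\feas{}=1\mid\theta_\feas{})=\frac{1+\zeta\, p^\pi}{1+\zeta}, \qquad p^\pi=p^\pi(\feas{}=1\mid\theta_\feas{}),\ \zeta=\zeta(\theta_\feas{}).
\]
This is equivalent to \eqref{eq:ci_clsfy:mixture_error_pos} up to the normalization by $1+\zeta$. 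The condition $q_{\psi^*}\ge\beta$ then becomes $1+\zeta p^\pi\ge\beta(1+\zeta)$, i.e.\ $p^\pi\ge\beta-(1-\beta)\zeta^{-1}$, since $\zeta>0$. When $\rho(\theta_\feas{})=0$ we have $\zeta=0$, the bound reads $-\infty$, and indeed $q_{\psi^*}=1\ge\beta$.

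The main obstacle is not the algebra but justifying $q_{\psi^*}=\pmix(\cdot\mid\theta)$. This requires a realizability assumption on $q_\psi$, and the identification holds only on the support of $\pmix(\theta)$. I would state both caveats explicitly.
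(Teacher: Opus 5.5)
Your proposal is correct and follows the same route as the paper. You identify $q_{\psi^*}$ with the closed-form conditional $\pmix(\feas \mid \theta)$, note that its numerator at $\feas=1$ vanishes for $\theta\in\CISet{}^\complement$, and rearrange $\pmix(\feas=1\mid\theta)\geq\beta$ for $\theta\in\MeasuredCISet$ into $p^\pi(\feas=1\mid\theta)\geq\beta-(1-\beta)\zeta(\theta)^{-1}$. Your extras are refinements of steps the paper leaves implicit, not a different argument: the KL chain-rule justification with its realizability and support-of-$\pmix$ caveats, the $\rho(\theta)=0$ edge cases, and the form $(1+\zeta p^\pi)/(1+\zeta)$, which makes the rearrangement an explicit equivalence and so gives the sufficiency direction the theorem states.
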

}

\begin{proof}
For any infeasible parameter $\theta \in {\Theta^*}^c$, we have that $p_{\MeasuredCISet}(\feas{} = 1 | \theta) = p^\pi(\feas{} = 1 | \theta) = 0$. Hence, $p_{\text{mix}}(\feas{} = 1 | \theta) = 0$.

For a feasible parameter $\theta \in D_{\feas{}}$, we have that $\tilde{p}^*(\feas{} = 1 | \theta) = 1$. Hence,
\begin{equation}
p_{\text{mix}}(\feas{} = 1 | \theta) = \frac{ \alpha p_{\MeasuredCISet}(\theta) + (1 - \alpha) p^\pi(\feas{} = 1 | \theta) \rho(\theta) }{ \alpha p_{\MeasuredCISet}(\theta) + (1 - \alpha) \rho(\theta) }.
\end{equation}
For this to be greater than $\beta$, this implies that
\begin{align}
\alpha p_{\MeasuredCISet}(\theta) + (1-\alpha) p^\pi(\feas{} \geq 1 | \theta) &\geq \beta \big( \alpha p_{\MeasuredCISet}(\theta) + (1 - \alpha) \rho(\theta) \big) \\
p^\pi(\feas{} = 1 | \theta)
&\geq \frac{ \beta ( \alpha p_{\MeasuredCISet}(\theta) + (1-\alpha) \rho(\theta) ) - \alpha p_{\MeasuredCISet}(\theta) }{ (1-\alpha) \rho(\theta)} \\
&= \beta - (1 - \beta) \frac{ \alpha }{ 1 - \alpha } \frac{ p_{\MeasuredCISet}^\star(\theta) }{ \rho(\theta) } \\
&= \beta - (1 - \beta) \zeta(\theta)^{-1}.
\end{align}

\subsection{Behavior of the Classifier under Rejection Sampling}
We now consider the case where we take $\rho$ as $q_\psi(\theta | \feas{}=0)$, i.e.,
\begin{equation}
    \rho(\theta) = q_\psi(\theta | \feas{}=0) = \frac{ q_\psi(\feas{}=0 | \theta) p(\theta)}{ q_\psi(\feas{}=0) }.
\end{equation}
This couples the classifier and the parameter sampling distribution. For $\theta \in \MeasuredCISet \subseteq \CISet$ If we hold the policy $\pi$ fixed and we iteratively update the classifier $q_\psi$ and $\rho$, we obtain the following iterates:
\begin{align}
    \rho^{(k)}(\theta) &= q^{(k)}(\theta | \feas{}=0) = \frac{ q^{(k)}(\feas{}=0 | \theta) p(\theta) }{ q^{(k)}(\feas{}=0) }, \\
    q^{(k+1)}(\feas{}=0 | \theta)
    &= p^\pi(\feas{}=0 | \theta) \frac{ (1-\alpha) \rho^{(k)}(\theta) }{ \alpha p_{\MeasuredCISet}(\theta) + (1-\alpha) \rho^{(k)}(\theta) }.
\end{align}
We then have the following result about the behavior of these iterates.

\begin{prop}
    The fixed-point always exists, and is given by 
    \begin{equation}
        q^{(\infty)}(\feas{}=0 | \theta) = \max \left( 0, p^\pi(\feas{}=0 | \theta) - \alpha \frac{ p_{\MeasuredCISet}(\theta) p^\pi(\feas{}=0) }{ p(\theta) } \right),
    \end{equation}  
    where
    \begin{equation}
        p^\pi(\feas{}=0) = \int p^\pi(\feas{}=0 | \theta) p(\theta) d\theta = \E_{\theta \sim p(\theta)} \left[ p^\pi(\feas{}=0 | \theta) \right].
    \end{equation}
\end{prop}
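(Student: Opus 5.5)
The plan is to fix the policy $\pi$, treat the classifier output $x(\theta) \coloneqq q(\feas{}=0 \mid \theta)$ as the unknown, and write the coupled update as a single map $x \mapsto T[x]$. Substituting $\rho^{(k)}(\theta) = x^{(k)}(\theta)p(\theta)/Z^{(k)}$ with $Z^{(k)} \coloneqq q^{(k)}(\feas{}=0) = \int x^{(k)}(\theta)p(\theta)\,d\theta$ into the update for $q^{(k+1)}$ gives
\begin{equation*}
    T[x](\theta) = p^\pi(\feas{}=0\mid\theta)\,\frac{(1-\alpha)\,x(\theta)\,p(\theta)}{\alpha\, p_{\MeasuredCISet}(\theta)\, Z + (1-\alpha)\, x(\theta)\, p(\theta)}, \qquad Z = \int x\, p .
\end{equation*}
A fixed point is then a function $x$ with $x = T[x]$ pointwise, together with the scalar consistency condition that $Z$ equals $\int x\,p$. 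I will solve the pointwise equation with $Z$ frozen, and close the argument by solving the scalar equation for $Z$.

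For fixed $Z$, the pointwise equation $x = T[x]$ factors at each $\theta$. Either $x(\theta)=0$, or, after dividing by $x(\theta)$,
\begin{equation*}
    \alpha\, p_{\MeasuredCISet}\, Z + (1-\alpha)\, x\, p = (1-\alpha)\, p^\pi(\feas{}=0\mid\theta)\, p ,
\end{equation*}
that is,
\begin{equation*}
    x(\theta) = p^\pi(\feas{}=0\mid\theta) - \tfrac{\alpha}{1-\alpha}\,\frac{p_{\MeasuredCISet}(\theta)\,Z}{p(\theta)} .
\end{equation*}
Since $x$ is a probability, it must be nonnegative. I will select the nonzero branch whenever it is nonnegative and the zero branch otherwise. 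This selection is the one that is stable under the iteration: $T[x]/x$ exceeds $1$ for small positive $x$ exactly when the nonzero root is positive. It yields
\begin{equation*}
    x_Z(\theta) = \max\Big(0,\; p^\pi(\feas{}=0\mid\theta) - \tfrac{\alpha}{1-\alpha}\, p_{\MeasuredCISet}(\theta)\, Z/p(\theta)\Big).
\end{equation*}

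It remains to determine $Z$ self-consistently, via $Z = g(Z) \coloneqq \int x_Z\, p$. For existence, I will note that $g$ is continuous and nonincreasing in $Z$, with $g(0) = p^\pi(\feas{}=0) \geq 0$ and $g(Z) \leq p^\pi(\feas{}=0)$. Hence $Z - g(Z)$ is continuous, strictly increasing, nonpositive at $Z=0$ and nonnegative at $Z=1$. The intermediate value theorem then gives a unique solution, which proves that the fixed point always exists.

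For the closed form, I will evaluate the consistency equation in the regime where the truncation is inactive. Using $\int p_{\MeasuredCISet} = 1$, it becomes $Z = p^\pi(\feas{}=0) - \tfrac{\alpha}{1-\alpha} Z$, so $Z = (1-\alpha)\,p^\pi(\feas{}=0)$. Substituting back cancels the $(1-\alpha)$ factor and yields exactly
\begin{equation*}
    q^{(\infty)}(\feas{}=0\mid\theta) = \max\Big(0,\; p^\pi(\feas{}=0\mid\theta) - \alpha\, p_{\MeasuredCISet}(\theta)\, p^\pi(\feas{}=0)/p(\theta)\Big).
\end{equation*}

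The main obstacle is the case where the $\max$ actually clips on a set of positive $p$-mass. There $g(Z)$ exceeds the linear expression, so the self-consistent $Z$ is slightly larger than $(1-\alpha)p^\pi(\feas{}=0)$. My plan is to state this explicitly: the formula is exact when clipping is negligible, namely when $p^\pi(\feas{}=0\mid\theta)\,p(\theta) \geq \alpha\, p_{\MeasuredCISet}(\theta)\, p^\pi(\feas{}=0)$ wherever $p_{\MeasuredCISet}>0$. In general the same structure holds, with $Z$ given by the unique root from the previous paragraph. I would then check whether the paper's intended statement implicitly assumes this non-clipping condition.
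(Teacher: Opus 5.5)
Your proposal is correct and follows the paper's route. Like you, the paper factors the fixed-point equation into the zero branch and the linear branch $q(\feas{}=0\mid\theta)=p^\pi(\feas{}=0\mid\theta)-\tfrac{\alpha}{1-\alpha}\,p_{\MeasuredCISet}(\theta)\,q(\feas{}=0)/p(\theta)$. It integrates this against $p$ using $\int p_{\MeasuredCISet}=1$ to get $q(\feas{}=0)=(1-\alpha)p^\pi(\feas{}=0)$, substitutes back, and only then applies the $\max$.

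Your caveat is also correct, and it answers your closing question: the paper's proof silently assumes no clipping. It restricts the branch analysis to $\theta$ with $p^\pi(\feas{}=0\mid\theta)>0$. It then integrates the \emph{unclipped} linear expression over all $\theta$, including points where that expression is negative (e.g.\ any $\theta\in\MeasuredCISet$ with $p^\pi(\feas{}=0\mid\theta)=0$). It never re-solves for the normalizer after clipping. So when clipping has positive $p$-mass, the stated formula is in general not a fixed point.

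Here is a concrete instance. Take $p$ uniform on $[0,1]$, $\MeasuredCISet=[0,\tfrac12]$, $\alpha=\tfrac13$, and $p^\pi(\feas{}=0\mid\theta)=0$ on $[0,\tfrac14]$, $=1$ elsewhere. Then $g(Z)=\tfrac14\max(0,1-Z)+\tfrac12$, so $Z^*=\tfrac35$ rather than $(1-\alpha)p^\pi(\feas{}=0)=\tfrac12$. The true fixed point equals $\tfrac25$ on $(\tfrac14,\tfrac12]$. The stated formula gives $\tfrac12$ there, and one application of the map sends that value to $\tfrac49$. Your intermediate-value argument is therefore the actual existence proof, and $x_{Z^*}$ is the correct general answer.

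You can also salvage what the paper needs downstream. Since $g(Z)\ge p^\pi(\feas{}=0)-\tfrac{\alpha}{1-\alpha}Z$ and $Z\mapsto Z-g(Z)$ is strictly increasing, you get $Z^*\ge(1-\alpha)p^\pi(\feas{}=0)$. Because $x_Z$ is pointwise nonincreasing in $Z$, the stated expression is always a pointwise \emph{upper bound} on your selected fixed point, and it is exact under your non-clipping condition. An upper bound on $q^{(\infty)}(\feas{}=0\mid\theta)$ is all that \Cref{prop:clsfy_worst_policy_bound} uses, so that result survives.

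Finally, $0$ is absorbing, fixed points are not unique, and your stability remark holds only with $Z$ frozen. You should therefore state explicitly that ``the'' fixed point means the one picked by your branch rule.
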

\begin{proof}
Note that $\frac{ (1-\alpha) \rho^{(k)}(\theta) }{ \alpha p_{\MeasuredCISet}(\theta) + (1-\alpha) \rho^{(k)}(\theta) } \in (0, 1)$, so the error rate of the classifier is always upper-bounded by the policy-induced error rate:
\begin{equation}
    q^{(k+1)}(\feas{}=0 | \theta) \leq p^\pi(\feas{}=0 | \theta).
\end{equation}
Moreover, note that $0$ is absorbing for the iterates: if $q^{(k)}(\feas{}=0 | \theta) = 0$, then $\rho^{(k)}(\theta) = 0$ and thus $q^{(k+1)}(\feas{}=0 | \theta) = 0$.

For $\theta$ such that $p^\pi(\feas{}=0 | \theta) > 0$, we can analyze the fixed points of the iterates. The fixed point $q^{(\infty)}$ satisfies
\begin{equation}
    q^{(\infty)}(\feas{}=0 | \theta) = p^\pi(\feas{}=0 | \theta) \frac{ (1-\alpha) \frac{q^{(\infty)}(\feas{}=0|\theta) p(\theta) }{q^{(\infty)}(\feas{}=0)} }{ \alpha p_{\MeasuredCISet}(\theta) + (1-\alpha) \frac{q^{(\infty)}(\feas{}=0|\theta) p(\theta)}{q^{(\infty)}(\feas{}=0)} }.
\end{equation}
Or,
\begin{equation}
    q^{(\infty)}(\feas{}=0 | \theta)
    \left( \alpha p_{\MeasuredCISet}(\theta) + (1 - \alpha) \frac{ q^{(\infty)}(\feas{}=0|\theta) p(\theta) }{ q^{(\infty)}(\feas{}=0) } - (1-\alpha) p^\pi(\feas{}=0|\theta) \frac{p(\theta)}{q^{(\infty)}(\feas{}=0)} \right) = 0
\end{equation}
In other words, at the fixed point, either $q^{(\infty)}(\feas{}=0 | \theta) = 0$ or
\begin{equation}
    \alpha p_{\MeasuredCISet}(\theta) + (1 - \alpha) \frac{ q^{(\infty)}(\feas{}=0|\theta) p(\theta) }{ q^{(\infty)}(\feas{}=0) } - (1-\alpha) p^\pi(\feas{}=0|\theta) \frac{p(\theta)}{q^{(\infty)}(\feas{}=0)} = 0.
\end{equation}
Assume the latter. Then,
\begin{align}
    q^{(\infty)}(\feas{}=0 | \theta)
    &= p^\pi(\feas{}=0 | \theta) \frac{1}{1 + \frac{\alpha}{1-\alpha} \frac{ p_{\MeasuredCISet}(\theta) }{ \rho^{(\infty)}(\theta) } }, \\
    &= p^\pi(\feas{}=0 | \theta) \frac{1}{1 + \frac{\alpha}{1-\alpha} \frac{ p_{\MeasuredCISet}(\theta) q^{(\infty)}(\feas{} = 0) }{ q^{(\infty)}(\feas{}=0 | \theta) p(\theta) } }
\end{align}
Rearranging,
\begin{align}
    q^{(\infty)}(\feas{}=0 | \theta) \left(1 + \frac{\alpha}{1-\alpha} \frac{ p_{\MeasuredCISet}(\theta) q^{(\infty)}(\feas{} = 0) }{ q^{(\infty)}(\feas{}=0 | \theta) p(\theta) } \right)
    &= p^\pi(\feas{}=0 | \theta), \\
    \iff
    q^{(\infty)}(\feas{}=0 | \theta) + \frac{\alpha}{1-\alpha} \frac{ p_{\MeasuredCISet}(\theta) q^{(\infty)}(\feas{} = 0) }{ p(\theta) }
    &= p^\pi(\feas{}=0 | \theta), \\
    \iff
    q^{(\infty)}(\feas{}=0 | \theta)
    &= p^\pi(\feas{}=0 | \theta) - \frac{\alpha}{1-\alpha} \frac{ p_{\MeasuredCISet}(\theta) q^{(\infty)}(\feas{} = 0) }{ p(\theta) }.
\end{align}
Substituting $q^{(\infty)}(\feas{}=0 | \theta)$ into $q^{(\infty)}(\feas{} = 0)$, we get
\begin{align}
    q^{(\infty)}(\feas{} = 0)
    &= \int q^{(\infty)}(\feas{}=0 | \theta) p(\theta) d\theta, \\
    &= \int \left( p^\pi(\feas{}=0 | \theta) - \frac{\alpha}{1-\alpha} \frac{ p_{\MeasuredCISet}(\theta) q^{(\infty)}(\feas{} = 0) }{ p(\theta) } \right) p(\theta) d\theta, \\
    &= \int p^\pi(\feas{}=0 | \theta) p(\theta) d\theta - \frac{\alpha}{1-\alpha} q^{(\infty)}(\feas{} = 0) \int p_{\MeasuredCISet}(\theta) d\theta, \\
    &= p^\pi(\feas{}=0) - \frac{\alpha}{1-\alpha} q^{(\infty)}(\feas{} = 0).
\end{align}
Hence,
\begin{align}
    q^{(\infty)}(\feas{} = 0)
    &= p^\pi(\feas{}=0) - \frac{\alpha}{1-\alpha} q^{(\infty)}(\feas{} = 0), \\
    \implies
    q^{(\infty)}(\feas{} = 0)
    &= (1-\alpha) p^\pi(\feas{}=0).
\end{align}
Substituting back, we get the fixed point
\begin{equation}
    q^{(\infty)}(\feas{}=0 | \theta)
    = p^\pi(\feas{}=0 | \theta) - \alpha \frac{ p_{\MeasuredCISet}(\theta) p^\pi(\feas{}=0) }{ p(\theta) }.
\end{equation}
Since $q^{(\infty)}(\feas{}=0 | \theta) \geq 0$, we have that
\begin{equation}
    q^{(\infty)}(\feas{}=0 | \theta) = \max \left( 0, p^\pi(\feas{}=0 | \theta) - \alpha \frac{ p_{\MeasuredCISet}(\theta) p^\pi(\feas{}=0) }{ p(\theta) } \right).
\end{equation}
\end{proof}

Consequently, we can use this to bound classifier accuracy even under the worst-case policy $\pi$ where $p^\pi(\feas{}=0 | \theta) = 1$:
\begin{prop} \label{prop:clsfy_worst_policy_bound}
    Under the worst-case policy where $p^\pi(\feas{}=0 | \theta) = 1$, for desired error $\epsilon \in (0, 1)$,
    \begin{equation}
        \alpha \geq (1-\epsilon) \frac{p(\theta)}{p_{\MeasuredCISet}(\theta)} \quad \implies \quad
        q^{(\infty)}(\feas{}=0 | \theta) \leq \epsilon.
    \end{equation}
\end{prop}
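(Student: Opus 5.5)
The plan is to specialize the fixed-point characterization from the preceding proposition to the worst-case policy and then rearrange a single scalar inequality. That proposition states that the iterates of the coupled classifier/sampler update, with $\pi$ held fixed, have the fixed point
\begin{equation*}
    q^{(\infty)}(\feas{}=0 \mid \theta) = \max\Big(0,\; p^\pi(\feas{}=0\mid\theta) - \alpha \frac{p_{\MeasuredCISet}(\theta)\, p^\pi(\feas{}=0)}{p(\theta)}\Big).
\end{equation*}
I take this formula as given and apply it pointwise at the parameter $\theta \in \MeasuredCISet$ under consideration.

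First, I substitute the worst-case policy. If $p^\pi(\feas{}=0\mid\theta)=1$ for every $\theta$, then the marginal is $p^\pi(\feas{}=0)=\E_{\theta\sim p}[p^\pi(\feas{}=0\mid\theta)]=1$. The fixed point therefore simplifies to
\begin{equation*}
    q^{(\infty)}(\feas{}=0\mid\theta)=\max\Big(0,\;1-\alpha\,\tfrac{p_{\MeasuredCISet}(\theta)}{p(\theta)}\Big).
\end{equation*}

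Second, I bound this quantity by $\epsilon$ and split into the two branches of the $\max$.
\begin{itemize}
    \item If the maximum equals $0$, then $q^{(\infty)}(\feas{}=0\mid\theta)=0\le\epsilon$ trivially, since $\epsilon>0$.
    \item Otherwise, $q^{(\infty)}(\feas{}=0\mid\theta)\le\epsilon$ holds iff $1-\alpha\,p_{\MeasuredCISet}(\theta)/p(\theta)\le\epsilon$. Multiplying through by $p(\theta)/p_{\MeasuredCISet}(\theta)>0$, this is equivalent to $\alpha\ge(1-\epsilon)\,p(\theta)/p_{\MeasuredCISet}(\theta)$, which is exactly the hypothesis.
\end{itemize}
The multiplication is legitimate because $\theta\in\MeasuredCISet$ ensures $p_{\MeasuredCISet}(\theta)>0$, and I assume $p(\theta)>0$ so that $\rho$ is well-defined there.

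The only delicate point is that the preceding proposition also allows the absorbing fixed point $0$. One might worry about which fixed point the iterates actually reach. This does not matter here: both candidate fixed points, $0$ and the closed-form value, satisfy the bound, so the conclusion holds regardless. I also expect to remark that the hypothesis can only be satisfied when $(1-\epsilon)\,p(\theta)/p_{\MeasuredCISet}(\theta)<1$, since $\alpha\in(0,1)$. This is a constraint on the statement's applicability, not on the validity of the implication.
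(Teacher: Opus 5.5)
Your proposal is correct and follows the paper's proof essentially verbatim. Like the paper, you substitute $p^\pi(\feas{}=0\mid\theta)=1$, hence $p^\pi(\feas{}=0)=1$, into the fixed-point formula from the preceding proposition, dispose of the zero branch trivially, and rearrange $1-\alpha\,p_{\MeasuredCISet}(\theta)/p(\theta)\le\epsilon$ into the stated condition on $\alpha$; your remarks on the positivity of $p(\theta)$ and $p_{\MeasuredCISet}(\theta)$ and on the absorbing fixed point only make explicit what the paper leaves implicit.
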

\begin{proof}
If $q^{(\infty)}(\feas{}=0 | \theta)$ is zero, then it is trivially satisfied. Suppose it is not. Then,
    \begin{align}
        q^{(\infty)}(\feas{}=0 | \theta) = 1 - \alpha \frac{ p_{\MeasuredCISet}(\theta) }{ p(\theta) } \leq \epsilon \\
        \iff \alpha \geq (1-\epsilon) \frac{p(\theta)}{p_{\MeasuredCISet}(\theta)}.
    \end{align}
\end{proof}

\subsection{Derivation of \eqref{eq:saddlepoint:approx_ftrl}} \label{app:proof:saddlepoint:approx_ftrl}

Using a quadratic proximal regularizer
$\psi_t(\pi) = \frac{t}{2 \eta} \| \pi - \pi_t \|^2$ and linearized losses $J(\pi) \approx J(\pi_t) + \nabla J(\pi_t)^\top (\pi - \pi_t)$, then taking the gradient and setting it to zero, we obtain
\begin{align}
    -\frac{t}{\eta} ( \pi - \pi_t ) + \sum_{i=1}^{t} \nabla J(\pi_t) &= 0\\
    \implies \pi &= \pi_t \,+\, \eta \frac{1}{t} \sum_{i=1}^{t} \nabla J(\pi_t)
\end{align}

\subsection{Derivation of \eqref{eq:ci_clsfy:mixture_error_pos}} \label{app:proof:mixture_error_pos}
\begin{equation}
\theta \in \MeasuredCISet \implies \pmix{}(\feas{} = 1 | \theta)
= 1 - \Big( 1 - p^\pi(\feas{} = 1 | \theta) \Big) \underbrace{ \Big(1 + \frac{\alpha}{1 - \alpha} \frac{ p_{\MeasuredCISet}(\theta)}{\rho(\theta)} \Big)^{-1} }_{\coloneqq \zeta },
\end{equation}

For $\theta \in \MeasuredCISet$, starting from the definition of $\pmix{}(\feas{} = 1 | \theta)$ in \eqref{eq:ci_clsfy:mixture_cond},
\begin{align}
    \pmix{}(\feas{} = 1 | \theta)
    &= \frac{\alpha p^*(\feas{} = 1 \mid \theta) p_{\MeasuredCISet}(\theta) + (1-\alpha) p^\pi(\feas{}=1 \mid \theta) \rho(\theta) }
    { \alpha p_{\MeasuredCISet}(\theta) + (1-\alpha) \rho(\theta) }, \\
    &= \frac{\alpha p_{\MeasuredCISet}(\theta) + (1-\alpha) p^\pi(\feas{}=1 \mid \theta) \rho(\theta) }
    { \alpha p_{\MeasuredCISet}(\theta) + (1-\alpha) \rho(\theta) }, \\
\end{align}

\end{proof}

\FloatBarrier
\newpage
\section{Further Analysis of Chain MDP} \label{app:chain_example}

\subsection{(Single) Chain MDP Analysis}
We first define the single-chain MDP of length $H$, which consisting of $H$ states $s_1, \ldots, s_H$, two actions $a_L$ and $a_R$, and initial state $s_1$.
For any state $i$, taking the action $a_L$ terminates the episode and returns a reward of $-1$, while taking the action $a_R$ moves to the next state $s_{i+1}$.
The final state $s_H$ is a terminal state, which returns a reward of $0$ and terminates the episode. We visualize the single-chain MDP in \Cref{fig:app:single_chain}.

\begin{figure}
    \centering
    \includegraphics[width=\linewidth]{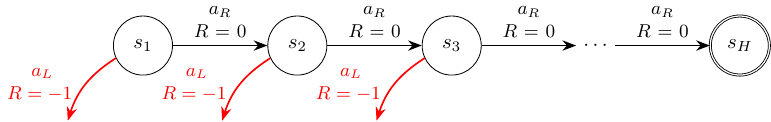}
    \caption{Chain MDP}
    \label{fig:app:single_chain}
\end{figure}

We first analyze the single-chain MDP.
For simplicity, we parameterize the policy by a scalar $\pi \in \mathbb{R}$, where we take action $a_R$ with probability $\pi$ and action $a_L$ with probability $1-\pi$.

The single-trajectory REINFORCE update law is
\begin{equation}
    \nabla_\pi V^\pi(s_1) \approx \sum_t \nabla_\pi \log \pi(a_t | s_t) R_{\geq t}.
\end{equation}
where $a_t, s_t, R_{\geq t}$ are random variables.
Let $T$ denote the (random) time step where the episode terminates by taking $a_L$ if $T < H$, and let $T=H$ denote the episode terminating by reaching the terminal state $s_H$.

We also have $\nabla \log_\pi \pi(a=a_R) = \frac{1}{\phi}$ and $\nabla \log_\pi \pi(a=a_L) = -\frac{1}{1-\phi}$.

Also,
\begin{equation}
    R_{\geq t} = \begin{dcases}
        -\gamma^{T-t}, & t < T, T \not= H, \\
        -1, & t = T, T \not= H, \\
        0, & T = H.
    \end{dcases}
\end{equation}

Note that the entire trajectory is captured by the random-variable $T$. Hence, it is sufficient to compute the gradient conditioned on each value of $T$, then take the expectation over $T$. Let $g(T)$ denote the gradient conditioned on $T$.

For $T < H$, we have
\begin{align}
    g(T)
    &= \sum_{t=1}^{T-1} \nabla_\pi \log \pi(a=a_R) R_{\geq t} + \nabla_\pi \log \pi(a=a_L) R_{\geq T} \\
    &= \sum_{t=1}^{T-1} \frac{1}{\pi} (-\gamma^{T-t}) + (-\frac{1}{1-\pi}) (-1) \\
    &= \frac{-1}{\pi} \frac{\gamma - \gamma^T}{1-\gamma} + \frac{1}{1-\pi}.
\end{align}

If $T = H$, then $g(H) = 0$ since the return is $0$.

Finally, looking at the distribution of $T$, we have
\begin{align}
    P(T=t) &= \pi^{t-1}(1-\pi), \quad 1 \leq t < H, \\
    P(T=H) &= \pi^{H-1}.
\end{align}
Hence,
\begin{align}
    \Eb[g(T)] = \sum_{t=1}^{H-1} P(T=t) g(t),
    \Var[g(T)] = \sum_{t=1}^{H-1} P(T=t) (g(t) - \Eb[g(T)])^2.
\end{align}
Plotting the variance numerically in \Cref{fig:app:reinforce_variance} for $\pi=0.5$ and $\gamma=0.99$, we see that the variance increases with $H$.

\begin{figure}
    \centering
    \includegraphics[width=0.8\linewidth]{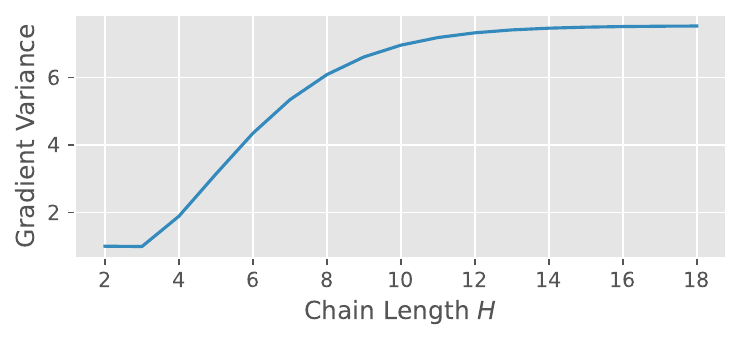}
    \caption{Variance of the single-sample gradient estimator for the chain MDP.}
    \label{fig:app:reinforce_variance}
\end{figure}

\subsection{Multi-Chain MDP Analysis}

We now consider the multi-chain MDP, which consists of
$P$ independent chains of length $H_i$ for $i=1, \dots, P$.
The parameter $\theta \in \{1, \dots, P\}$ controls which chain's initial state to start from. See \Cref{fig:app:multi_chain} for a visualization.

Here, we assume the policy is parametrized by $\pi^i$ for $i=1, \dots, P$, i.e., each chain has a separate policy.
Suppose we sample $\theta$ from the uniform distribution over $\{1, \dots, P\}$.
Let $V(H)$ denote the variance of a single-chain MDP with horizon $H$.
Then, the variance of the resulting multi-sample monte-carlo REINFORCE gradient estimator will be scaled down equally across all $P$ chains, i.e., if each of the $P$ chains gets a budget of $N$ samples, we obtain a variance of $V(H_i) / N$ for the $i$th chain.
Given that larger variances of the gradient estimator theoretically slow down the convergence rate of SGD by reducing the maximum learning rate that can be used \cite{bottou2018optimization}, this means that the policy for the chain with the longest horizon $H_i$ converges the slowest as it has the largest variance. 

Instead, if we sample $\theta$ from a distribution proportional to the gradient variance, i.e.,
\begin{equation}
    p(\theta_i) \propto V(H_i),
\end{equation}
prioritizing chains with larger gradient variances, then the variance will be equal for all chains, with the worst chain converging faster than in the previous case.

\begin{figure}
    \centering
    \includegraphics[width=0.6\linewidth]{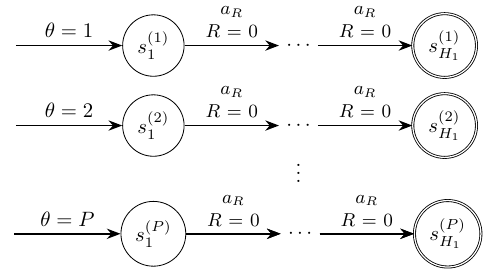}
    \caption{Multi Chain MDP}
    \label{fig:app:multi_chain}
\end{figure}

\newpage
\section{Convergence Guarantees to Saddle Points} \label{app:saddle_point}

We present a proof based on \citet{orabona2019modern} that a combination of best-response for the $\theta$ player and follow-the-regularized-leader for the $\mu$ player results in convergence to the saddle-point under a suitable set of assumptions.
For analysis, we flip the sign on $J$ to better match the convention from the online learning literature \citep{orabona2019modern} and consider the minimization objective.

Consider the following algorithm:

\begin{algorithm}[ht!]
    \caption{Saddle-point finding with best-response and follow-the-regularized-leader}
    \begin{algorithmic}[1]
        \Require{$\mu_1 \in \mathcal{M}$}
        \For{$t = 1 , \dots , T$}  %
            \State{Set $\theta_t \in \argmax_{\theta \in \Theta} J(\mu_t, \theta)$}
            \State{Set $\mu_{t+1} = \psi_t(\mu) + \argmin_{\mu \in \mathcal{M}} \sum_{s=1}^t J(\mu, \theta_s) $}
        \EndFor
        \Return $\theta$
\end{algorithmic}
\end{algorithm}
where we define the average iterates $\bar{\theta}_T$ and $\bar{\mu}_T$ as
\begin{align}
    \bar{\mu}_T &\coloneqq \frac{1}{T} \sum_{t=1}^T \mu_t, \\
    \bar{\theta}_T &\coloneqq \frac{1}{T} \sum_{t=1}^T \theta_t.
\end{align}
We now make the following assumptions:

\begin{tcolorbox}[ThmFrame]
    \begin{assmp} \label{assmp:saddle}
        $J : \mathcal{M} \times \Theta \to \mathbb{R}$ is convex in the first argument and concave in the second argument.
        Moreover, for any $\theta$, $J(\cdot, \theta)$ is $L$-Lipschitz, i.e.,
        \begin{equation}
            \abs{ J(\mu_1, \theta) - J(\mu_2, \theta) } \leq L \norm{\mu_1 - \mu_2}.
        \end{equation}
    \end{assmp}
\end{tcolorbox}

We then have the following result.

\begin{tcolorbox}[ThmFrame]
\begin{lemma}[Adapted from \citet{orabona2019modern}] \label{thm:saddle_regret}
    Let $J : \mathcal{M} \times \Theta \to \mathbb{R}$ be convex in the first argument and concave in the second argument.
    Then, we have
    \begin{equation} \label{eq:saddle:thm}
        0 \leq \sup_{\theta \in \Theta} J(\bar{\mu}_T, \theta) - \min_{\mu \in \mathcal{M}} J(\mu, \bar{\theta}_T)
        \leq \frac{1}{T} \Regret_T^\mathcal{M}(\mu'_T),
    \end{equation}
    for any $\mu'_T \in \argmin_{\mu \in \mathcal{M}} J(\mu, \bar{\theta})$, where
    \begin{equation}
        \Regret_T^\mathcal{M}(\mu) \coloneqq \sum_{t=1}^T J(\mu_t, \theta_t) - \sum_{t=1}^T J(\mu, \theta_t).
    \end{equation}
\end{lemma}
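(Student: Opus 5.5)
The plan is the standard online-to-batch argument for saddle points from \citet{orabona2019modern}. It uses only the convex--concave structure of $J$ and the fact that $\theta_t$ is a best response to $\mu_t$. The Lipschitz part of \Cref{assmp:saddle} is not needed here; it will only enter later, when $\Regret_T^\mathcal{M}$ is bounded for FTRL. I will prove the lower bound $0 \leq \cdots$ and the upper bound separately.

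\textbf{Lower bound.} For the minimizer $\mu'_T$ of $J(\cdot,\bar\theta_T)$,
\begin{equation*}
\sup_{\theta} J(\bar\mu_T,\theta) \geq J(\bar\mu_T,\bar\theta_T) \geq \min_{\mu} J(\mu,\bar\theta_T).
\end{equation*}
This gives nonnegativity, assuming $\bar\theta_T \in \Theta$, i.e.\ $\Theta$ is convex. Convexity of $\Theta$ is implicit in concavity being meaningful, and I would state it.

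\textbf{Upper bound.} Fix any $\theta \in \Theta$.
\begin{enumerate}
\item Convexity of $J(\cdot,\theta)$ and Jensen give $J(\bar\mu_T,\theta) \leq \frac1T\sum_t J(\mu_t,\theta)$.
\item Best response ($\theta_t \in \argmax_\theta J(\mu_t,\theta)$) gives $J(\mu_t,\theta) \leq J(\mu_t,\theta_t)$. Hence
\begin{equation*}
J(\bar\mu_T,\theta) \leq \frac1T\sum_t J(\mu_t,\theta_t).
\end{equation*}
Since the right side does not depend on $\theta$, taking the supremum gives the same bound for $\sup_\theta J(\bar\mu_T,\theta)$.
\item Insert the comparator $\mu'_T$ via the definition of regret:
\begin{equation*}
\frac1T\sum_t J(\mu_t,\theta_t) = \frac1T\Regret_T^\mathcal{M}(\mu'_T) + \frac1T\sum_t J(\mu'_T,\theta_t).
\end{equation*}
\item Concavity of $J(\mu'_T,\cdot)$ and Jensen give $\frac1T\sum_t J(\mu'_T,\theta_t) \leq J(\mu'_T,\bar\theta_T) = \min_\mu J(\mu,\bar\theta_T)$, by the choice of $\mu'_T$.
\end{enumerate}
Chaining steps 1--4 yields the claimed inequality.

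\textbf{Main obstacle.} There is no serious difficulty; the care is in bookkeeping. First, the supremum over $\theta$ must be taken only after the best-response step has removed the $\theta$-dependence. Second, Jensen must be applied in the correct direction in each argument: convexity in $\mu$ gives an upper bound at $\bar\mu_T$, and concavity in $\theta$ gives an upper bound at $\bar\theta_T$. I would also note the measure-theoretic assumptions needed for $\min_\mu J(\mu,\bar\theta_T)$ to be attained: $\mathcal{M}$ is convex with a minimizer existing. Otherwise $\mu'_T$ can be replaced by a near-minimizer and the argument goes through with an additional $\epsilon$.
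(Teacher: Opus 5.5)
Your proof is correct and follows essentially the same route as the paper's. Both use the regret definition, the best-response property of $\theta_t$, Jensen in each argument, specialization to $\mu = \mu'_T$ followed by a supremum over $\theta$, and the same sandwich $\sup_\theta J(\bar\mu_T,\theta) \geq J(\bar\mu_T,\bar\theta_T) \geq \min_\mu J(\mu,\bar\theta_T)$ for nonnegativity. The only difference is ordering: the paper first proves $J(\bar\mu_T,\theta) - J(\mu,\bar\theta_T) \leq \frac{1}{T}\Regret_T^{\mathcal{M}}(\mu)$ for arbitrary $\mu,\theta$ and then specializes; also, your chain correctly keeps the combined best-response/regret step as an inequality, where the paper's summed display writes an equality.
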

\end{tcolorbox}

\begin{tcolorbox}[ProofFrame]
\begin{proof}
    By definition of $\Regret_T^\mathcal{M}$, for any $\mu \in \mathcal{M}$,
    \begin{equation} \label{eq:saddle:pf:1}
        \frac{1}{T} \sum_{t=1}^T J(\mu_t, \theta_t) - \frac{1}{T} \sum_{t=1}^T J(\mu, \theta_t)
        = \frac{1}{T} \Regret_T^\mathcal{M}(\mu)
    \end{equation}
    Moreover, since $\theta_t$ is the best-response to $\mu_t$, for any $\theta \in \Theta$, we have that $J(\mu_t, \theta) \leq J(\mu_t, \theta_t)$ for any $t$. Hence,
    \begin{equation} \label{eq:saddle:pf:2}
        \frac{1}{T} \sum_{t=1}^T J(\mu_t, \theta) - \frac{1}{T} \sum_{t=1}^T J(\mu_t, \theta_t) \leq 0.
    \end{equation}
    Summing \eqref{eq:saddle:pf:1} and \eqref{eq:saddle:pf:2} then yields
    \begin{equation} \label{eq:saddle:pf:3}
        \frac{1}{T} \sum_{t=1}^T J(\mu_t, \theta) - \frac{1}{T} \sum_{t=1}^T J(\mu, \theta_t) = \frac{1}{T} \Regret_T^\mathcal{M}(\mu).
    \end{equation}
    Now, Jensen's inequality gives us that, for any $\theta \in \Theta$ and $\mu \in \mathcal{M}$,
    \begin{equation}
        J(\bar{\mu}_T, \theta)
        - J(\mu, \bar{\theta}_T)
        \leq
        \frac{1}{T} \sum_{t=1}^T J(\mu_t, \theta)
        - \frac{1}{T} \sum_{t=1}^T J(\mu, \theta_t).
    \end{equation}
    Combining this with \eqref{eq:saddle:pf:3} then gives
    \begin{equation}
        J(\bar{\mu}_T, \theta) - J(\mu, \bar{\theta}_T) \leq \frac{1}{T} \Regret_T^\mathcal{M}(\mu).
    \end{equation}
    Choosing $\mu = \mu'_T \in \argmin_{\mu \in \mathcal{M}} J(\mu, \bar{\theta}_T)$ and taking suprema over $\theta \in \Theta$ then gives \eqref{eq:saddle:thm}, where the non-negativity of the duality gap follows from $\sup_{\theta \in \Theta} J(\mu', \theta) \geq J(\mu', \theta') \geq \inf_{\mu \in \mathcal{M}} J(\mu, \theta')$ for all $\mu'$ and $\theta'$.
\end{proof}
\end{tcolorbox}

Consequently, if we can show that the regret of the $\mu$ player is sublinear, \Cref{thm:saddle_regret} implies that the average iterates $(\bar{\mu}_T, \bar{\theta}_T)$ converge to zero duality gap, i.e., a saddle point of the game.
We show that this holds next in our case where $\mu$ uses a follow-the-regularized-leader strategy. 

\begin{tcolorbox}[ThmFrame]
    \begin{lemma} \label{thm:regret_sublinear}
        Let $\psi : \mathcal{M} \to \mathbb{R}$ be a closed, $m$-strongly convex function, and define the sequence of regularizers $\psi_t$ as
        \begin{equation}
            \psi_t(\mu) = \frac{1}{\eta_{t-1}} \big( \psi(\mu) - \min_z \psi(z) \big),
        \end{equation}
        with the non-increasing sequence $\eta_t$ defined as
        \begin{equation}
            \eta_{t-1} = \frac{\alpha \sqrt{m}}{L \sqrt{t}}, \quad t=1, \dots, T.
        \end{equation}
        Then, the regret for the $\mu$ player is sublinear, i.e.,
        \begin{equation}
            \Regret_T^\mathcal{M}(\mu'_T) = O(\sqrt{T}).
        \end{equation}
    \end{lemma}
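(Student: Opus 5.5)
The plan is to invoke the standard FTRL regret bound with time-varying, non-decreasing regularizers from \citet[Ch.~7]{orabona2019modern}, applied to the linearized losses. Write $\ell_t(\mu) \coloneqq J(\mu, \theta_t)$ with $\theta_t$ the best response. By \Cref{assmp:saddle}, each $\ell_t$ is convex and $L$-Lipschitz. Hence every subgradient $g_t \in \partial \ell_t(\mu_t)$ satisfies $\norm{g_t}_\ast \leq L$, and
\begin{equation}
\ell_t(\mu_t) - \ell_t(\mu) \leq \langle g_t, \mu_t - \mu\rangle .
\end{equation}
So it suffices to bound the regret of FTRL on the linear losses $\langle g_t, \cdot\rangle$.

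Since $\eta_t$ is non-increasing, the regularizers $\psi_t = \frac{1}{\eta_{t-1}}(\psi - \min \psi)$ are non-decreasing in $t$, nonnegative, and $\frac{m}{\eta_{t-1}}$-strongly convex. The FTRL lemma for strongly convex regularizers then gives, for any comparator $u$,
\begin{equation}
\Regret_T^\mathcal{M}(u) \leq \psi_{T+1}(u) - \min_\mu \psi_1(\mu) + \sum_{t=1}^T \frac{\eta_{t-1}}{2m}\norm{g_t}_\ast^2 .
\end{equation}
The second term vanishes by the normalization $\min \psi_1 = 0$. This reduces to
\begin{equation}
\Regret_T^\mathcal{M}(u) \leq \frac{\psi(u) - \min\psi}{\eta_T} + \frac{L^2}{2m}\sum_{t=1}^T \eta_{t-1}.
\end{equation}

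Next, substitute $\eta_{t-1} = \alpha\sqrt{m}/(L\sqrt t)$ and use $\sum_{t\le T} t^{-1/2} \leq 2\sqrt T$. The second term becomes at most $\frac{\alpha L}{\sqrt m}\sqrt T$. The first term is at most
\begin{equation}
\frac{L\sqrt{T+1}}{\alpha\sqrt m}\big(\psi(u)-\min\psi\big).
\end{equation}
We apply this with $u = \mu'_T$. We also need $D \coloneqq \sup_{u} (\psi(u) - \min\psi) < \infty$ over the relevant comparators, e.g.\ a bounded $\mathcal{M}$ or bounded minimizers $\mu'_T$. With that, the regret is at most $\big(D/\alpha + \alpha\big) \frac{L}{\sqrt m}\, O(\sqrt T) = O(\sqrt T)$. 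Choosing $\alpha = \sqrt{D}$ recovers the familiar $2L\sqrt{2DT/m}$-type constant.

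The main obstacle is the FTRL lemma itself, if it must be proved rather than cited. I would prove it via the usual "be-the-leader" decomposition. Writing $F_t(\mu) = \psi_t(\mu) + \sum_{s<t}\langle g_s,\mu\rangle$, the regret equals
\begin{equation}
\psi_{T+1}(u) - \min \psi_1 + \sum_t \big[F_t(\mu_t) - F_{t+1}(\mu_{t+1}) + \langle g_t,\mu_t\rangle\big] + F_{T+1}(\mu_{T+1}) - F_{T+1}(u).
\end{equation}
The last difference is $\le 0$ by optimality of $\mu_{T+1}$. Each bracket is bounded using $\psi_{t+1} \ge \psi_t$ and the $\frac{m}{\eta_{t-1}}$-strong convexity of $F_t$ around its minimizer $\mu_t$, which yields $\frac{\eta_{t-1}}{2m}\norm{g_t}_\ast^2$.

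A secondary subtlety is that the comparator $\mu'_T$ depends on $T$. The bound above holds uniformly over all comparators with $\psi(u) - \min\psi \le D$, so this causes no problem provided boundedness is assumed, and I would state that assumption explicitly.
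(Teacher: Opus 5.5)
Your route is the paper's route. The paper's proof is a one-line appeal to \citet[Corollary~7.7]{orabona2019modern} with $\ell_t(\mu)=J(\mu,\theta_t)$, and you have unpacked that corollary: the regret identity, the $\frac{\eta_{t-1}}{2m}\norm{g_t}_\ast^2$ stability bound from strong convexity and $\psi_{t+1}\ge\psi_t$, then $\sum_{t\le T}t^{-1/2}\le 2\sqrt{T}$. Your remark that $\psi(\mu'_T)-\min\psi$ must stay bounded in $T$, because the comparator moves with $T$, is correct; the paper leaves it implicit.

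One step needs adjusting. The sentence ``it suffices to bound the regret of FTRL on the linear losses $\langle g_t,\cdot\rangle$'' silently changes the algorithm. The $\mu$ player in \Cref{app:saddle_point}, and the paper's citation with $\ell_t=J(\cdot,\theta_t)$, minimize $\psi_t$ plus the cumulative losses $\sum_s J(\mu,\theta_s)$ themselves. Those iterates are not minimizers of your $F_t=\psi_t+\sum_{s<t}\langle g_s,\cdot\rangle$, so your be-the-leader decomposition does not apply to them.

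The repair is to run the same decomposition with $F_t=\psi_t+\sum_{s<t}\ell_s$, which is still $\frac{m}{\eta_{t-1}}$-strongly convex. The identity then holds with $\ell_t(\mu_t)$ in place of $\langle g_t,\mu_t\rangle$. Since $F_{t+1}=F_t+\ell_t+(\psi_{t+1}-\psi_t)$ and $\psi_{t+1}\ge\psi_t$, each bracket satisfies
\[
F_t(\mu_t)-F_{t+1}(\mu_{t+1})+\ell_t(\mu_t)\le -\frac{m}{2\eta_{t-1}}\norm{\mu_{t+1}-\mu_t}^2-\langle g_t,\mu_{t+1}-\mu_t\rangle\le\frac{\eta_{t-1}}{2m}\norm{g_t}_\ast^2 .
\]
The first inequality uses strong convexity of $F_t$ at its minimizer $\mu_t$ together with $\ell_t(\mu_{t+1})\ge\ell_t(\mu_t)+\langle g_t,\mu_{t+1}-\mu_t\rangle$. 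Everything after that in your proposal goes through unchanged. With this change, or with an explicit statement that the $\mu$ player uses linearized losses, the proof is complete.
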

\end{tcolorbox}

\begin{tcolorbox}[ProofFrame]
\begin{proof}
    Under \Cref{assmp:saddle}, the conditions for \cite[Corollary 7.7]{orabona2019modern} are satisfied with $\ell_t(\mu) = J(\mu, \theta_t)$ which yields the result.
\end{proof}
\end{tcolorbox}

Combining these two then gives us the desired convergence result.

\begin{tcolorbox}[ThmFrame]
    \begin{coroll}
        The average iterates $(\bar{\mu}_T, \bar{\theta}_T)$ converge to a saddle point of the game.
    \end{coroll}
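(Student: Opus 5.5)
The corollary follows by chaining \Cref{thm:saddle_regret} with \Cref{thm:regret_sublinear}. Under \Cref{assmp:saddle}, $J$ is convex--concave, so \Cref{thm:saddle_regret} applies to the iterates produced by best-response for $\theta$ and FTRL for $\mu$. For every $T$ it gives
\begin{equation*}
0 \leq \sup_{\theta \in \Theta} J(\bar{\mu}_T, \theta) - \min_{\mu \in \mathcal{M}} J(\mu, \bar{\theta}_T) \leq \frac{1}{T}\Regret_T^{\mathcal{M}}(\mu'_T).
\end{equation*}
With the regularizers $\psi_t$ and step sizes $\eta_{t-1} = \alpha\sqrt{m}/(L\sqrt{t})$ of \Cref{thm:regret_sublinear}, the right-hand side is $O(\sqrt{T})/T = O(1/\sqrt{T})$. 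Hence the duality gap of the average iterates tends to zero.

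The next step converts vanishing duality gap into convergence to a saddle point. For any pair $(\bar\mu,\bar\theta)$ we have the chain
\begin{equation*}
\min_\mu J(\mu,\bar\theta) \leq \sup_\theta \inf_\mu J \leq \inf_\mu \sup_\theta J \leq \sup_\theta J(\bar\mu,\theta).
\end{equation*}
If the gap is at most $\epsilon_T$, then $\bar\mu_T$ is $\epsilon_T$-optimal for the outer minimization and $\bar\theta_T$ is $\epsilon_T$-optimal for the outer maximization, and $J(\bar\mu_T,\bar\theta_T)$ lies within $\epsilon_T$ of the game value. I would state "converges to a saddle point" in this $\epsilon$-saddle-point sense. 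Letting $T\to\infty$ also shows that the minimax and maximin values coincide.

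To upgrade this to convergence of the iterates themselves, I would assume that $\mathcal{M}$ and $\Theta$ are compact convex sets and that $J$ is continuous. By compactness, every limit point $(\mu^\star,\theta^\star)$ of $(\bar\mu_T,\bar\theta_T)$ exists. Lower semicontinuity of $\mu\mapsto\sup_\theta J(\mu,\theta)$ (a supremum of continuous functions) and upper semicontinuity of $\theta\mapsto\min_\mu J(\mu,\theta)$ then pass the zero gap to the limit. At such a limit point the gap is zero, so the inequalities $J(\mu^\star,\theta)\leq J(\mu^\star,\theta^\star)\leq J(\mu,\theta^\star)$ hold, which is exactly the saddle-point property.

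I expect the main obstacle to be checking that \Cref{thm:regret_sublinear} really delivers $O(\sqrt T)$ regret against the comparator $\mu'_T$. That comparator depends on $T$, but the FTRL bound of Orabona's Corollary 7.7 holds uniformly over comparators, and the constant depends on $\psi(\mu'_T)-\min\psi$. I need this quantity bounded uniformly in $T$, which holds if $\mathcal{M}$ is bounded. I would add that as an explicit standing assumption alongside \Cref{assmp:saddle}.
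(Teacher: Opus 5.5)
Your proposal is correct and follows the paper's route: the paper's proof is exactly the chaining of \Cref{thm:regret_sublinear} (sublinear regret for the FTRL player) into \Cref{thm:saddle_regret} (the duality gap of the averages is at most $\Regret_T^{\mathcal{M}}(\mu'_T)/T$), and it stops once the gap is shown to vanish. Your extra steps are sound refinements rather than a different argument: bounding $\psi(\mu'_T)-\min\psi$ uniformly in $T$ via boundedness of $\mathcal{M}$, and using compactness, continuity and the semicontinuity of $\sup_\theta J(\cdot,\theta)$ and $\min_\mu J(\mu,\cdot)$ to show every limit point of $(\bar\mu_T,\bar\theta_T)$ is an exact saddle point, make explicit hypotheses that the paper leaves implicit.
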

\end{tcolorbox}

\begin{tcolorbox}[ProofFrame]
\begin{proof}
    By \Cref{thm:regret_sublinear}, the regret for $\mu$ is sublinear. Invoking \Cref{thm:saddle_regret} then gives us the convergence of the average iterates $(\bar{\mu}_T, \bar{\theta}_T)$ to a saddle point of the game.
\end{proof}
\end{tcolorbox}

\section{Additional Results and Ablations} \label{app:ablations}

In this section, we include additional results and ablation studies that are not in the main paper due to space constraints.

\subsection{Per-Environment Coverage Plots}

While \Cref{fig:task_summary} provides a summary of the performance of each method across all environments, we now present a per-environment breakdown of the performance of each method in \Cref{fig:results:coverage}.
The conclusion remains the same, with \AlgOurs{} yielding significant increases in coverage gain compared to the next best baseline method.
Interestingly, the next best method varies a lot across the different environments: \AlgRARL{} achieves the second-best coverage gain in three out of the five environments, but consistently achieves the worst coverage loss against \AlgDR{}. 

\begin{figure}
    \centering
    \includegraphics[width=\linewidth]{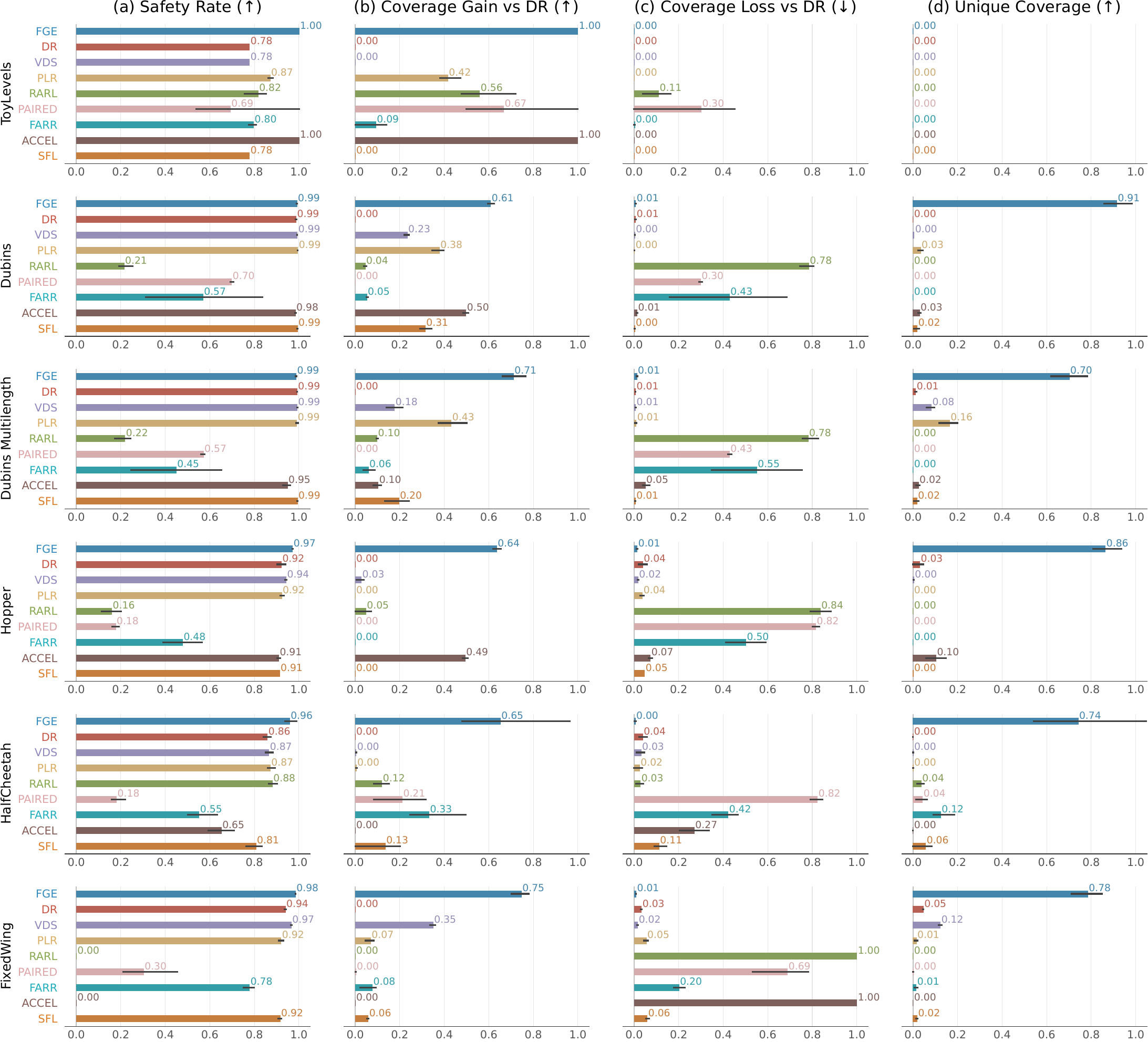}
    \caption{
        \textbf{Coverage Comparison.}
        \textbf{(a)}
        \AlgOurs{} achieves near $100\%$ coverage across all environments, and consistently achieves the highest coverage in the harder environments.
        \textbf{(b)}
        \AlgOurs{} achieves the largest coverage increases relative to \AlgDR{} across all environments, substantially outperforming all other baselines.
        Other methods can expand on the feasible set of \AlgDR{}, but underperform \AlgOurs{} by $40\%$ to $90\%$ on the harder environments.
        \textbf{(c)}
        Most baselines do not lose coverage when compared to \AlgDR{} with the exception of \AlgRARL{} due to its instability during training. \AlgACCEL{} also performs poorly due to the need for each rollout to last for the maximum episode steps, reducing the number of total updates steps taken.
        \textbf{(d)}
        With the exception of ToyLevels, almost all the unique safe parameters occur due to \AlgOurs{}, suggesting that \AlgOurs{} consistently renders parameters feasible that no other method can. 
    }
    \label{fig:results:coverage}
\end{figure}

\subsection{Training Curves} \label{app:training_curves}
We plot the training curves for all environments in \Cref{fig:training_curves}.

\begin{figure}
    \centering
    \includegraphics[width=0.5\linewidth]{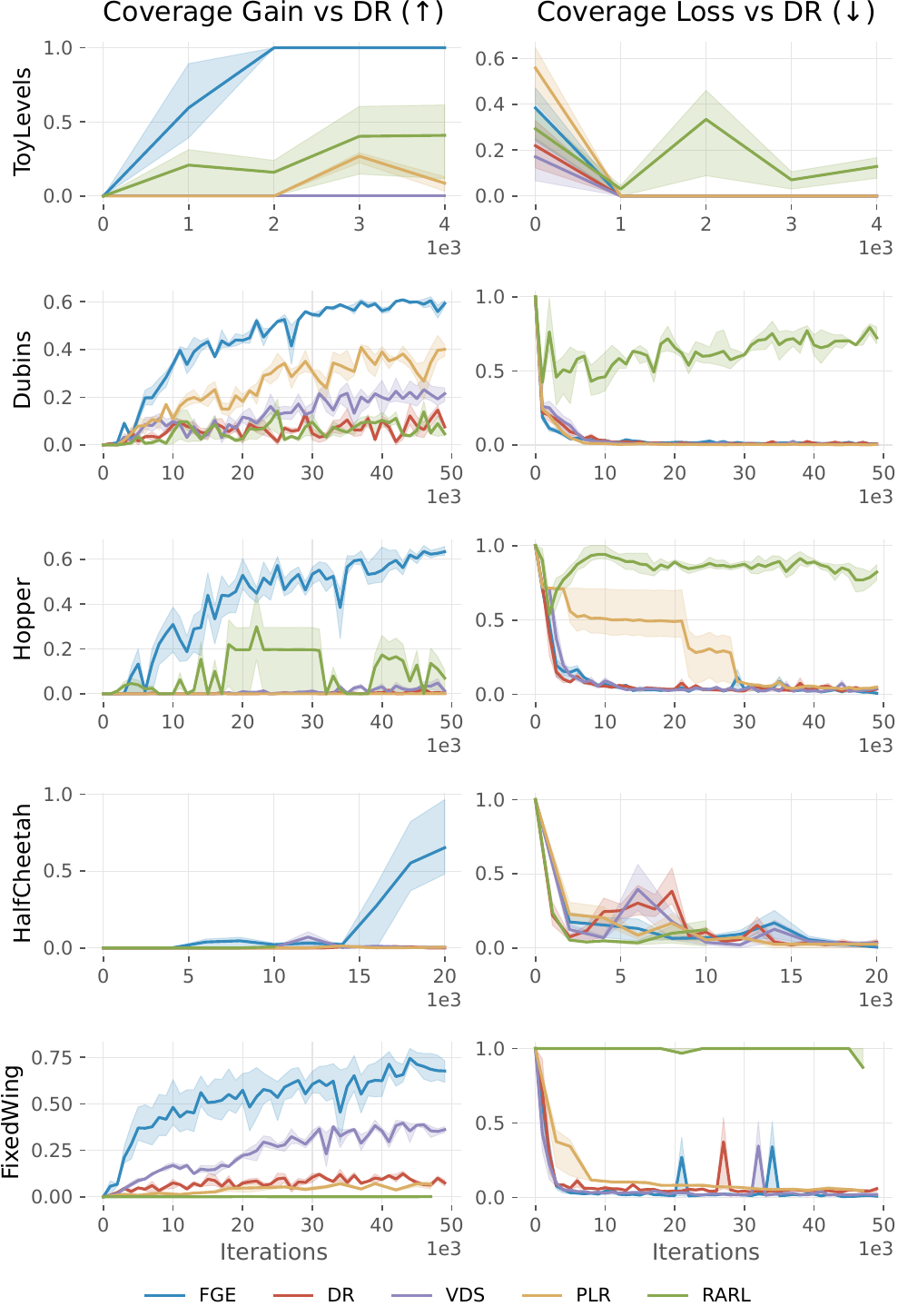}
    \caption{\textbf{Training Curves.}}
    \label{fig:training_curves}
\end{figure}

\subsection{Additional Ablation Studies}
We next perform additional ablation studies to understand the impact of the design choices of \AlgOurs{} that were not covered in the main paper.

\subsubsection{Does a more traditional exploration technique such as \RND{} solve our problem?}

\begin{figure}[H]
    \centering
    \includegraphics[width=0.5\linewidth]{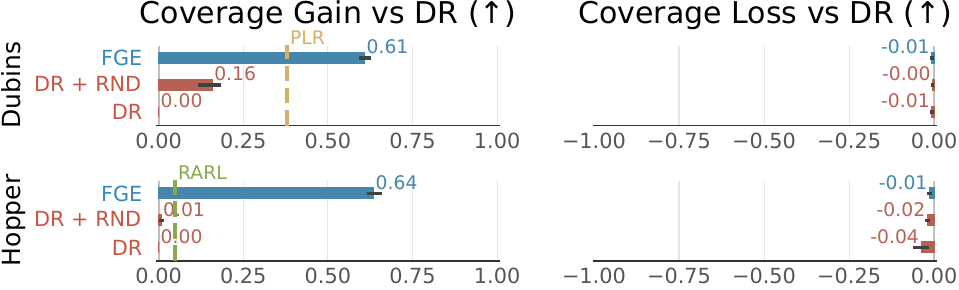}
    \caption{\textbf{\RND{} ablation.}}
    \label{fig:nornd}
\end{figure}

\textbf{No, \RND{} is beneficial but does not replace \AlgOurs{}'s gains.}
In~\Cref{fig:nornd}, we add \RND{}~\cite{burda2018exploration} intrinsic reward to \AlgDR{}.
We observe slight improvements on coverage gain for \AlgDR{} relative to its non-\RND{} counterpart.
However, adding \RND{} to \AlgDR{} neither outperforms \AlgOurs{} nor the next best-performing baseline.
Thus, we conclude that exploration techniques such as \RND{} cannot replace \AlgOurs{} in solving the parameter-robust avoid problem with unknown feasibility.

\subsubsection{Does using a value function over a classifier make a difference?} \label{app:Vl}

\begin{figure}[H]
    \centering
    \includegraphics[width=0.5\linewidth]{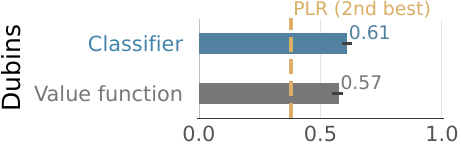}
    \caption{\textbf{Value vs classifier for feasibility on \dubins{}.}}
    \label{fig:Vl_for_ci}
\end{figure}

\textbf{Both classifier and value function performed similarly, but it should still matter in practice.}
We compare using our policy classifier with an RL value function where if $s \in A_\theta$, $V(s) > 0$, else $V(s) = 0$.
We empirically find on \dubins{} that using either method does not matter much.
However, we hypothesize that using the RL value function as a policy classifier should incur a few issues in practice.
For instance, if an unsafe state has a long termination sequence, the cumulative discounted value of that state can become arbitrarily small.
This may lead to numerical imprecision that is expected with any learning-based approximation of the value function, potentially leading to incorrect classifications.

\subsubsection{How sensitive is our classifier to the mix fraction $\alpha$?}

\begin{figure}[H]
    \centering
    \includegraphics[width=1\linewidth]{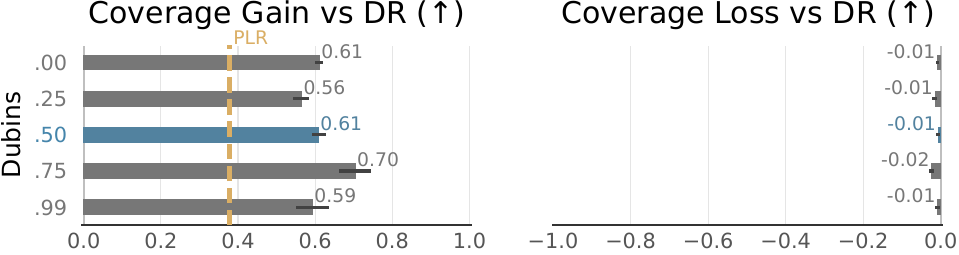}
    \caption{\textbf{$\alpha$ sensitivity ablation.}}
    \label{fig:mixfrac}
\end{figure}

\textbf{It's fairly robust.}
Seeing how sensitive \NSF{}'s performance is to our choice of threshold from~\Cref{subsec:ablation}, we conduct a sensitivity test on the mix fraction $\alpha$ for our classifier.
From~\Cref{fig:mixfrac}, we observe that $\alpha=0.75$ performed the best on \dubins{}.
However, from our experiments we did notice that the performance with respect to $\alpha$ differed by task.
In addition, \AlgOurs{} still outperforms the next best-performing baseline, \AlgPLR{}, for all values of $\alpha$, indicating that our proposed method's gains are robust to these hyperparameters.
For consistency, we default to $\alpha=0.5$ across all experiments.

\subsection{Safe Reinforcement Learning Comparisons}

\AlgOurs{} is orthogonal to safe RL methods since it only defines the initial state distribution.
Therefore, \AlgOurs{} can be easily paired with any on-policy method, including those in safe RL, to improve safety.
We illustrate this by defining a safe RL task on \toylevels{} (\Cref{sec:app:toylevels}) where the policy must minimize cumulative discounted sum of control input magnitudes subject to safety constraints, i.e.,

\begin{align}
    \min_\pi \E\left[\sum_t||\pi(\cdot)||^2_2 \right] \\
    \text{s.t.} \quad \max_\theta J(\pi, \theta) = 0
\end{align}

We pair two safe RL algorithms, PPO-lagrangian (\AlgPPOL{}) and SauteRL(\AlgSaute{}, ~\cite{sootla2022saute}), with \AlgOurs{} and the domain randomization baseline (\AlgDR{}).
Since both algorithms have a hyperparameter (lagrange multiplier for \AlgPPOL{}, unsafe penalty for \AlgSaute{}), we run a hyperparameter scan for each algorithm and plot the reward vs coverage of each run in~\Cref{fig:safe_rl_pareto_toylevels}.

\begin{figure}[H]
    \centering
    \includegraphics[width=1\linewidth]{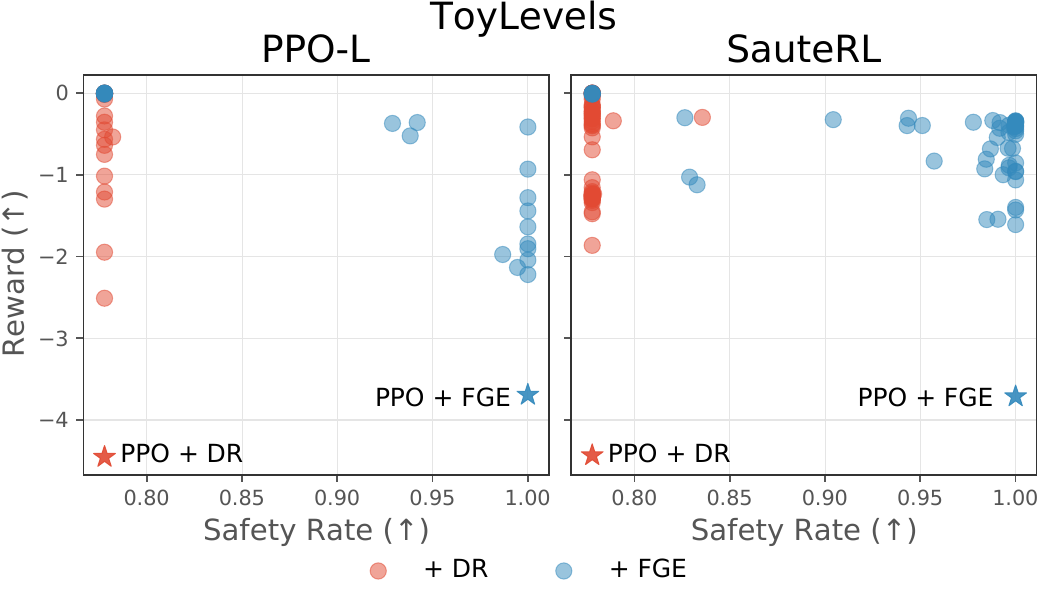}
    \caption{\textbf{Safe RL Pareto Frontier.}}
    \label{fig:safe_rl_pareto_toylevels}
\end{figure}

We observe that \AlgOurs{} greatly expands the pareto frontier compared to \AlgDR{} in both \AlgPPOL{} and \AlgSaute{}.
In addition, all safe RL variants (circles) achieve higher reward than the non-safe RL baselines (stars) as expected, since the non-safe RL methods do not consider the reward.
This shows that \AlgOurs{} can be effectively used to complement existing safe RL methods to improve safety over a wider set of parameters.

\section{Relationship with Unsupervised Environment Design (UED)} \label{app:comparison_to_ued}

Many methods from UED aim to choose parameters that have high regret \citep{dennis2020emergent,lanier_feasible_2022,parker2022evolving,rutherford_no_nodate}, with regret for a parameter $\theta$ defined as
\begin{equation} \label{eq:regret_def}
    \text{Regret}(\pi_\theta, \theta) \coloneqq J(\pi_\theta, \theta) - \max_{\pi^*_\theta} J(\pi^*_\theta, \theta).
\end{equation}
The challenge with \eqref{eq:regret_def} is that the second term involves an optimization problem and is thus intractable.
Consequently, UED methods propose different methods of approximating this term, which we briefly discuss below.

\noindent\textbf{\AlgPAIRED{} \citep{dennis2020emergent}: } In \AlgPAIRED{}, regret is approximated by training an \textit{antagonist} policy that aims to maximize the regret. Specifically, a separate PPO policy is used, with per-timestep rewards replaced by per-episode rewards computed as the difference between the return of $\pi_\theta$ and the antagonist policy. However, one issue is that the antagonist policy will also initially struggle for parameters $\theta$ that are hard but feasible, resulting in both policies being unsafe and resulting in a regret of $0$. This does not yield any learning signal to the antagonist policy and results in a bad approximation of the regret.

\noindent\textbf{\AlgFARR{} \citep{lanier_feasible_2022}: } \AlgFARR{} approximates the regret directly by using PPO to solve for the optimal policy $\pi^*_\theta$ for \textit{each} parameter $\theta$ for which the regret is evaluated on. This is computationally expensive for tasks where converging requires a large number of environment interactions. Moreover, in cases where the parameter $\theta$ is very challenging, it is possible that PPO returns a suboptimal solution, especially if learning to be safe for $\theta$ normally requires some type of curriculum learning or generalization of knowledge from easier parameters. Hence, in most realistic scenarios where there are constraints on computational resources, the quality of the regret approximation directly trades off with the number of parameters $\theta$ that can be evaluated.

\noindent\textbf{\AlgACCEL{} \citep{parker2022evolving}: } \AlgACCEL{} approximates the regret using the positive value loss \citep{rutherford_no_nodate}. However, as noted by \citet{rutherford_no_nodate}, this is unable to reliably identify the frontier of learning and often prioritizes parameters that are already solved. The cause is hypothesized by \citet{rutherford_no_nodate} to stem from inaccurate value estimation.
Moreover, in contrast to typical PPO implementations that only have a short rollout length before updating the policy and value function, every rollout of \AlgACCEL{} has the maximum episode length
\footnote{\url{https://github.com/DramaCow/jaxued/blob/0f8f1284677375b889e4f13a32c9617cd009f8c4/examples/maze_plr.py\#L102}}.
This requirement drastically reduces the number of policy updates that can happen for a fixed budget of environment interactions.

\noindent\textbf{\AlgSFL{} \citep{rutherford_no_nodate}: } Instead of using the regret, \AlgSFL{} computes a \textit{learnability} metric, defined as $p(1-p)$ for the success rate $p$. However, this means that the learnability is set to $0$ for parameters that are currently not successful, which is problematic for difficult parameters that are still feasible. 

Interestingly, \AlgOurs{} can be interpreted from the lens of UED's regret analysis.
Since we tackle reachability problems with deterministic dynamics, we know the value of the problematic second term in \eqref{eq:regret_def} for any parameters $\theta$ that were observed to be safe at \textit{any} point during training.
Consequently, the feasibility classifier we learn in \Cref{sec:classifier} can be viewed as a privileged approximation of the second term in \eqref{eq:regret_def}.
Specifically, we take it to be $0$ for all parameters predicted to be feasible by the classifier $\phi$ and $1$ otherwise.
Hence, \AlgOurs{}'s best-response procedure in \Cref{subsec:saddlepoint} can be seen as a form of sample-based regret maximization over $\theta$.

\section{Feasibility Guided Exploration Pseudocode}\label{app:fge_pseudocode}

We provide the pseudocode for \AlgOurs{} in~\Cref{alg:fge}.

\begin{algorithm}[ht!]
    \caption{\AlgOursFull{} (\AlgOurs{})} \label{alg:fge}
    \begin{algorithmic}[1]
        \State Initialize the rehearsal buffer $\mathcal{D}_{\theta, 0} = \{\}$
        \State Initialize the feasible buffer $\MeasuredCISet = \{\}$
        \For{$t=0, 1, 2, \dots$}
            \State Collect a set of trajectories using the current policy but with a mixture of reset distributions (\Cref{alg:reset_dist})
            \State Update the policy and value function with any on-policy RL algorithm (e.g., PPO)
            \For{each completed trajectory}
                \If{Trajectory is feasible} 
                    \State Add the parameter $\theta$ to $\MeasuredCISet$ if it is not already in $\MeasuredCISet$
                \EndIf
            \EndFor
            \State Train the feasibility classifier $q_\psi(\feas{} | \theta)$ on a mix of $\MeasuredCISet$ and the most recent rollouts
            \State Use the most recent rollouts to update the policy classifier $p_\phi^\pi(\feas{} | \theta)$
            \State Compute the best-response $\theta^* = \argmin_{\theta \in \MeasuredCISet} p_\phi^\pi(\feas{} | \theta)$ 
            \State Update the rehearsal buffer with $\mathcal{D}_{\theta, {t+1}} \gets \mathcal{D}_{\theta, t} \cup \{ \theta^* \}$
        \EndFor
    \end{algorithmic}
\end{algorithm}

\begin{algorithm}[ht!]
    \caption{Reset Distribution for \AlgOurs{}}\label{alg:reset_dist}
    \begin{algorithmic}[1]
        \Require $p_{\text{base}}$ the probability of sampling $\theta$ from the base distribution $p(\theta)$
        \Require $p_{\text{explore}}$, the probability of sampling $\theta$ from the explore distribution
        \Require $p_{\text{rehearse}}$, the probability of sampling $\theta$ uniformly from the rehearsal buffer $\mathcal{D}_{\theta, t}$
        \Require $p_{\text{rehearse}} + p_{\text{base}} + p_{\text{explore}} = 1$
        \State Let $u \sim \mathcal{U}(0, 1)$
        \If{$0 \leq u < p_{\text{base}}$}
            \State \textbf{return} $\theta \sim p(\theta)$
        \ElsIf{$0 \leq u - p_{\text{base}} < p_{\text{explore}}$}
            \State \textbf{return} $\theta \sim p(\theta | q_\psi(\feas{} = 1 | \theta) < \beta )$
        \Else
            \State \textbf{return} $\theta \sim \mathcal{D}_{\theta, t}$ uniformly
        \EndIf
    \end{algorithmic}
\end{algorithm}

\FloatBarrier

\section{Experiment Details} \label{app:experiment_details}

\subsection{Definition of Metrics} \label{app:coverage_metrics}
In this subsection, we describe how the coverage metrics we plot are computed.

Let $\mathfrak{s}_{m}$ be a binary random variable that is $1$ if method $m$ is safe for the (random) parameter $\theta$ and $0$ otherwise.
When computing the metrics, we assume $\theta \sim p(\theta)$ is sampled from the base distribution.
For example, given a random $\theta$, $\mathfrak{s}_{\text{\AlgDR{}}}=1$ if \AlgDR{} is safe for $\theta$ and $0$ otherwise.

Let $\mathfrak{s}^{\cup}_{\text{\AlgDR{}}}$ denote the binary random variable that is $1$ if \AlgDR{} is safe for $\theta$ for any seed.

Also, let $\mathfrak{s}_{\text{any}}$ denote the binary random variable that is $1$ if, given a random $\theta$, \textit{any} method (including all ablations), over any seed, can keep $\theta$ safe.

We compute the \textbf{Safety Rate}, \textbf{Coverage Gain vs DR}, \textbf{Coverage Loss vs DR} and \textbf{Unique Coverage} as conditional probabilities, which we describe in detail next.

\paragraph{Safety Rate} This is the normal safety rate, conditioned on if a parameter is estimated to be feasible with $\mathfrak{s}_{\text{a}} = 1$.
\begin{equation}
    \text{SafetyRate}_{m} = p( \mathfrak{s}_{m} = 1 | \mathfrak{s}_{\text{any}}=1).
\end{equation}
In other words, if there exists at least one method where parameter $\theta$ is safe, then $\theta$ is for sure feasible and we include it in the feasible set.

\paragraph{Coverage Gain vs DR} This coverage metric tracks the increase in coverage over $\theta$ for some method $m$ over \AlgDR{}.
Let $\text{CovGain}_{m}$ denote the coverage gain vs DR for method $m$. Then,
\begin{equation}
    \text{CovGain}_{m} = p( \mathfrak{s}_{m} = 1 | \mathfrak{s}_{\text{any}}=1, \mathfrak{s}^\cup_{\text{\AlgDR{}}}=0 ).
\end{equation}
In other words, we consider the parameters $\theta$ that are safe by method $m$ and are safe by at least one method, but \AlgDR{} is unable to make safe for any seed. Higher coverage gains are better. 
A value of $1$ means that $\mathfrak{s}_m = \mathfrak{s}_{\text{any}}$, i.e., any $\theta$ that is safe by at least one method is also safe by $m$.
A value of $0$ means that $m$ does not render any additional $\theta$ safe compared to \AlgDR{}, union over all seeds.

\paragraph{Coverage Loss vs DR} This coverage metric tracks the loss in coverage over $\theta$ for some method $m$ compared to \AlgDR{}.
Let $\text{CovLoss}_{m}$ denote the coverage loss vs DR for method $m$. Then,
\begin{equation}
    \text{CovLoss}_{m} = p( \mathfrak{s}_{m} = 0 | \mathfrak{s}^\cup_{\text{\AlgDR{}}}=1 ).
\end{equation}
In other words, we consider the parameters $\theta$ that are safe by \AlgDR{} for at least one seed, but is unsafe for method $m$.
A value of $1$ means that $\mathfrak{s}^\cup_{\text{\AlgDR{}}} = 1 \implies \mathfrak{s}_m = 0$, i.e., all $\theta$ that are safe by at least one seed of \AlgDR{} are unsafe under method $m$. Lower coverage losses are better.
A value of $0$ means that any $\theta$ that is safe for \AlgDR{} for at least one seed, method $m$ is also safe.

\paragraph{Unique Coverage} This coverage metric tracks how well method $m$ can render parameters $\theta$ safe that no other method can render safe. For a given plot, let $\mathcal{M}$ denote the set of methods we are plotting.
Let $\text{UniqueCov}_{m}$ denote the unique coverage for method $m$. Then,
\begin{align}
    \overline{\text{UniqueCov}}_{m} &= p\left( \mathfrak{s}_{m} = 1, \; \max_{m' \in \mathcal{M}, m' \not= m} \mathfrak{s}_{m'} = 0 \right), \\
    \text{UniqueCov}_m &= \frac{ \overline{\text{UniqueCov}}_{m} }{ \sum_{m'} \overline{\text{UniqueCov}}_{m'} }
\end{align}
A value of $1$ means that the set of safe parameters for method $m$ is a superset of the safe set for all other methods.
A value of $0$ means that it is a subset of the union of the safe sets for all other methods.

For all plots, we plot the error bars with the $50\%$ percentile error, i.e., the black bar represents the $.25$ and $.75$ quantiles over $3$ seeds.

\subsection{Details on Baseline Methods}  \label{app:experiment_details:baselines}

In this subsection, we provide more details on how we implement each of the baseline methods used in the experiment section.

\textbf{All} methods solve \eqref{eq:saddlepoint:rl_problem} and use the same indicator function reward function defined in \eqref{eq:saddlepoint:rl_problem}, i.e.,
\begin{equation}
    r_\theta(\vx) = -\ind\{ h_\theta(\vx) > 0 \},
\end{equation}
and terminate when unsafe (i.e., $h_\theta(\vx) > 0$), differing only in the reset distribution over the parameters.
Specifically, let $\rho(\theta)$ denote the reset distribution such that the \eqref{eq:saddlepoint:rl_problem} is written as
\begin{equation}
    \max_{\pi}\; \E_{\theta \sim \rho}\Big[ \sum_{k=0}^{T_{\theta, \vx_0}} -\ind\{ h_\theta(\vx_k) > 0 \} \Big].
\end{equation}
Then, each baseline method only changes the choice of $\rho$, which can potentially change over training.

\paragraph{\AlgDR{} \citep{tobin2017domain}}
We take $\rho$ to be the uniform distribution over $\Theta$.

\paragraph{\AlgVDS{} \citep{zhang2020automatic}}
We train an ensemble of RL value functions, and take $\rho$ to be the probability distribution proportional to the standard deviation, or ``disagreement'', among the value function predictions.

\paragraph{\AlgPLR{} \citep{jiang2021prioritized}}
We take the $\theta$ to be the ``level'' in \citep{jiang2021prioritized} and use their scheme for choosing the level as $\rho$.

\paragraph{\AlgRARL{} \citep{pinto2017robust}}
While the adversary in \AlgRARL{} outputs an action each timestep, we construct a variant where the adversary solves a one-step RL problem and only outputs the parameter $\theta$ at the initial timestep.

\paragraph{UED baselines (\AlgPAIRED{} \citep{dennis2020emergent}, \AlgFARR{} \citep{lanier_feasible_2022}, \AlgACCEL{} \citep{parker2022evolving}, \AlgSFL{} \citep{rutherford_no_nodate})}
We follow their setup exactly as our problem maps exactly to the PAIRED problem, with our parameters $\theta$ matching up with the UED ``environment'' or ``level'' $\theta$.

\subsection{Environment Details}  \label{sec:app:toylevels} \label{sec:app:env_details}

We provide more details for each of the environments visualized in~\Cref{fig:task_overview}. 

\begin{figure}
\centering
\begin{minipage}{.48\linewidth}
  \centering
  \includegraphics[width=\linewidth]{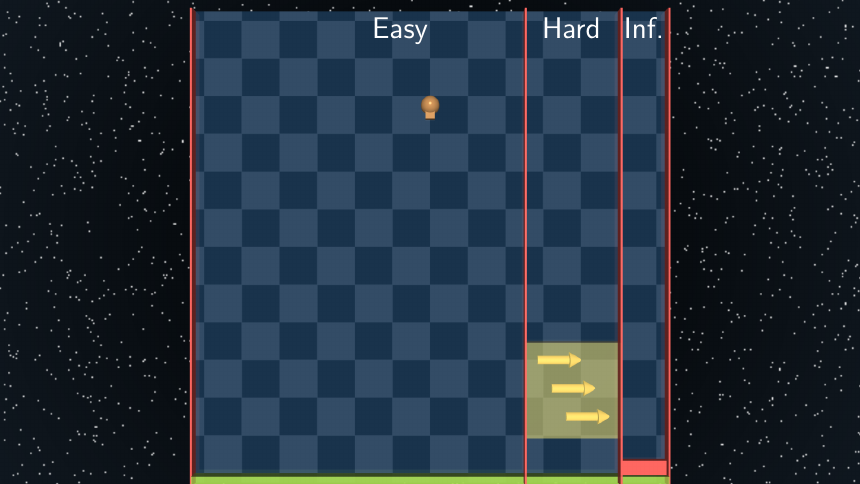}
  \captionof{figure}{\textbf{\toylevels{}.}}
  \label{fig:app:toylevels}
\end{minipage}%
\hfill
\begin{minipage}{.48\linewidth}
  \centering
    \includegraphics[width=\linewidth]{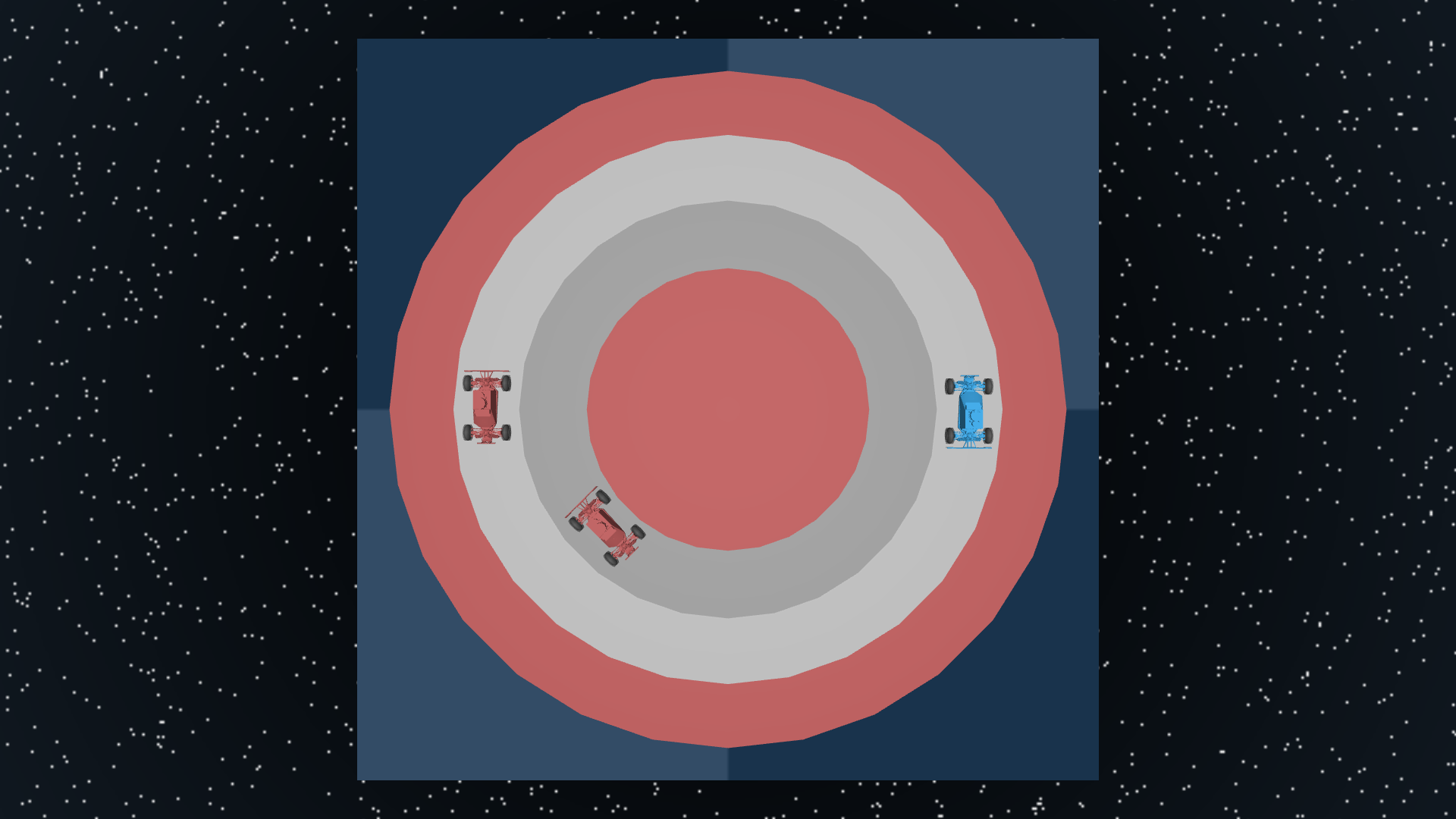}
  \captionof{figure}{\textbf{\dubins{}.}}
  \label{fig:app:dubins}
\end{minipage}
\end{figure}

\textbf{\toylevels{}.} 

In the \toylevels{} environment (\Cref{fig:app:toylevels}), the agent spawns at the top of the map and falls down at a constant rate. 
The state space $\XSet \subseteq \mathbb{R}^{2}$ is composed of the coordinate of the agent.
The control space $\USet \coloneqq \{0, 1, 2\}$ corresponds to a set of three controls: shift left by a constant $c$, shift right by $c$, or do not shift.
The goal of the agent is to reach the green region at the bottom of the map while avoiding the red regions.
We split the spawn states at the top of the map into three levels: Easy, Hard, and Infeasible.
The easy region is spacious, and the agent is likely to reach the bottom even with suboptimal controls. 
The infeasible region is impossible to reach the bottom because there is an obstacle in the way.
The hard region is narrow and contains a disturbance region that pushes the agent toward the right at a constant rate.
We consider this hard because the agent must learn to move to the left of the disturbance, then output a constant sequence of ``shift left'' actions in order to reach the bottom safely. 

We define $\theta \in \mathbb{R}$ as the x-position of the agent's initial position, and
set the base parameter distribution $p(\theta)$ such that the agent resets uniformly in the easy region $90\%$, hard region $0.01\%$, and infeasible region $9.99\%$ of the time. 
Intuitively, the hard region represents a set of initial conditions that are extremely rare and requires more samples to solve, and the learned policy is slightly different from that requires for other initial states.

\textbf{\dubins{}.} 
The \dubins{} environment (\Cref{fig:app:dubins}) requires an ego car to drive around a two-lane circular track. 
There are two other cars driving at constant angular velocities, and the ego car must avoid colliding into either of them.
The static parameter space $\theta \in \mathbb{R^2}$ is defined by the other cars' angular velocities.
We set the base parameter distribution $p(\theta)$ to be uniform in $\big[\frac{\pi}{32}, \frac{11 \pi}{40}\big]^2$.

Given the environment circular track's inner radius $r_{\text{inner}}$ and lane width $w_{\text{lane}}$, the outer radius is computed

\begin{equation}
    \nonumber
    \begin{aligned}
        r_{\text{outer}} = r_{\text{inner}} + 2 w_{\text{lane}} + 2\epsilon
    \end{aligned}
\end{equation}

where $\epsilon$ is a small offset applied between the vehicle and each lane boundary.
Both ego and other cars are modeled as discs with radius $v_{\text{car}}$.
The ego car is initialized on the inner lane on the right side of the track, and other cars are initialized at staggered angular positions across different lanes starting from the left side of the track.

The state space $\XSet \subseteq \mathbb{R}^{4}$ is a vector representing the x-coordinate, y-coordinate, heading, and linear velocity.
The control space $\big[ a_{\text{steer}}, a_{\text{acc}} \big] \in \USet \subseteq \mathbb{R}^2$ is the steering and acceleration.
Angular velocity $\omega$ of each agent is computed

\begin{equation}
    \nonumber
    \begin{aligned}
        \omega = a_{\text{steer}} \cdot \frac{2v}{(r_{\text{inner}} + r_{\text{outer}}) / 2}
    \end{aligned}
\end{equation}

and the ego velocity is updated as

\begin{equation}
    \nonumber
    \begin{aligned}
        v \leftarrow \text{clip} (v + a_{\text{acc}} \cdot dt, v_{\min}, v_{\max})
    \end{aligned}
\end{equation}

\begin{figure}
\centering
\begin{minipage}{.48\linewidth}
  \centering
  \includegraphics[width=\linewidth]{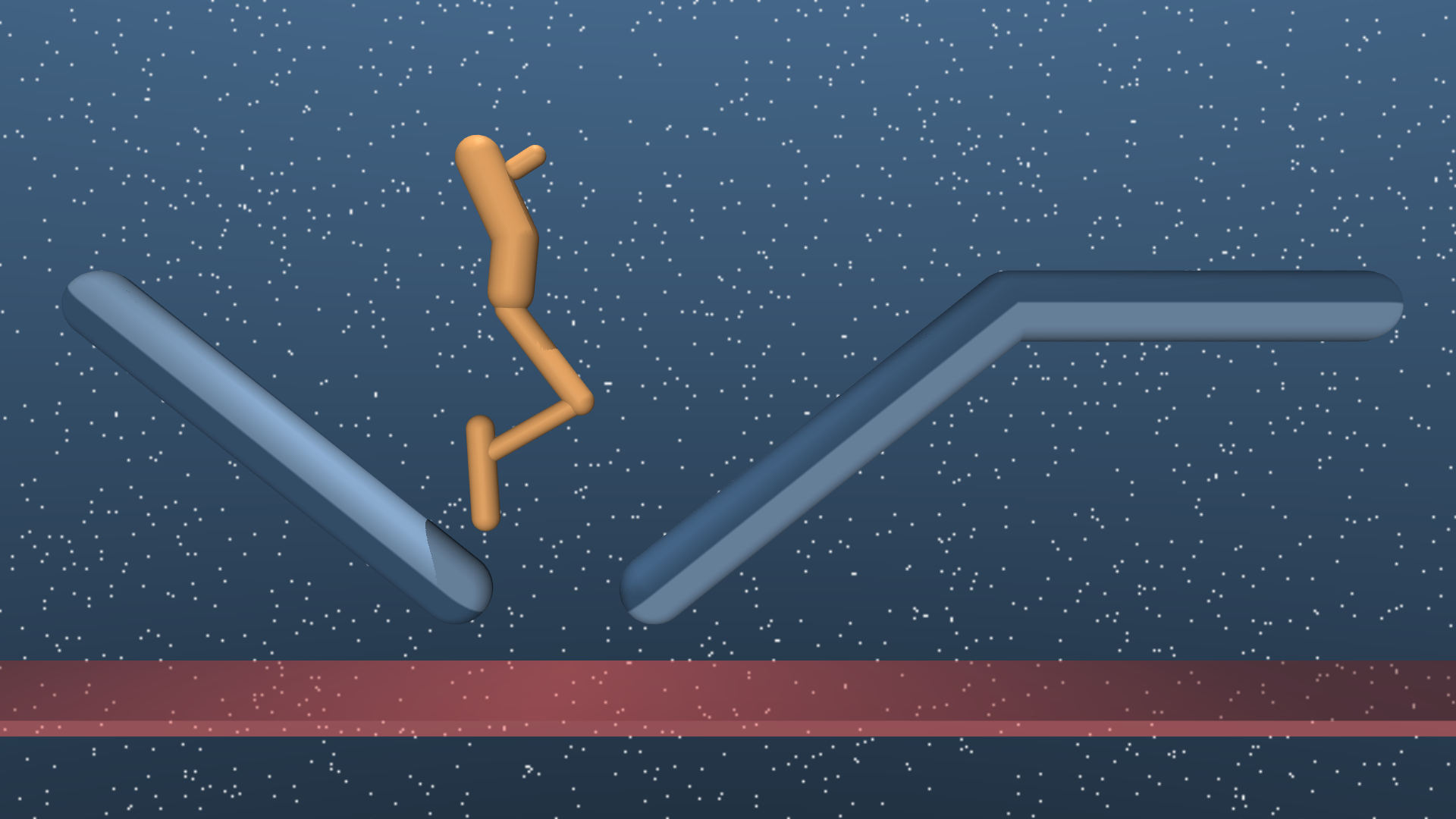}
  \captionof{figure}{\textbf{\hopper{}.}}
  \label{fig:app:hopper}
\end{minipage}%
\hfill
\begin{minipage}{.48\linewidth}
  \centering
    \includegraphics[width=\linewidth]{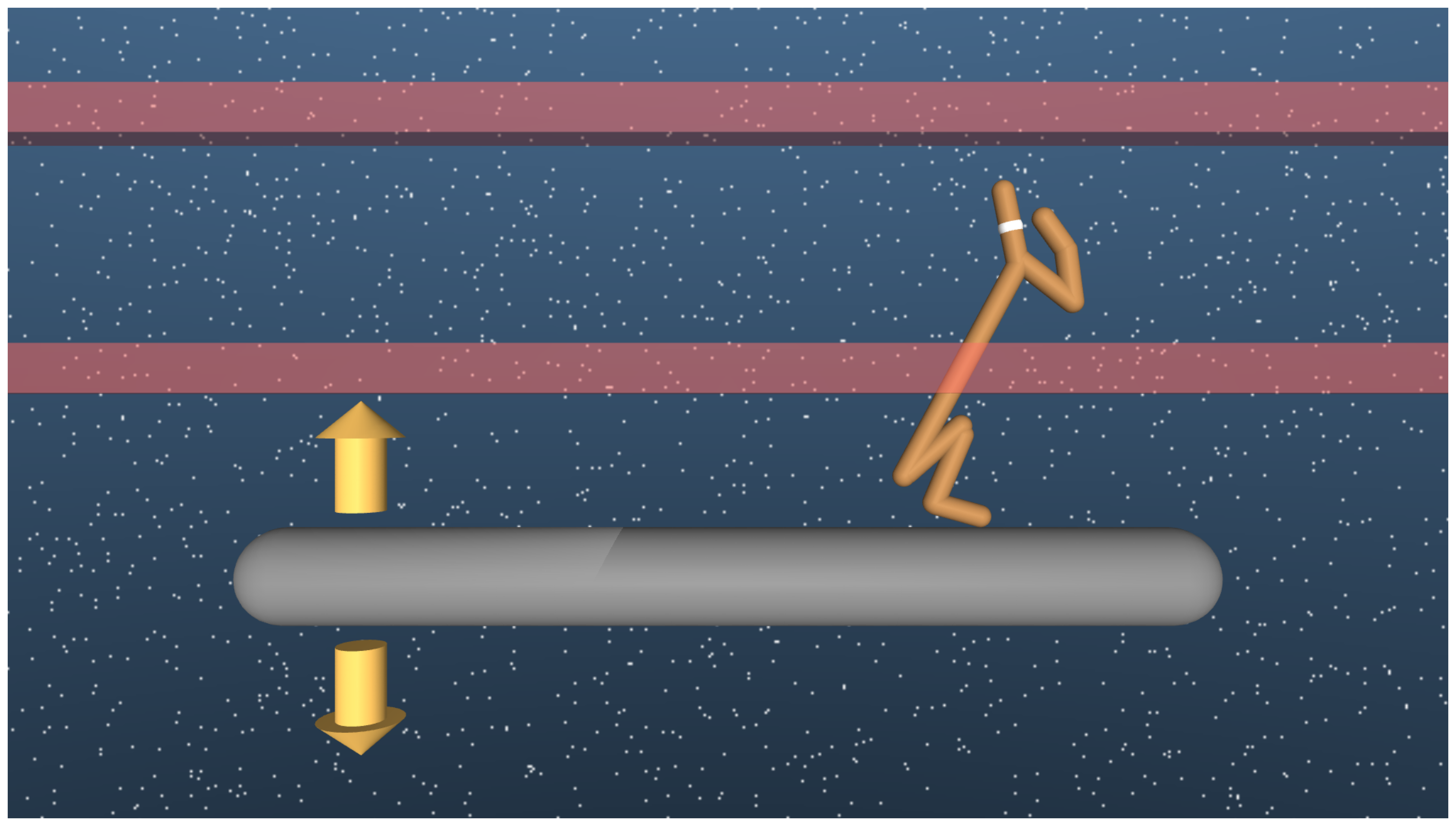}
  \captionof{figure}{\textbf{\cheetah{}.}}
  \label{fig:app:cheetah}
\end{minipage}
\end{figure}

\textbf{\hopper{}.}
The \hopper{} uses the MuJoCo mjx simulator \cite{todorov2012mujoco} and uses the Hopper model from the Deepmind control suite \cite{tassa2018deepmind} (\Cref{fig:app:hopper}).
For this environment, the state space $\XSet \subseteq \mathbb{R}^{14}$ consists of the joint positions and velocities, while the control space $\USet \subseteq \mathbb{R}^4$ controls the torques for each of the four joints.
We increase the gear ratios from $[30, 40, 30, 10]$ to $[30, 50, 50, 100]$ to enable the Hopper robot to perform more acrobatic maneuvers such as backflips.

Note that this is \textit{different} from the hopper in OpenAI gym \cite{brockman2016openai} as it has four controllable joints instead of three and has a nose.

We additionally add two floating platforms (in blue in \Cref{fig:app:hopper}).
Safety is defined as the $z$-position of the hopper staying above a height of $0.9$, as well as all parts of the hopper except the foot (e.g., thigh, pelvis, torso, nose) not colliding with the two floating platforms.

The parameter $\theta \in \mathbb{R}$ controls the initial x-position of the hopper.

\textbf{\cheetah{}.}
In \cheetah{} also uses the MuJoCo mjx simulator \cite{todorov2012mujoco} and uses the Cheetah model from the Deepmind control suite \cite{tassa2018deepmind}.
In this environment, the cheetah must keep its head (visualized with a white circle in its head) between the red boundaries (\Cref{fig:app:cheetah}).
It sits on a floating platform (grey) that moves up and down at regular intervals.
If the platform moves down far enough, the cheetah must stand on its hind legs to remain safe, as shown in the \Cref{fig:app:cheetah}.

For this environment, the state space $\XSet \subseteq \mathbb{R}^{18}$ consists of the joint positions and velocities, while the control space $\USet \subseteq \mathbb{R}^6$ controls the torques for each of the six joints.

The parameter $\theta \in \mathbb{R}$ controls the force applied to the moving platform.

\textbf{\fixedwing{}.}
In \fixedwing{}, the goal is to perform the extended-trail exercise used to practice formation flying for fixed-wing airplanes (\cite{extendedtrail}) (\Cref{fig:app:fixedwing}).
Specifically, there are two aircraft named \textit{lead} and \textit{chase}.
We let the lead aircraft fly in a circular trajectory.
The goal of the chase aircraft, which is to be controlled, is to stay within the \textit{extended-trail cone}, which is defined with the following specifications:
\begin{itemize}
    \item The distance between the airplanes should be within $\SI{500}{\ft}$ and $\SI{1000}{\ft}$.
    \item The chase airplane should be between $\ang{30}$ to $\ang{45}$ degrees of the lead aircraft, measured in the lead aircraft's frame
\end{itemize}
These two specifications result in the extended-trail cone taking the shape of the outer shell of a frustrum.

We use the jax model of the aircraft from \cite{so2023solving,heidlauf2018verification} which is based off of \cite{morelli1995global,stevens2004aircraft}.
The state space $\XSet \subseteq \mathbb{R}^{36}$ composing of the state-space of the two aircraft each with dimension $\mathbb{R}^{16}$ and the guidance controller of the lead aircraft with dimension $\mathbb{R}^{4}$.
The control space $\USet \subseteq \mathbb{R}^3$ consists of the stick commands for the chase aircraft.
Importantly, the throttle for the chase aircraft is set to a higher value ($0.4$) than the lead aircraft ($0.28$), which will result in the chase aircraft overtaking (and exiting the extended-trail cone) if the additional energy is not handled properly.

\begin{figure}
    \centering
    \includegraphics[width=0.8\linewidth]{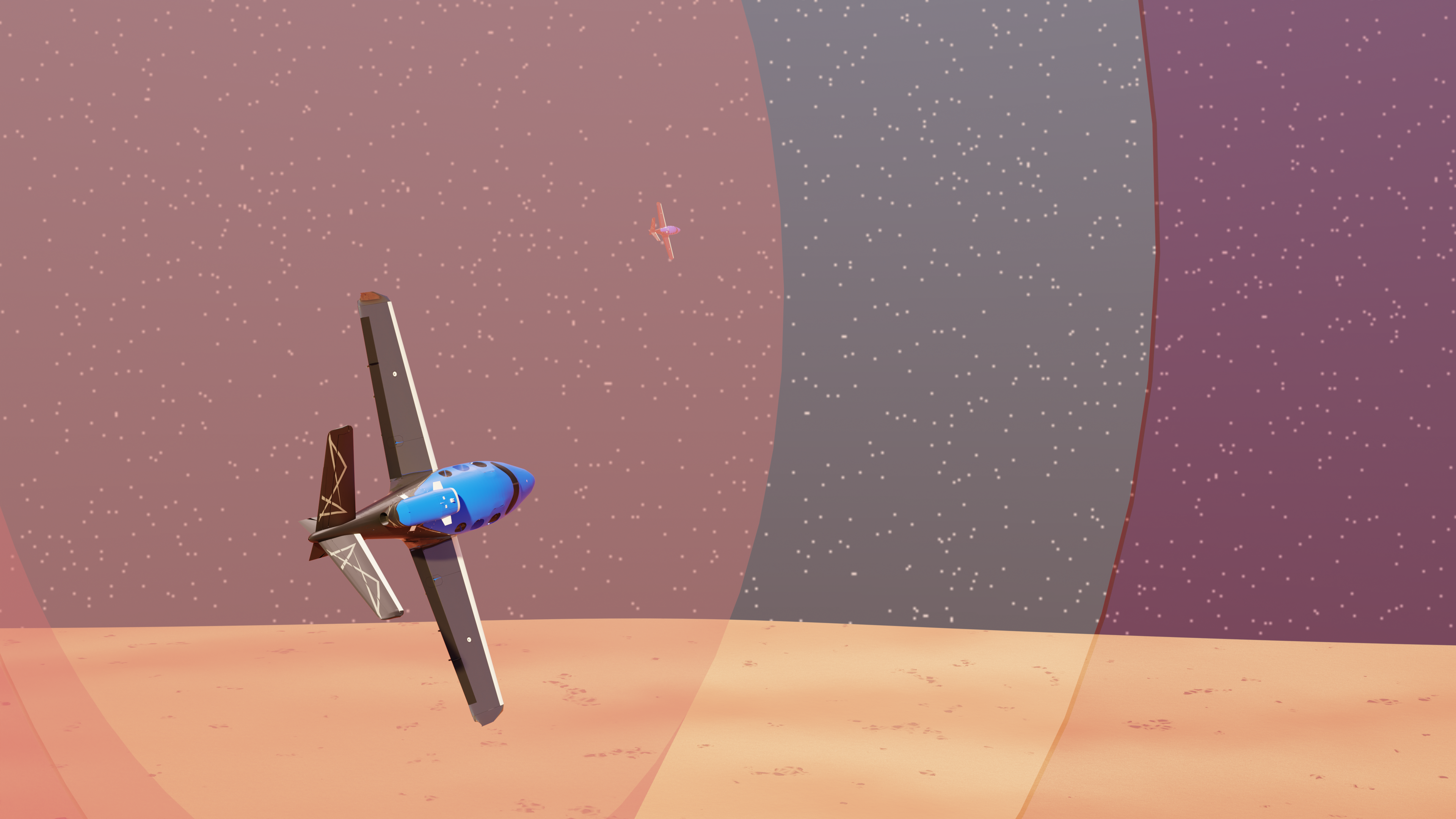}
    \caption{\textbf{\fixedwing{}.}}
    \label{fig:app:fixedwing}
\end{figure}

\textbf{\lander{}.}
In \lander{}, the goal is for the lander in green to avoid colliding with the red circular obstacles, leaving the boundaries of the screen or running out of fuel, and is implemented using Kinetix \citep{matthews2025kinetix}. 
Consequently, the only way for the lander to survive is to land on the blue platform and stay there without falling off.
The parameter $\theta = [x_0, y_0] \in \mathbb{R}^2$ describes the initial position of the lander. 

We use RGB images for the observation space with a dimension of $64\times 64\times 3$ and apply a frame stacking of $3$. Moreover, we include a single scalar observation of the remaining fuel.

\begin{figure}
\centering
\begin{minipage}{.48\linewidth}
  \centering
  \includegraphics[width=0.6\linewidth]{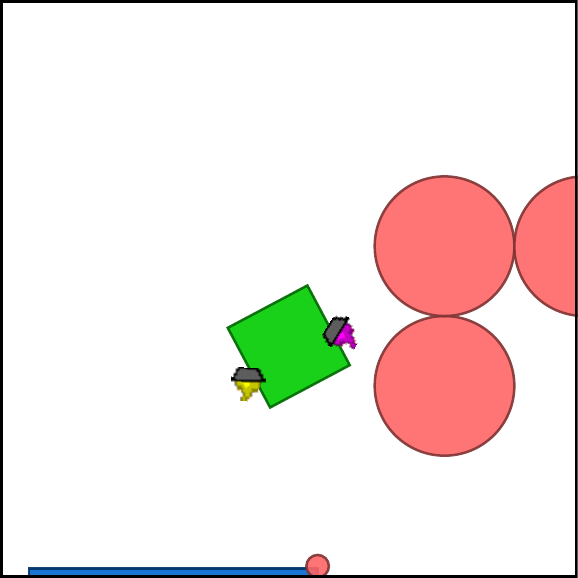}
  \captionof{figure}{\textbf{\lander{}.}}
  \label{fig:app:lander}
\end{minipage}%
\hfill
\begin{minipage}{.48\linewidth}
  \centering
    \includegraphics[width=0.6\linewidth]{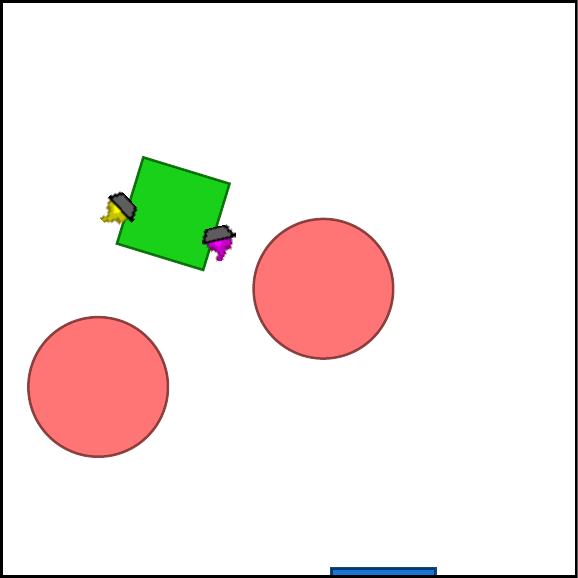}
  \captionof{figure}{\textbf{\landerhard{}.}}
  \label{fig:app:landerhard}
\end{minipage}
\end{figure}

\textbf{\landerhard{}.}
\landerhard{} is a harder version of \lander{}, where the parameter $\theta = [x_0, y_0, \theta_0, \text{platform position}, \text{platform width}, \text{obs0}_x, \text{obs0}_y, \text{obs1}_x, \text{obs1}_y] \in \mathbb{R}^9$ describe the positions of the lander, the blue platform and the red obstacles. Everything else remains the same as in \lander{}.

\subsection{Hyperparameters}

Task-independent hyperparameters can be found in \Cref{tab:common_hyperparam}
and task-dependent hyperparameters can be found in \Cref{tab:hyperparams}.

\begin{table}
\centering
\caption{Common hyperparameters across all tasks.}
\label{tab:common_hyperparam}
\renewcommand{\arraystretch}{1.2}
\begin{tabular}{lll}
    \toprule
    \textbf{Method} & \textbf{Hyperparameter} & \textbf{Value} \\
    \midrule
    \multirow{4}{*}{\textbf{All}} 
        & Neural Network Size & [256, 256] \\
        & Activation Function & $\tanh$ \\
        & NN Optimizer & Adam \\
        & \# envs (per collect) & 1024 \\ 
        & \# rollout steps & 30 \\
    \midrule
    \multirow{4}{*}{\AlgOurs{}}
        & $p_{\text{base}}$ & 0.02 \\
        & $p_{\text{explore}}$ & 0.88 \\
        & $p_{\text{rehearse}}$ & 0.1 \\
        & Classifier mix fraction $\alpha$ & 0.5 \\
        &Classifier threshold $\beta$ & 0.5 \\
    \midrule
    \multirow{1}{*}{\AlgPLR{}} 
        & Prioritization & rank \\
        & Temperature $\beta$ & 0.1 \\
        & Staleness Coefficient $\rho$ & 0.1 \\
    \midrule
    \multirow{1}{*}{\AlgVDS{}} 
        & \# ensemble & 3 \\
    \midrule
    \multirow{1}{*}{\AlgRARL{}} 
        & \# policy iterations per round & 50 \\
        & \# adversary iterations per round & 50 \\
    \midrule
    \multirow{1}{*}{\AlgPAIRED{}} 
        & \# adversary parameters sampled per iteration & 32 \\
    \midrule
    \multirow{1}{*}{\AlgFARR{}} 
        & \# PPO steps for best response computation & 50,000 \\
        & \# Parameters added per iteration & 3 \\
    \midrule
    \multirow{1}{*}{\AlgACCEL{}} 
        & Prioritization & rank \\
        & Temperature $\beta$ & 0.1 \\
        & Staleness Coefficient $\rho$ & 0.1 \\
        & Top K & 4 \\
        & Mutate Scale & 0.04 \\
    \midrule
    \multirow{1}{*}{\AlgSFL{}} 
        & Update Period $T$ & 50 \\
        & Buffer Size $K$ & 100 \\
        & Sample Ratio $\rho$ & 0.9 \\
    \bottomrule
\end{tabular}
\end{table}

\begin{table}
    \centering
    \caption{Task-specific hyperparameters for each method.}
    \label{tab:hyperparams}
    \renewcommand{\arraystretch}{1.2}
    \begin{tabular}{llccccc}
        \toprule
        \textbf{Method} & \textbf{Hyperparameter} & \toylevels{} & \dubins{} & \hopper{} & \cheetah{} & \fixedwing{} \\
        \midrule
        \multirow{7}{*}{\textbf{All}} 
        & Policy learning rate & 4e-4 & 5e-5 & 4e-4 & 4e-4 & 4e-4 \\
        & Value learning rate & 1e-3 & 8e-4 & 8e-4 & 8e-4 & 8e-4 \\
        & \# PPO epochs & 5000 & 50000 & 50000 & 50000 & 50000 \\
        & Truncate timesteps & 199 & 1000 & 1000 & 1000 & 1000 \\
        & Batch size & 30 & 30 & 15 & 30 & 30 \\
        & Entropy coefficient & 1e-3 & 5e-3 & 2e-3 & 1e-4 & 1e-4 \\
        \bottomrule
    \end{tabular}
\end{table}

\subsection{Compute Resources}
All training was performed on a desktop machine with a Intel i7-13700KF CPU and a NVIDIA GeForce RTX 4090 graphics card.

\newpage
\section{Sensitivity of Saddle-Point Finding to Violated Assumptions} \label{app:saddle_point_sensitivity}

The assumptions required for the convergence-guarantees in \Cref{subsec:saddlepoint} to hold, namely a convex-concave objective function and exact best-response oracles, do not hold in our empirical experiments in \Cref{sec:experiments}.
Nevertheless, we still see empirical benefits from using such a scheme.

A rigorous theoretical treatment of saddle-point finding in the nonconvex-nonconcave setting is out of scope for this work. However, as a preliminary investigation into this phenomenon, we investigate empirically how sensitive the convergence is when we break the necessary assumptions.

To do so, we consider the following minimax optimization problem
\begin{equation} \label{eq:saddle_point_sense}
    \min_{\pi \in [-1, 1]} \max_{\theta \in [-1, 1]} J(\pi, \theta),
\end{equation}
where the function $J : \mathbb{R} \to \mathbb{R} \to \mathbb{R}$ is defined by
\begin{align}
  J(\pi, \theta) &= \mathbb{1}( h(\pi, \theta) > 0 ), \\
  h(\pi, \theta) &= -1296 \pi^4 + 36 \pi^2 - 216 \pi^2 \theta + 2592 \pi^3 \theta - 4
\end{align}

In this example, the solution to \eqref{eq:saddle_point_sense} is given by values of $(\pi, \theta)$ inside the red region in \Cref{fig:saddle_point_sensitivity:fn}.
For any value of $\pi$ within this region, all values of $\theta$ give $J(\pi, \theta) = 0$.
Since $J(\pi, \theta) \in \{0, 1\}$, this means that this is the optimal value, i.e., no other choice of $\pi$ results in a lower value of $\max_{\theta \in [-1, 1]} J(\pi, \theta).$.

Note that $J$ is not convex-concave, and thus violates \Cref{assmp:saddle}. Furthermore, we violate the assumption of best-response oracles by approximating the best-response with random samples (as done in \AlgOurs{} for the $\theta$ maximization), and control the approximation error by controlling the number of random samples.

\begin{figure}
    \centering
    \subcaptionbox{The function $J$ to be optimized.
    Note that $J$ is not convex-concave.
    The saddle-point is located within the red rectangle: for any value of $\pi$ inside this region, all values of $\theta$ result in $J(\pi, \theta)=0$, which is the lowest value of $J$.
    \label{fig:saddle_point_sensitivity:fn}}[0.45\linewidth]{%
    \includegraphics[width=\linewidth]{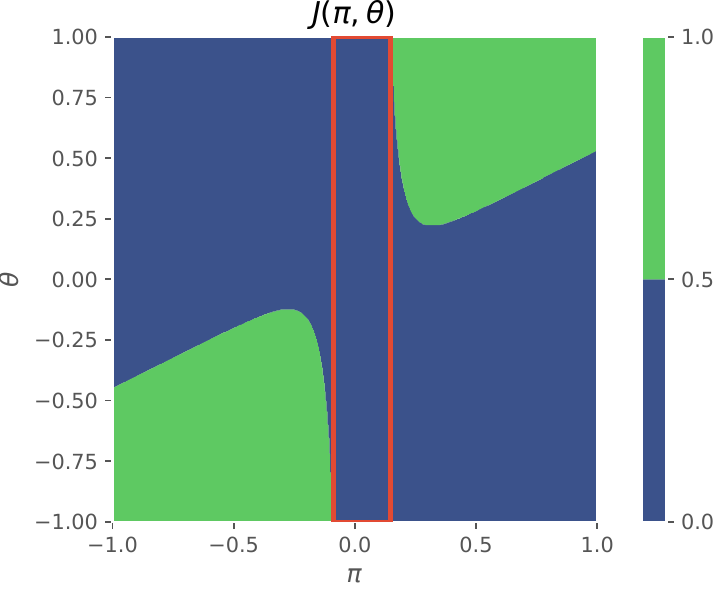}
    }
    \hfill
    \subcaptionbox{Number of iterations to converge as a function of the number of samples used for the sampling-based best-response oracle approximation. Note that the iterates never diverged during the experiments. \label{fig:saddle_point_sensitivity:cvg}}[0.45\linewidth]{%
        \includegraphics[width=\linewidth]{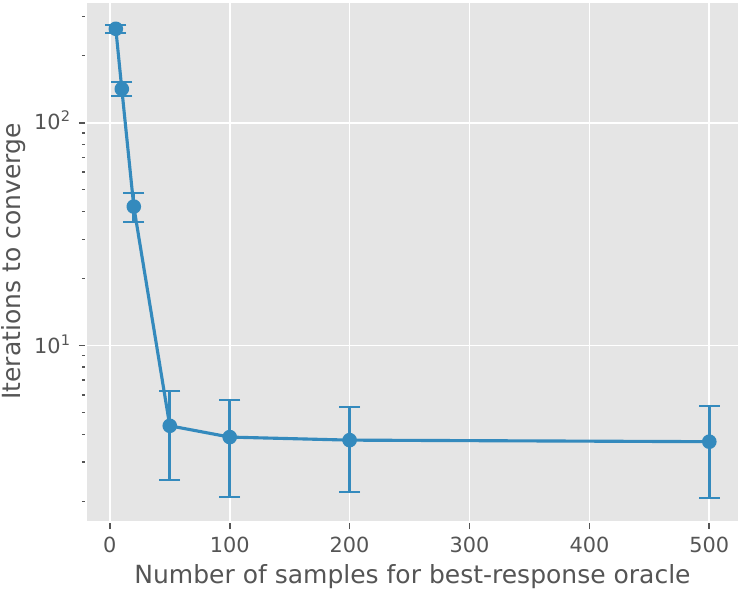}
    }
    \caption{Numerical experiment on a toy example to investigate the sensitivity of the saddle-point finding iterates to violated assumptions.}
    \label{fig:saddle_point_sensitivity}
\end{figure}

Using these approximations, we run the iterates defined by \eqref{eq:saddlepoint:ftrl} and plot the results in \Cref{fig:saddle_point_sensitivity} but without using the regularizer $\psi$.
First, despite the function not being convex-concave, the saddle-point finding algorithm still manages to converge to the saddle-point and never diverges no matter how coarse the best-response oracle approximation is.
Second, as expected, the number of iterations required to converge decreases as the best-response oracle becomes more accurate.

We leave a proper theoretical treatment of whether there do exist relaxed assumptions (e.g., nonconvex-nonconcave and inexact best-reponse oracle) as future work.

\FloatBarrier
\section{Experiment on Bilinear Games} \label{app:bilinear_game}
We investigate whether using a finite-sample approximation of the expectation in \eqref{eq:saddlepoint:approx_ftrl} can empirically still leads to a convergence scheme on a convex-concave problem by testing it on the following bilinear game:
\begin{equation}
    \max_{\pi} \min_{\theta} \pi \cdot \theta.
\end{equation}
We compare gradient descent-ascent, \eqref{eq:saddlepoint:approx_ftrl} with exact expectations and \eqref{eq:saddlepoint:approx_ftrl} with a single-sample approximation of the expectation, and plot the last-iterate and average-iterate errors in $\pi$ in \eqref{fig:bilinear}.

Note that gradient descent-ascent does not converge, as is well known in the literature \citep{de2022optimistic}. Moreover, as guaranteed by \Cref{app:saddle_point}, the average-iterates of \eqref{eq:saddlepoint:approx_ftrl} converge.
While using a single-sample approximation does slow down the convergence rate, it does seem to still converge empirically. Investigating this theoretically is out of scope for the current work.

\begin{figure}
    \centering
    \includegraphics[width=0.5\linewidth]{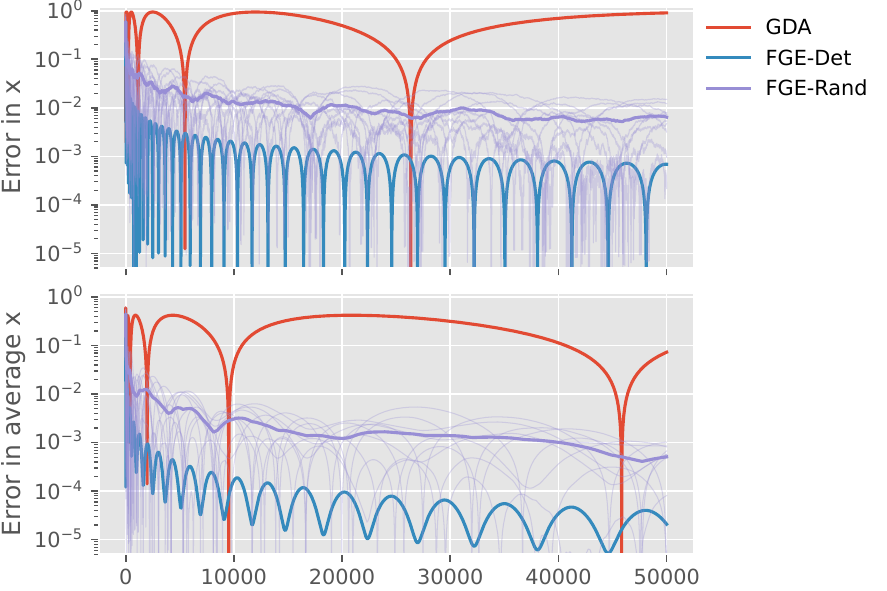}
    \caption{Last-iterate and average-iterate errors on a bilinear game.}
    \label{fig:bilinear}
\end{figure}

\end{document}